\documentclass[11pt]{article} 


\usepackage{amsmath,amsfonts,bm}









\def\eqref#1{equation~\ref{#1}}









\def\1{\bm{1}}










\DeclareMathAlphabet{\mathsfit}{\encodingdefault}{\sfdefault}{m}{sl}
\SetMathAlphabet{\mathsfit}{bold}{\encodingdefault}{\sfdefault}{bx}{n}











\newcommand{\E}{\mathbb{E}}

\newcommand{\R}{\mathbb{R}}



\usepackage{fullpage}
\usepackage{amsthm}
\usepackage{amsmath, amssymb, graphicx, url}
\usepackage[square,sort,comma,numbers]{natbib}
\usepackage{color}
\usepackage{caption}
\usepackage{booktabs}
\usepackage{multirow}
\usepackage{enumitem}

\usepackage{mathtools}

\usepackage{authblk}

\usepackage{subfig}

\usepackage{./MScMac}

\usepackage{hyperref}
\definecolor{refcol}{rgb}{0.1,0,0.6}
\hypersetup{colorlinks}
\hypersetup{linkcolor=refcol, citecolor=refcol}

\usepackage[capitalize,noabbrev]{cleveref}

\theoremstyle{plain}
\newtheorem{theorem}{Theorem}[section]
\newtheorem{proposition}[theorem]{Proposition}
\newtheorem{lemma}[theorem]{Lemma}
\newtheorem{corollary}[theorem]{Corollary}
\theoremstyle{definition}
\newtheorem{definition}[theorem]{Definition}
\newtheorem{assumption}[theorem]{Assumption}
\theoremstyle{remark}
\newtheorem{remark}[theorem]{Remark}
\newtheorem{example}[theorem]{Example}
\allowdisplaybreaks

\title{Almost sure convergence rates of stochastic gradient methods under gradient domination}

\author[$1$,$\ast$]{Simon Weissmann}
\author[$1$,$\ast$]{Sara Klein}
\author[$2$]{Waïss Azizian}
\author[$1$]{Leif Döring}
\date{\ }
\affil[$1$]{\normalsize
Institute of Mathematics, University of Mannheim, 
68138 Mannheim, Germany\\
\texttt{\{simon.weissmann, sara.klein, leif.doering\}@uni-mannheim.de}
\vspace{0.25cm}
}

\affil[$2$]{\normalsize
CNRS, Grenoble INP, Université Grenoble Alpes, 38000 Grenoble, France\\
 \texttt{waiss.azizian@univ-grenoble-alpes.fr} \vspace{0.25cm}
}
\affil[$\ast$]{\normalsize
These authors contributed equally to this work
}

\setlength\parindent{0pt}
\begin{document}

\maketitle

\begin{abstract}
Stochastic gradient methods are among the most important algorithms in training machine learning problems. While classical assumptions such as strong convexity allow a simple analysis they are rarely satisfied in applications. In recent years, global and local gradient domination properties have shown to be a more realistic replacement of strong convexity. They were proved to hold in diverse settings such as (simple) policy gradient methods in reinforcement learning and training of deep neural networks with analytic activation functions. We prove almost sure convergence rates $f(X_n)-f^*\in o\big( n^{-\frac{1}{4\beta-1}+\epsilon}\big)$ of the last iterate for stochastic gradient descent (with and without momentum) under global and local $\beta$-gradient domination assumptions. The almost sure rates get arbitrarily close to recent rates in expectation. Finally, we demonstrate how to apply our results to the training task in both supervised and reinforcement learning.
\end{abstract}

\section{Introduction}
\label{sec:intro}
First-order methods to minimize an objective function $f$ have played a central role in the success of machine learning. This is accompanied by a growing interest in convergence statements particularly for stochastic gradient methods in different settings. To ensure convergence to the global optimum some kind of convexity assumption on the objective function is required. Especially in machine learning problems the standard (strong) convexity assumption is nearly never fulfilled. However, it is well known that achieving convergence towards global optima is still possible under a weaker assumption, namely under the gradient domination property, often referred to as Polyak-\L ojasiewicz  (PL)-inequality \citep{POLYAK1963}. 
Also in reinforcement learning, multiple results have shown that the objective function for policy gradient methods, under specific parametrizations, fulfills a weak type of gradient domination and therefore provably achieve convergence towards the global optimum \citep{mei2020b, mei2021non-uniformPL, fatkhullin2023, klein2023stationarity}.  
Improving the understanding of rates and optimal step size choices for stochastic first order methods is of significant interest for the machine learning and reinforcement learning community. Many classical results identify convergence rates 
for the expected error 
$\E[f(X_n)-f^*]$. 
In the present article, we focus on almost sure convergence rates for the error $f(X_n)-f^*$ in stochastic gradient schemes under weak gradient domination. The contribution of this work is as follows:
\begin{enumerate} 
\item[(i)] Under global gradient domination with parameter $\beta$ (on the entire function domain), we prove that the last iterate of stochastic gradient descent (SGD) and stochastic heavy ball (SHB) converge almost surely and in expectation towards the global optimum with rate arbitrarily close to $o(n^{-\frac{1}{4\beta-1}})$. The almost sure and expectation rates of convergence that we obtain depend on the gradient domination parameter $\beta$ and are the same for both algorithms and convergence types. For SGD this rate is arbitrarily close to the tight upper bound known in expectation \citep{fatkhullin2022}, while the almost sure convergence rate is new for the (weak) gradient domination assumption (see \cref{thm:global_convergencerate_SGD_beta_allgemein} and discussion afterwards). To the best of our knowledge for SHB this is the first convergence result towards global optima under (weak) gradient domination, for both almost sure convergence and convergence in expectation (see \cref{thm:SHB-global}).

\item[(ii)] We consider the case where the gradient domination property holds only locally, either around stationary points or around global minima. We provide the first local convergence rates under these settings: we prove that SGD remains within the good local region with high probability and, conditioned on this event, we obtain converges rates almost surely and in expectation (see \cref{thm:conv-SGD-local-PL-xast} and \cref{thm:conv-SGD-local-PL-fast}). 

\item[(iii)] Our local setting covers generic classes of functions. In particular, we demonstrate that it encompasses the training task of deep neural networks with analytic activation functions in supervised learning. Our result illustrates that the iterates of SGD are likely to become trapped in areas of local minima when the step size is small. We verify under mild conditions, that SGD converges to local minima with given convergence speed (see Corollary~\ref{cor:DNN}).

\item[(iv)] Finally, we derive novel convergence results for policy gradient methods in reinforcement learning. We show that local gradient domination holds around the global optimum for the softmax parametrization resulting in the first local convergence rate for stochastic policy gradient with arbitrary batch-size (see Corollary~\ref{cor:discounted-convergence}).

\end{enumerate}

We summarize the contributions of this paper in~\cref{tab:my-table}. These findings are also illustrated in a numerical toy experiment in \cref{app:numerics}, where we have implemented SGD and SHB for monomials with increasing degree. 
\begin{table*}[htb!]
\caption{Summary of known and new results. Table presents convergence rates for tuned step size ($\epsilon>0$ arbitrarily small). Dom.: gradient domination holds locally or globally; local*: additional assumption on $\gamma_1$ required and results holds only locally. a.s.: almost surely; $\mathbb E$: in expectation. Ref.: for some cited results minor adjustments are necessary.}
\label{tab:my-table}
\vskip 0.15in
\begin{center}
\begin{tabular}{@{}c @{}c @{}c @{} l  @{} c c@{} l@{}}
\toprule
$\mathbf{\beta}    $                    & \textbf{Step size}                                                                                                                 & \textbf{Rate}                                                                  & \textbf{Dom. }                                 & \textbf{Algo.}                 & \textbf{Conv. }       & \textbf{Ref.}       \\ \midrule
\multirow{12}{*}{$\frac 12$}    & \multirow{12}{*}{\begin{tabular}[c]{@{}l@{}}$\Theta\left( n^{-1+\epsilon}\right)$\end{tabular}}                    & \multirow{12}{*}{$o\left( n^{-1+\epsilon}\right)$}                   & \multirow{8}{*}{global}     & \multirow{4}{*}{SGD} & \multirow{2}{*}{a.s.}        & \Cref{thm:global_convergencerate_SGD_beta_allgemein} (i); \\ 
& & & & & & \citet[Thm. 1]{LY2022}    \\ \cline{6-7}
                               &                                                                                                                           &                                                                       &                         &                                            & \multirow{2}{*}{$\mathbb E$}  & \Cref{thm:global_convergencerate_SGD_beta_allgemein} (ii); \\
                               
& & & & & & \citet[Thm. 3]{khaled2022better}  \\\cline{5-7}
                               &                                                                                                                           &                                                                       &                         &                        \multirow{4}{*}{SHB} & \multirow{2}{*}{a.s.}          &  \Cref{thm:SHB-global} (i); \\
                             
& & & & & & \citet[Thm. 2]{LY2022}    \\\cline{6-7}
                               &                                                                                                                           &                                                                       &                         &                        &                      \multirow{2}{*}{$\mathbb E$} & \Cref{thm:SHB-global} (ii);\\ 
& & & & & & \citet[Thm. 4.3]{LIANG2023830}   \\ \cline{4-7}
                               &                                                                                                                           &                                                                       & \multirow{4}{*}{local*}   & \multirow{4}{*}{SGD}                 & \multirow{2}{*}{a.s.}         & \Cref{thm:conv-SGD-local-PL-xast} (ii); \\  & & & & & &  \Cref{thm:conv-SGD-local-PL-fast} (ii) \\\cline{6-7}
                               &                                                                                                                           &                                                                       &                         &                        &                    \multirow{2}{*}{$\mathbb E$} & \Cref{thm:conv-SGD-local-PL-xast} (iii); \Cref{thm:conv-SGD-local-PL-fast} (iii);\\
                               
& & & & & & \citet[Thm. 4]{mertikopoulos2020sure} \\
\hline
\multirow{12}{*}{$(\frac12,1]$} & \multirow{12}{*}{\begin{tabular}[c]{@{}l@{}}$\Theta \left( n^{-\frac{2\beta}{4\beta-1}}\right)$\end{tabular}} & \multirow{12}{*}{$ o\left( n^{-\frac{1}{4\beta-1}+\epsilon}\right)$} & \multirow{6}{*}{global}     & \multirow{4}{*}{SGD} & \multirow{2}{*}{a.s.}       & \multirow{2}{*}{\Cref{thm:global_convergencerate_SGD_beta_allgemein} (i)}    \\ \\\cline{6-7}
                               &                                                                                                                           &                                                                       &                         &                        &                      \multirow{2}{*}{$\mathbb E$} & \Cref{thm:global_convergencerate_SGD_beta_allgemein} (ii); \\
                               
& & & & & & \citet[Cor. 1]{fatkhullin2022} \\\cline{5-7}
                               &                                                                                                                           &                                                                       &                         &                        \multirow{4}{*}{SHB} & \multirow{2}{*}{a.s.}         & \multirow{2}{*}{\Cref{thm:SHB-global} (i)}    \\ \\ \cline{6-7}
                               &                                                                                                                           &                                                                       &                         &                        &           \multirow{2}{*}{$\mathbb E$} & \multirow{2}{*}{\Cref{thm:SHB-global} (ii)}   \\ \\ \cline{4-7}
                               &                                                                                                                           &                                                                       & \multirow{4}{*}{local*}  & \multirow{4}{*}{SGD} & \multirow{2}{*}{a.s.}        & \Cref{thm:conv-SGD-local-PL-xast} (ii);\\  & & & & & & \Cref{thm:conv-SGD-local-PL-fast} (ii)   \\ \cline{6-7}
                               &                                                                                                                           &                                                                       &                         &                        &           \multirow{2}{*}{$\mathbb E$} & \Cref{thm:conv-SGD-local-PL-xast} (iii); \\  & & & & & &  \Cref{thm:conv-SGD-local-PL-fast} (iii) \\ \bottomrule
\end{tabular}
\end{center}
\vskip -0.1in
\end{table*}

\subsection{Literature Review and Classification of our Contribution}\label{subsec:literature}
The roots of stochastic gradient methods trace back to \citet{monro51}. Since then, various variants of SGD have been established as fundamental algorithms for optimizing complex models in the realm of machine learning. We refer to \citet{bottou18} and \citet{garrigos2024handbookconvergencetheoremsstochastic} for a detailed overview. 

We start the review with the literature deriving convergence rates in expectation for SGD. Under the assumptions of smoothness and (strong) convexity \citet{poliak1987introduction,moulines11,Nguyen2018,wang2021convergence,liu23} studied convergence rates towards global optima. Moreover, many articles additionally analyze the non-convex case and prove convergence rates for the gradient norm towards zero \citep{ghadimi-lan13,li2023,liu23,nguyen2023}. 

Notably, several other results regarding convergence of SGD towards global optima have been established under the gradient domination setting \citep{10.1007/978-3-319-46128-1_50}. \citet{Bassily18,Liu2020LossLA} demonstrate exponential convergence rates in expectation in the overparameterized setting under strong gradient domination.
See also \citet{madden2021highprobability}, where high-probability convergence bounds are shown
using the (strong) gradient domination property. 
Further high-probability bounds on the approximation error are provided in \citet{pmlr-v162-scaman22a} under a generalized gradient domination property, the so-called Separable-\L ojasiewicz assumption, fulfilled by smooth neural networks. In \citet{Lei2020} also strong gradient domination is assumed, where the smoothness assumption is weakened through $\alpha$-Hölder continuity, achieving a rate of $O(\frac{1}{n^\alpha})$ in expectation. The \cref{eq:ABC-condition} condition is introduced in \citet{khaled2022better} and where $O(\frac1n)$ convergence is shown under strong gradient domination. Furthermore, \citet{fatkhullin2022} and \citet{pmlr-v134-fontaine21a} consider generalizations of gradient domination that include our definition as a special case. They derive convergence rates in expectation which we encompass with our result and extend to almost sure convergence (see also the discussion behind \cref{thm:global_convergencerate_SGD_beta_allgemein}). Finally, \citet{masiha2024complexityminimizingprojectedgradientdominatedfunctions} established both upper and lower bounds for projected gradient descent under gradient domination.

All the results mentioned so far consider convergence in expectation or high-probability bounds, although originally, motivated by \citet{RS1971}, research commenced with the quest for almost sure convergence rates for gradient methods.
In recent years, \citet{pmlr-v134-sebbouh21a} and, building upon it, \citet{LY2022} derive almost sure convergence rates towards global optima under convexity for SGD and SHB. In addition, \citet{LY2022} study SHB and Nesterov acceleration under PL.
Returning the attention back to SGD with respect to gradient domination also some almost sure convergence results have been established. 
As an extension to the PL-type gradient domination, in \citet{chouzenoux2023kurdykalojasiewicz} the so-called KL property is assumed, which contains gradient domination as a special case. The authors demonstrate almost sure convergence to a critical point, though without a rate.  
To conclude, to the best of our knowledge the derived almost sure convergence rate under gradient domination in \cref{thm:global_convergencerate_SGD_beta_allgemein} is novel.

Next, we want to provide further insights to the literature regarding SHB. In the realm of momentum methods, Polyak's Heavy-Ball Method (HBM) \citep{polyak1964HB} and Nesterov's accelerated gradient method \citep{nesterov1983method} stand out as a foundational contribution. The authors of \citet{gadat16} provide a detailed description of the stochastic formulation of HBM and establish almost sure convergence but without giving a rate. In \citet{yang2016unified,orvieto2020role,Yan18,mai2020convergence,zhou20} convergence rates in expectation are shown in (strongly) convex and non-convex settings, where the non-convex analysis covers convergence of the norm of the gradient. 
  Convergence of momentum methods under the strong gradient domination property and linear convergence due to an overparametrized machine learning setting is shown in \citet{gess2023convergence}. \citet{LIANG2023830} determine $O(\frac{1}{n})$ convergence rate for SHB under strong gradient domination. Our main result for SHB presented in Theorem~\ref{thm:SHB-global} describes almost sure convergence and convergence in expectation under global gradient domination. Both result are quantified with a given rate of convergence. 
 
In the following, we aim to differentiate 
our local convergence analysis from existing results in the literature. 
There has been a lot of effort to derive local convergence guarantees for stochastic first order methods. 
In \citet{dereich2021convergence} almost sure convergence of SGD to a stationary point under the local gradient domination (for $x^\ast$) is demonstrated, provided that the process ($X_n$) remains local, albeit without a rate. A local analysis of SGD towards minima without any gradient domination assumption is presented in \citet{fehrman2020}. Instead, a rank assumption is imposed on the Hessian, and mini-batches, along with resampling, are leveraged to ensure convergence to the global optimum with high probability. The resulting rate does not converge to zero and requires an increasing batch size. Finally, under the global Lipschitz assumption on the objective, it can be verified that SGD almost surely converges to a stationary point as demonstrated by \citet{mertikopoulos2020sure}. In the same work the authors derive a local convergence analysis under local strong convexity. Our analysis in Section~\ref{sec:localSGD} builds upon \citet{mertikopoulos2020sure} and generalizes their results to the local gradient domination property. In our analysis we distinguish the cases where the local gradient domination property holds in a neighbourhood of a local minimum or in the neighbourhood of the global optimum respectively. Finally, we would like to acknowledge that related results have been independently obtained in the recent preprint \cite{qiu2024convergencesgdmomentumnonconvex}.

For the application in the training of DNNs, it is worth noting that local convergence of SGD has been analyzed under stronger variants of gradient domination by \cite{Wojtowytsch2023,an2024convergence}. Due to the stronger form of gradient domination, specific sub-classes of DNNs need to be considered to verify these assumptions whereas our result is only constrained to analytic activation functions. Under the machine learning noise conditions in \cite{Wojtowytsch2023}, convergence toward zero loss with high probability is shown, provided that the initial loss is sufficiently small. In contrast, \cite{an2024convergence} demonstrate convergence towards zero loss under initialization in a local (strong) Łojasiewicz region. Indeed, one can construct DNNs satisfying the latter condition \citep{chatterjee2022convergence}. 

For the application in reinforcement learning, recent results showed that choosing the tabular softmax parametrization in policy gradient (PG) algorithms results in objective functions which fulfill a non-uniform gradient domination property \citep{mei2020b,yuan2022general, klein2023stationarity}. While convergence of PG for exact gradients is well understood, convergence rates for stochastic PG are rare and mostly require very large batch sizes \citep{ding2022beyondEG,klein2023stationarity,ding2023local}. It is noteworthy that a similar local analysis for stochastic policy gradient under entropy regularization is presented in \citet{ding2023local}. Their local result is also based on \citet{mertikopoulos2020sure}, but requires an increasing batch size sequence to obtain $O(\frac{1}{n})$-convergence towards the regularized optimum with high probability.
In contrast, we 
consider both the unregularized and entropy regularized setting and observe that one can also achieve convergence arbitrarily close to $o(\frac{1}{n})$ without the need for an increasing batch size. Moreover, the local convergence occurs almost surely on an event with high probability.

\section{Mathematical Background - Optimization under Gradient Domination}\label{sec:mathbackground}

We consider the problem of solving the minimization problem of the form
\begin{equation}\label{eq:objective}
    \min_{x\in\R^d} f(x)\, ,   
\end{equation}
where $f:\R^d \to \R$ denotes the objective function of interest. Throughout this paper we assume that the objective function is bounded from below by $f^\ast= \inf_{x\in\R^d} f(x) >-\infty$ and satisfies the classical L-smoothness assumption (either locally or globally):
\begin{assumption}\label{ass:L-smooth}
 The objective function $f:\R^d\to\R$ is differentiable and the gradient $\nabla f$ is 
 \begin{enumerate}[label=(\roman*)]
 \item\label{ass:L-smooth-global} globally $L$-Lipschitz continuous, i.e.~there exists $L>0$ such that $\|\nabla f(x)-\nabla f(y)\|\le L \|x-y\|$ for all $x,y\in\R^d$.
 \item\label{ass:L-smooth-local} locally $L$-Lipschitz continuous, i.e.~for all $R>0$ there exists $L(R)>0$ such that $\|\nabla f(x)-\nabla f(y)\|\le L(R) \|x-y\|$ for all $x,y\in\R^d$ with $|x|,|y|\le R$.
 \end{enumerate}
\end{assumption}
Using (global) $L$-smoothness, the descent lemma provides the inequality
\begin{equation}\label{eq:descent-lemma}
    f(y) \leq f(x) + \langle \nabla f(x), y-x\rangle + \frac{L}{2} \lVert y-x\rVert^2
\end{equation}
which is a fundamental instrument to analyze first order optimization methods. As a motivation recall the iterative update generated by gradient descent with constant step size $\gamma\le \frac1L$, i.e.
\[x_{n+1} = x_n - \gamma \nabla f(x_n),\quad x_1\in\R^d\,. \]
Applying \cref{eq:descent-lemma} and the iteration scheme yields the iterative descent property 
\[\left[f(x_{n+1}) - f^\ast\right] \le\left[f(x_n) - f^\ast\right] - \frac{\gamma}2 \|\nabla f(x_n)\|^2\,. \]
Under further strong convexity assumption it is classical to show that the gradient descent algorithm converges to a global minimum at a linear rate. 
In order to derive a convergence rate without assuming convexity of $f$ one can use dominating relations of the gradient $\nabla f(x)$ with respect to the optimality gap $f(x)-f^\ast$. In particular, as demonstrated in \cite{10.1007/978-3-319-46128-1_50} it is nowadays well-known that gradient descent converges linearly under the PL-condition which assumes that there exists $c>0$ such that for all $x\in\R^d$ there holds
\begin{equation}\label{eq:strong_PL}
\|\nabla f(x)\| \ge c(f(x)-f^\ast)^\beta 
\end{equation}
with exponent $\beta = 1/2$ \citep{POLYAK1963}. It is worthwhile to emphasize that \cref{eq:strong_PL} is weaker than strong convexity, the classical textbook assumption that fails for many applications. Under the PL-condition the iterative descent property can be written as a recursion:
\[ \left[f(x_{n+1}) - f^\ast\right] \le \left(1-\frac{\gamma c}{2}\right)\left[f(x_n) - f^\ast\right]\,.\] 
In fact, there are many works analyzing (stochastic) first order methods under the weaker \L ojasiewicz condition formulated in (local) areas around stationary points $x^\ast$ and exponents $\beta\in[1/2,1]$ \cite{pmlr-v49-lee16,fatkhullin2022,pmlr-v162-scaman22a,Wilson2019AcceleratingRG}. 
For general $\beta\in[1/2,1]$ the recursive descent property reads as 
\[\left[f(x_{n+1}) - f^\ast\right] \le \left[f(x_n) - f^\ast\right] - \frac{\gamma c}{2}\left[f(x_n) - f^\ast\right]^{2\beta} \]
leading to sub-linear convergence for $\beta>1/2$. 

For the purpose of our analysis of stochastic gradient methods, we collect the following types of global and local gradient domination properties.

\begin{definition}\label{def:PL-defs}
    Let $f:\R^d\to\R$ be continuously differentiable with $f^\ast = \inf_{x\in\R^d}\ f(x) >-\infty$. 
    \begin{enumerate}
        \item We say that $f$ satisfies the global gradient domination property with parameter $\beta\in[\frac12,1]$ if there exists $c>0$ such that for all $x\in\R^d$ it holds true that
        \[\|\nabla f(x)\| \ge c (f(x)-f^\ast)^\beta. \]
        \item Let $x^\ast\in\R^d$ be a stationary point, i.e.~$\nabla f(x^\ast)=0$. We say that $f$ satisfies a local gradient domination property in $x^\ast$ with parameter $\beta_{x^\ast}\in[\frac12,1]$ if 
        a radius $r_{x^\ast}>0$ and a constant $c_{x^\ast}>0$ such that 
        \begin{equation}\label{eq:loj}
        \|\nabla f(x)\| \ge c_{x^\ast} |f(x)-f(x^\ast)|^{\beta_{x^\ast} }
        \end{equation}
        for all $x\in \cB_{r_{x^\ast}}(x^\ast) = \{y\in\R^d : \lVert x^\ast -y\rVert \leq r_{x^\ast}\}.$
        We say that $f$ satisfies a local gradient domination property in $f^\ast$ with parameter $\beta\in[\frac12,1]$ if there exist a radius $r>0$ and a constant $c>0$ such that  
         \[\|\nabla f(x)\| \ge c (f(x)-f^\ast)^\beta \]
         for all $x \in\cB_r^\ast = \{y\in\R^d: f(y)-f^\ast \leq r\}$.
    \end{enumerate}
\end{definition}
\begin{remark}\label{rem:PL-defs}
    If $\beta = \frac{1}{2}$, we will call the gradient domination \textit{strong} since it is implied by strong convexity. In contrast, we call the gradient domination \textit{weak} for $\beta \in(\frac{1}{2},1]$.
    Moreover, note that for the local gradient domination property in $x^\ast$ the parameters $r$ and $c$ may depend on $x^\ast$. Furthermore, we emphasize that for the definition of the local gradient domination in $f^\ast$ we do not require the existence of $x^\ast \in \arg\min_{x\in\R^d} f(x)$. 
\end{remark}

The PL-condition mentioned above is a special case of the general global gradient domination property for $\beta = \frac 12$. In \citet{lojasiewicz1965} 
it has been demonstrated that all analytic functions satisfy the local gradient domination property, emphasizing the particular significance of the local case. Further, it has been proved that all overparametrized neural networks fulfill the local gradient domination property \citep{Liu2020LossLA}. See also \citet{madden2021highprobability,dereich2021convergence,NEURIPS2021_42299f06} and references therein for the application of (strong) gradient domination to (deep) neural networks. 
In \citet{fatkhullin2022,ABRS2010,BST2014,NEURIPS2018_b4568df2} examples of functions are discussed that fulfill the (weak) gradient domination property. For instance, one-dimensional monomials $f(x)=|x|^p$, $p\geq 2$, satisfy the weak global gradient domination property with $\beta = \frac{p-1}{p}$.
We refer to \citet[Appendix A]{fatkhullin2022} for a longer list of globally gradient dominated functions including convex and non-convex functions. 
Notably, in reinforcement learning it is known that the tabular softmax parametrization leads to a parametrized value function that satisfies the so-called "non-uniform" PL-inequality \citep{mei2020b, mei2021non-uniformPL, klein2023stationarity}. In Section~\ref{sec:application-RL} we will show how this non-uniform gradient domination implies local gradient domination in $f^*$. 
This renders our local analysis of stochastic gradient methods specifically applicable in RL. As mentioned earlier, since every analytic function already satisfies local gradient domination, we expect that the local analysis can encompass further parametrizations, such as neural networks.

\subsection{Assumptions on the Stochastic First Order Oracle}
 Let $(\Omega, \cF, \bbP)$ be an underlying probability space. In general, we assume that we can access the exact gradient $\nabla f(x)$ through a stochastic first order oracle $V:\R^d\times M\to\R^d$ defined by
\begin{equation} \label{eq:sfoo}
V(x,m) = \nabla f(x) + Z(x,m), \quad x\in\R^d,\ m\in M\, , 
\end{equation}
where $(M,\mathcal M)$ is a measurable  space,  $Z:\R^d\times M\to\R^d$ is a state dependent $\cB(\R^d)\otimes \mathcal M / \cB(\R^d)$-measurable mapping describing the error to the exact gradient $\nabla f$. The stochastic gradient evaluation is then modelled through $V(x,\zeta)$, where the random variable $\zeta:\Omega\to M$ is independent of the state $x\in\R^d$. We make the following unbiasedness and second moment assumption:

\begin{assumption}\label{ass:ABC}
We assume that for each $x\in\R^d$ it holds that
\[\E[Z(x,\zeta)] := \int_\Omega Z(x,\zeta(\omega)) {\mathrm{d}}\mathbb P(\omega) = 0 \]
and there exist non-negative constants $A,B$ and $C$ such that for all $x\in\R^d$ it holds that
    \begin{equation}\label{eq:ABC-condition}
        \E[\lVert V(x,\zeta) \rVert^2] \leq A(f(x)-f^\ast) + B \lVert \nabla f(x) \rVert^2 + C\, . \tag{ABC}
    \end{equation}
\end{assumption}

It is worth noting that the \cref{eq:ABC-condition} assumption is a generalization of the bounded variance assumption that appears for $A=B=0$. It was introduced by \citet{khaled2022better} as expected smoothness condition and shown to be the weakest assumption among many others.
\subsection{Stochastic Gradient Methods}
The following two classical optimization algorithms will be analyzed in this article. Both algorithms are described as discrete time stochastic process $(X_n)_{n\in\N}$ driven by noisy gradient evaluations in \cref{eq:sfoo}. In each iteration, we assume that the stochastic first order oracle is accessed through the evaluation of $\zeta_{n+1}$ which is a copy of $\zeta$ independent from the current state $X_n$. 

The stochastic gradient descent (SGD) scheme is given by the stochastic update 
\begin{equation*}
    X_{n+1} = X_n - \gamma_n\, V(X_n,\zeta_{n+1})\,, 
\end{equation*}
where $X_1$ is a $\R^d$-valued random vector which denotes the initial state. To keep the notation simple, we will 
introduce $V_{n+1}(X_n):= V(X_n,\zeta_{n+1})$ suppressing the explicit noise representation through $(\zeta_n)_{n\in\N}$ in the following. The iterative update formula then reads as
\begin{equation}\label{eq:SGD}
    X_{n+1} = X_n - \gamma_n\, V_{n+1}(X_n). \tag{SGD}
\end{equation}

The iterative scheme of stochastic heavy ball (SHB) is defined by
\begin{equation}\label{eq:SHB}
    X_{n+1} = X_n - \gamma_n V_{n+1}(X_n) + \nu(X_n-X_{n-1})\, ,\tag{SHB}
\end{equation}
with initial $\R^d$-valued random vector $X_1$. The additional summand is called the momentum term with momentum parameter $\nu\in[0,1)$. In both cases, $(\gamma_n)_{n\in\N}$ denotes a sequence of positive step sizes and 
we denote by $(\mathcal F_n)_{n\in\N}$ the natural filtration induced by the process $(X_n)_{n\in\N}$. Note that we adopt the convention where the set of natural numbers $\N$ refers to the positive integers excluding zero. When necessary, we explicitly define $\N_0:=\N\cup \{0\}$. 

\begin{example}[Expected risk minimization]\label{ex:example}
In order to give more insights into the considered setting of our stochastic first order oracle we formulate a specific one based on expected risk minimization. In expected risk minimization we are interested in minimizing an objective function of the form
 \[f(x) = \E[F(x,\zeta)] = \int_{\Omega} F(x,\zeta(\omega)) \, d\mathbb P(\omega)\, \]
 where $F:\R^d\times M\to \R$ is $\cB(\R^d)\otimes \mathcal M / \cB(\R)$-measurable. In our notation the stochastic first order oracle then takes the form
 \[V(x,\zeta) =  \nabla f(x) + \left(\nabla_x F(x,\zeta)-\nabla f(x)\right) = \nabla_x F(x,\zeta) \]
 and the iterative update of SGD reads as
 \[ X_{n+1} = X_n - \gamma_n \nabla_x F(X_n,\zeta_{n+1}) \]
 with a sequence of independent and identically distributed $(\zeta_n)$. The iterative scheme of SHB can be written in similar way. Note that this scenario also includes empirical risk minimization where the objective function takes a finite sum form, with $\zeta \sim \mathcal U(\{1,\dots,N\})$,
 \[f(x) = \frac{1}{N}\sum_{i=1}^N F(x,i) = \E[ F(x,\zeta)]\,. \] 
\end{example}

Exemplifying the SGD method, we now illustrate the typical steps of the convergence analysis for first-order optimization methods. First, the smoothness of the function $f$ is exploited by applying the descent inequality \cref{eq:descent-lemma} to the iteration scheme and then applying conditional expectations,
\begin{equation*}
    \E[f(X_{n+1}) \mid \mathcal F_n] 
    \leq f(X_n) - \gamma_n \lVert \nabla f(X_n)\rVert^2 
    + \frac{L \gamma_n^2}{2} \E[ \lVert V_{n+1}(X_n)\rVert^2 \mid \mathcal F_n].
\end{equation*}
Next, $f^\ast$ is subtracted on both sides and the variance term of the stochastic gradient is controlled through the \cref{eq:ABC-condition} condition:
\begin{equation}\label{eq:smooth+ABC} 
        \E[f(X_{n+1})-f^\ast \mid \mathcal F_n] \leq \Big(1+\frac{LA\gamma_n^2}{2}\Big) (f(X_n)-f^\ast) - \Big(\gamma_n-\frac{BL\gamma_n^2}{2}\Big) \lVert \nabla f(X_n)\rVert^2 + \frac{LC\gamma_n^2}{2}.
\end{equation} 

Without further assumptions this inequality can now be used to show that the gradient $\nabla f(X_n)$ converges almost surely to zero. In order to obtain convergence towards a global optimum additional assumptions are needed. For instance, it is sufficient to incorporate the global gradient domination property defined in Definition~\ref{def:PL-defs} which yields an iterative inequality of the form
\begin{equation}\label{eq:smooth+ABC+gradient-dom} 
    \E[f(X_{n+1})-f^\ast \mid \mathcal F_n]\leq \Big(1+\frac{LA\gamma_n^2}{2}\Big) (f(X_n)-f^\ast) 
    - \Big(\gamma_n-\frac{BL\gamma_n^2}{2}\Big) c^2 (f(X_n)-f^\ast)^{2\beta} + \frac{LC\gamma_n^2}{2}.
\end{equation} 
Typically, the expectation is taken on both sides of the inequality to derive a convergence rate in expectation by working with recursive inequalities. In this article we push the argument further. We combine smoothness and gradient domination with a variant of the Robbins-Siegmund Theorem (see Lemma~\ref{cor:RS_corollary}) to derive almost sure convergence rates.

\section{Preliminary Discussion on Super-Martingale Convergence Rates}\label{sec:prelim-on-supermartingales}

In the previous section, we have sketched how to combine the global gradient domination property with smoothness to derive a recursive inequality of the form
\[ \E[Y_{n+1}\mid \mathcal F_n] \le (1+c_1 \gg_n)Y_n - c_2\gamma_n Y_n^{2\beta} + c_3\gamma_n^2\,,\]
where $Y_n:= f(X_n)-f^\ast$. For analysing these inequalities, we must deal separately with the strong gradient domination case ($\beta= \frac 12$) and the weak gradient domination case ($\beta > \frac12$) to avoid divisions by zero. For the former case the recursive inequality simplifies, whereas a more complex analysis is required for the latter. 
To establish almost sure convergence rates we employ convergence lemmas for super-martingales based on the Robbins-Sigmund Theorem. This methodology has been introduced in \citet{pmlr-v134-sebbouh21a} and further utilized in \citet{LY2022} to analyze SGD and SHB under (strong) convexity. It is noteworthy to mention that their analysis is under the similar noise assumption formulated in \Cref{ass:ABC}.

In the following, we illustrate how to extend the arguments to convergence under the global gradient domination property. 
Here is our super-martingale result that also encompasses \citet[Lemma 1]{LY2022} when $\beta=\frac{1}{2}$ for completeness: 
\begin{lemma}\label{lem:as_rate_extension_beta_allgemein}
    Let $(Y_n)_{n\in\N}$ be a sequence of non-negative random variables on an underlying probability space $(\Omega,\cF,\bbP)$ with natural filtration $(\mathcal F_n)_{n\in\N}$ and suppose there exists $\beta \in [\frac12,1]$, $c_1,c_3\geq0$ and $c_2>0$ such that
    \[ \E[Y_{n+1}\mid \mathcal F_n] \le (1+c_1\gamma_n^2)Y_n-c_2\gamma_n Y_n^{2\beta} + c_3\gamma_n^2\,,\] 
    for all $n\ge1$, where $\gamma_n = \Theta(\frac{1}{n^{\theta}})$ for some fixed  $\theta \in \left(\frac12,1\right)$.
    Then, for any 
    \[\eta \in\begin{cases} 
    \left(\max\{2-2\gt, \frac{\gt+2\beta-2}{2\beta-1}\},1\right)&: \beta\in(\frac12,1]\\
    (2-2\gt,1) &:\beta=\frac12 
    \end{cases}\,,\]
    $(Y_n)_{n\in\N}$ vanishes almost surely with
    $Y_n \in o\left(\frac{1}{n^{1-\eta}}\right)$.
\end{lemma}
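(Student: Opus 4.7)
The plan is to apply the Robbins--Siegmund-type convergence lemma \cref{cor:RS_corollary} to the weighted sequence $Z_n := w_n Y_n$ with deterministic weights $w_n := n^{1-\eta'}$, where $\eta'$ is chosen strictly between the infimum of the admissible range and the target $\eta$; such an $\eta'$ exists because the admissible interval for $\eta$ is open. Once $Z_n$ is shown to converge almost surely to a finite random variable, in particular $Z_n = O(1)$ a.s., we obtain $Y_n = O(n^{-(1-\eta')})$ a.s.; then the claimed little-$o$ statement $Y_n \in o(n^{-(1-\eta)})$ follows from the elementary identity $n^{1-\eta}\cdot n^{-(1-\eta')} = n^{\eta'-\eta} \to 0$.

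To set up the recursion for $Z_n$, we multiply the hypothesis by $w_{n+1}$ and expand around $w_n Y_n$:
\[
\E[Z_{n+1}\mid \mathcal F_n] \le w_n Y_n + (w_{n+1}-w_n)\,Y_n + c_1 w_{n+1}\gamma_n^2\,Y_n - c_2 w_{n+1}\gamma_n\,Y_n^{2\beta} + c_3 w_{n+1}\gamma_n^2.
\]
The multiplicative correction $c_1 w_{n+1}\gamma_n^2/w_n = O(\gamma_n^2)$ is summable because $\theta > \tfrac12$, so it is absorbed as $(1+b_n) Z_n$ with $\sum_n b_n < \infty$. The additive remainder $c_3 w_{n+1}\gamma_n^2 = \Theta(n^{1-\eta'-2\theta})$ is summable precisely because $\eta' > 2-2\theta$.

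The core technical step is to control the drift-minus-penalty term $A_n := (w_{n+1}-w_n)Y_n - c_2 w_{n+1}\gamma_n Y_n^{2\beta}$, using $w_{n+1}-w_n = \Theta(n^{-\eta'})$ and $c_2 w_{n+1}\gamma_n = \Theta(n^{1-\eta'-\theta})$. If $\beta = \tfrac12$, then $A_n = [\Theta(n^{-\eta'}) - \Theta(n^{1-\eta'-\theta})]Y_n$, and the second exponent exceeds the first since $1-\theta > 0$; thus $A_n \le 0$ for all sufficiently large $n$ and the term can be discarded. If $\beta > \tfrac12$, we split according to whether $Y_n$ exceeds the threshold $T_n := C_T\, n^{(\theta-1)/(2\beta-1)}$, with $C_T$ chosen so that $Y_n > T_n$ forces $c_2 w_{n+1}\gamma_n Y_n^{2\beta}/2 \ge (w_{n+1}-w_n)Y_n$ (equivalently $Y_n^{2\beta-1} \ge \mathrm{const}\cdot n^{\theta-1}$, which motivates the choice of $T_n$). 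On $\{Y_n > T_n\}$ one then has $A_n \le -c_2 w_{n+1}\gamma_n Y_n^{2\beta}/2 \le 0$; on $\{Y_n \le T_n\}$ one has $A_n \le (w_{n+1}-w_n) T_n = O(n^{-\eta' + (\theta-1)/(2\beta-1)})$, and this last exponent is strictly less than $-1$ exactly when $\eta' > 1+(\theta-1)/(2\beta-1) = (\theta+2\beta-2)/(2\beta-1)$.

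Combining, we obtain $\E[Z_{n+1}\mid \mathcal F_n] \le (1+b_n) Z_n + d_n$ with deterministic summable $(b_n)$ and $(d_n)$, so \cref{cor:RS_corollary} yields almost sure convergence of $Z_n$ to a finite random variable, which completes the argument. The main obstacle will be the weak case $\beta > \tfrac12$: tuning the threshold $T_n$ so that the induced summability exponent matches precisely the admissible interval for $\eta$ in the statement is what pins down the rate, and it is also what distinguishes this proof from the convex-case treatment of \citet{LY2022}, where no nonlinear threshold splitting is required.
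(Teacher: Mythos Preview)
Your argument is correct, and it takes a genuinely different route from the paper's proof.

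The paper handles the nonlinear term $-c_2\gamma_n Y_n^{2\beta}$ by the ``$q$-trick'': for an auxiliary $1<q\le 1/\theta$ it writes
\[
(1+c_1\gamma_n^2)Y_n - c_2\gamma_n Y_n^{2\beta}
= (1+c_1\gamma_n^2 - c_2\gamma_n^q)Y_n + c_2\gamma_n\big(\gamma_n^{q-1}Y_n - Y_n^{2\beta}\big),
\]
then bounds the residual uniformly in $Y_n$ by maximising $y\mapsto \gamma_n^{q-1}y - y^{2\beta}$. This \emph{keeps} a linear contraction $-\Theta(\gamma_n^q)Y_n$ in the recursion for $(n+1)^{1-\eta}Y_{n+1}$, so that \cref{cor:RS_corollary} (which needs $\sum a_n=\infty$) applies and gives $n^{1-\eta}Y_n\to 0$ directly. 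The admissible range of $\eta$ then emerges from optimising over $q$.

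Your threshold splitting is morally the same maximisation argument in disguise, but you \emph{discard} the negative contribution on $\{Y_n>T_n\}$ rather than retain it. Consequently your final inequality $\E[Z_{n+1}\mid\mathcal F_n]\le (1+b_n)Z_n+d_n$ has no contraction term, and \cref{cor:RS_corollary} does not apply as stated (its hypothesis $\sum a_n=\infty$ fails for $a_n\equiv 0$). What you actually invoke is the general Robbins--Siegmund theorem (Theorem~\ref{app:thm:RS}), which yields only $Z_n\to Z_\infty<\infty$ a.s. You then recover the little-$o$ by the neat device of working at an intermediate exponent $\eta'<\eta$, so that a.s.\ boundedness of $n^{1-\eta'}Y_n$ forces $n^{1-\eta}Y_n = n^{\eta'-\eta}\cdot n^{1-\eta'}Y_n \to 0$.

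Both approaches land on exactly the same admissible interval for $\eta$; the paper's optimisation over $q$ and your threshold exponent $(\theta-1)/(2\beta-1)$ encode the same balance. Your route is arguably more elementary (no auxiliary parameter, no case split over the value of $\theta$ relative to $2\beta/(4\beta-1)$), at the cost of the extra $\eta'$-step. The only correction needed is the citation: replace \cref{cor:RS_corollary} by the general Robbins--Siegmund theorem where you conclude convergence to a finite limit.
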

The proof of this lemma is based on the well-known Robbins-Siegmund theorem \cite[Theorem 1]{RS1971} and is provided in full detail in \Cref{app:proof-lem-as-rate-beta-allgemein}.

\section{Convergence for Global Gradient Domination Property}\label{sec:SGD_global}

In this part of the paper, 
we provide our global convergence result in a non-convex and globally smooth setting. 
Combining the recursive inequality \cref{eq:smooth+ABC+gradient-dom} with the super-martingale convergence result from Lemma~\ref{lem:as_rate_extension_beta_allgemein} leads to the following theorem. 
\begin{theorem}\label{thm:global_convergencerate_SGD_beta_allgemein}
    Suppose Assumption~\ref{ass:L-smooth} \ref{ass:L-smooth-global} and Assumption~\ref{ass:ABC} are fulfilled and let $f$ satisfy the global gradient domination property from Definition~\ref{def:PL-defs} with $\beta\in[\frac12,1]$. Denote by $(X_n)_{n\in\N}$ the sequence generated by \cref{eq:SGD} using a step size $\gamma_n = \Theta(\frac{1}{n^{\theta}})$ with $\theta\in \left( \frac{1}{2},1\right)$. For any 
    \[\eta \in\begin{cases} 
    \left(\max\{2-2\gt, \frac{\gt+2\beta-2}{2\beta-1}\},1\right)&: \beta\in(\frac12,1]\\
    (2-2\gt,1)&:\beta=\frac12 
    \end{cases}\] 
    it holds that
    \begin{equation*} 
    (i)\ \ f(X_n)-f^\ast \in o\left( \frac{1}{n^{1-\eta}}\right),\ \text{a.s.},\quad \text{and} \quad (ii)\ \ 
        \E[f(X_n) - f^\ast] \in o\left(\frac{1}{n^{1-\eta}}\right).
    \end{equation*}
\end{theorem}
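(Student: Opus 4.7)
The plan is to reduce the theorem to a direct application of \Cref{lem:as_rate_extension_beta_allgemein} for part (i), and to a deterministic analogue of the same argument for part (ii).

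\textbf{Setting up the recursion.} I would start from the inequality \cref{eq:smooth+ABC+gradient-dom}, which is already derived in the paper from $L$-smoothness, the \cref{eq:ABC-condition} condition, and the global gradient domination property. Writing $Y_n := f(X_n) - f^\ast \ge 0$, this inequality reads
\[
\E[Y_{n+1} \mid \mathcal F_n] \le \Bigl(1 + \tfrac{LA}{2}\gamma_n^2\Bigr) Y_n - \Bigl(\gamma_n - \tfrac{BL}{2}\gamma_n^2\Bigr) c^2 Y_n^{2\beta} + \tfrac{LC}{2}\gamma_n^2.
\]
Since $\gamma_n = \Theta(n^{-\theta})$ with $\theta > \tfrac12$, we have $\gamma_n \to 0$, so there exists $N_0$ such that for all $n \ge N_0$, $\gamma_n - \tfrac{BL}{2}\gamma_n^2 \ge \tfrac{\gamma_n}{2}$. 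Consequently, for $n \ge N_0$,
\[
\E[Y_{n+1} \mid \mathcal F_n] \le (1 + c_1 \gamma_n^2) Y_n - c_2 \gamma_n Y_n^{2\beta} + c_3 \gamma_n^2,
\]
with $c_1 = LA/2$, $c_2 = c^2/2$, $c_3 = LC/2$. Shifting the index so that the recursion holds from $n=1$ (by relabeling or absorbing initial terms) puts us precisely in the setting of \Cref{lem:as_rate_extension_beta_allgemein}.

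\textbf{Almost sure rate (i).} A direct application of \Cref{lem:as_rate_extension_beta_allgemein} yields $Y_n \in o(n^{-(1-\eta)})$ almost surely for every $\eta$ in the stated range, immediately giving (i).

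\textbf{Rate in expectation (ii).} Taking the total expectation of the same recursion and applying Jensen's inequality to the convex map $y \mapsto y^{2\beta}$ (convex since $2\beta \ge 1$ and $Y_n \ge 0$), I would deduce that $u_n := \E[Y_n]$ satisfies the purely deterministic recursion
\[
u_{n+1} \le (1 + c_1 \gamma_n^2) u_n - c_2 \gamma_n u_n^{2\beta} + c_3 \gamma_n^2.
\]
One can then run essentially the same argument as in the proof of \Cref{lem:as_rate_extension_beta_allgemein} but in its deterministic form: define a Lyapunov sequence $W_n = n^{1-\eta} u_n$, show via telescoping that the recursion for $W_n$ has summable perturbations (using $\sum \gamma_n^2 < \infty$ because $\theta > \tfrac12$) and an effectively contracting component (using $c_2\gamma_n u_n^{2\beta}$ to dominate the growth term $c_1\gamma_n^2 u_n$ once $u_n$ is not already of the claimed order), and conclude $u_n \in o(n^{-(1-\eta)})$. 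The argument is in fact simpler than for (i) since no Robbins--Siegmund machinery is needed; only the calculus of deterministic recursions.

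\textbf{Main obstacle.} The delicate step, for both (i) and (ii), is the weak case $\beta \in (\tfrac12, 1]$: the nonlinear term $Y_n^{2\beta}$ makes a direct comparison with $Y_n$ impossible, so one must split the analysis according to whether $Y_n$ is above or below a threshold proportional to $n^{-(1-\eta)/(2\beta-1)}$-type scale. Balancing the contributions of the noise term $c_3 \gamma_n^2 = \Theta(n^{-2\theta})$ against the restoring strength $c_2 \gamma_n Y_n^{2\beta}$ is precisely what pins down the admissible range of $\eta$, namely $\eta > \max\{2-2\theta, (\theta + 2\beta - 2)/(2\beta-1)\}$; these two thresholds correspond respectively to the summability of $\gamma_n^2$ and to the self-consistency of the Lyapunov scaling under the $2\beta$-nonlinearity. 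The case $\beta = \tfrac12$ degenerates into a linear recursion where only the first threshold survives, matching the dichotomy in the theorem statement.
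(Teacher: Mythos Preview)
Your setup and part (i) are essentially identical to the paper's proof: reduce \cref{eq:smooth+ABC+gradient-dom} to the canonical super-martingale form and invoke \Cref{lem:as_rate_extension_beta_allgemein}.

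For part (ii) you take a genuinely different route. You apply Jensen's inequality to the convex map $y\mapsto y^{2\beta}$ and obtain a \emph{deterministic} recursion for $u_n=\E[Y_n]$ of the same shape as in \Cref{lem:as_rate_extension_beta_allgemein}, after which a deterministic analogue of that lemma (via the $q$-trick and \Cref{lem:help-an-bn-cn}) finishes the job. The paper does \emph{not} pass through Jensen: it keeps the conditional inequality, performs the $q$-trick at the level of random variables to reach the intermediate bound \cref{eq:help-lem-21}, then multiplies by $(n+1)^{1-\eta}$, takes expectations, and applies \Cref{lem:help-an-bn-cn} directly to $w_n=\E[n^{1-\eta}Y_n]$. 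Your Jensen shortcut is correct and arguably cleaner, since it collapses the stochastic problem to a scalar recursion in one step; the paper's route has the minor advantage that it literally reuses the display \cref{eq:help-lem-21} already produced in the proof of \Cref{lem:as_rate_extension_beta_allgemein}, so no separate deterministic lemma needs to be stated. Either way, the same range of $\eta$ falls out from the same summability constraints on $\gamma_n^2$ and $\gamma_n^{\frac{2\beta q-1}{2\beta-1}}$.
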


\begin{proof}
Recall, in~\cref{sec:mathbackground} we derived~\cref{eq:smooth+ABC+gradient-dom}, 
\begin{align*}
    &\E[f(X_{n+1})-f^\ast \mid \mathcal F_n] \\
    &\leq \Big(1+\frac{LA\gamma_n^2}{2}\Big) (f(X_n)-f^\ast) - \Big(\gamma_n-\frac{BL\gamma_n^2}{2}\Big) c^2 (f(x)-f^\ast)^{2\beta} + \frac{LC\gamma_n^2}{2},
\end{align*}
which will be the basis of the proof of Theorem~\ref{thm:global_convergencerate_SGD_beta_allgemein}.
We treat again both cases for $\beta=\frac12$ and $\beta\in(\frac12,1]$ separately:

     \underline{$\beta = \frac12$:} In this case,~\cref{eq:smooth+ABC+gradient-dom} results in the super-martingale inequality 
      \[ \E[Y_{n+1} \mid \mathcal F_n] \leq \Big(1+\frac{LA\gamma_n^2}{2} - \gamma_n c^2 +\frac{BLc^2\gamma_n^2}{2}\Big) Y_n + \frac{LC\gamma_n^2}{2},\]
      with $Y_n = f(X_n)-f^\ast$. By the choice of $\gamma_n$ there exists $N>0$ and a constant $\tilde c >0$ such that $ \gamma_n c^2 - \frac{LA\gamma_n^2}{2} -  \frac{BLc^2\gamma_n^2}{2}\geq \tilde c \gamma_n$ for all $n\geq N$. Thus, 
      \[ \E[Y_{n+1} \mid \mathcal F_n] \leq \Big(1-\tilde c \gg_n\Big) Y_n + \frac{LC\gamma_n^2}{2},\]
      for all $n \ge N$. Then, claim (i) follows by applying Lemma~\ref{lem:as_rate_extension_beta_allgemein} with $c_1 =0, c_2 =\tilde c, c_3 = \frac{LC}{2}$ and $\beta = \frac 12$.\\
      To prove claim (ii) we multiply $(n+1)^{1-\eta}$ on both sides and take the expectation. It follows that
      \begin{align*}
          \E[(n+1)^{(1-\eta)}Y_{n+1}] &\leq (1-\tilde c \gg_n) (n+1)^{1-\eta}\E[Y_n] + \frac{LC}{2} (n+1)^{1-\eta} \gg_n^2\\
          &\leq (1-\tilde c \gg_n) (n^{1-\eta} +(1-\eta) n^{-\eta}) \E[Y_n]+ \frac{LC}{2} (n+1)^{1-\eta} \gg_n^2\\
          &= \left( 1- \tilde c \gg_n +\frac{1-\eta}{n} - \frac{\tilde c (1-\eta)\gg_n}{n}\right) n^{1-\eta}\E[Y_n]+ \frac{LC}{2} (n+1)^{1-\eta} \gg_n^2.
      \end{align*}
      As $\gt_n \in \Theta(\frac{1}{n^\gt})$ we obtain that $\tilde c \gg_n$ is the dominating term. Hence, there exists a constant $\tilde c_1>0$ and $\tilde N>N$ such that $\tilde c \gg_n -\frac{1-\eta}{n} + \frac{\tilde c (1-\eta)\gg_n}{n} \geq \tilde c_1 \gg_n$ for all $n\geq \tilde N$.
      Thus, for all $n\geq \tilde N$
      \begin{align*}
          \E[(n+1)^{(1-\eta)}Y_{n+1}] \leq\left( 1- \tilde c_1 \gg_n\right) n^{1-\eta}\E[Y_n]+ \frac{LC}{2} (n+1)^{1-\eta} \gg_n^2.
      \end{align*}
      We apply Lemma~\ref{lem:help-an-bn-cn} with $w_n= n^{1-\eta}\E[Y_n]$, $a_n = \tilde c_1 \gg_n$ and $b_n = (n+1)^{1-\eta} \gg_n^2$ and obtain that $n^{1-\eta}\E[Y_n] \to 0$ for $n \to \infty$ which yields claim (ii).
      Note that $\sum_n b_n <\infty$ as $1-\eta <2\gt-1$ for $\eta \in (2-2\gt,1)$.
     
     \underline{$\beta \in (\frac{1}{2},1]$:} In this case,~\cref{eq:smooth+ABC+gradient-dom} results in the super-martingale inequality 
      \[ \E[Y_{n+1} \mid \mathcal F_n] \leq \Big(1+\frac{LA\gamma_n^2}{2}\Big) Y_n - \Big(\gamma_n-\frac{BL\gamma_n^2}{2}\Big) c^2 Y_n^{2\beta} + \frac{LC\gamma_n^2}{2},\]
      with $Y_n = f(X_n)-f^\ast$.
      By the choice of $\gg_n$ there exists $c_2>0$ and $N_1>0$ such that $c^2\gamma_n-\frac{BLc^2\gamma_n^2}{2} \geq c_2 \gg_n$ for all $n\ge N_1$,
    \begin{align*}
        \E[Y_{n+1} \mid \mathcal F_n] 
        &\leq \Big(1+\frac{LA\gamma_n^2}{2}\Big) Y_n - c_2\gamma_n Y_n^{2\beta} + \frac{LC\gamma_n^2}{2}\,.
    \end{align*}
    We deduce claim (i) from Lemma~\ref{lem:as_rate_extension_beta_allgemein} with $c_1 = \frac{LA}{2}, c_2 = c_2, c_3 = \frac{LC}{2}$ and $\beta \in (\frac 12 ,1]$.
    
    For claim $(ii)$ we firstly proceed as in the proof of Lemma~\ref{lem:as_rate_extension_beta_allgemein}. 
    Therefore, one can choose the auxiliary parameter $1<q\leq \frac{1}{\gt}$ and find constants $c_4, c_3, \tilde N_1 >0$ such that for all $n \geq \tilde N_1$ by Equation\cref{eq:help-lem-21} we have
    \begin{equation*}
        \E[ (n+1)^{1-\eta} Y_{n+1}\mid \cF_n] \le n^{1-\eta} Y_n - c_4 \frac{1}{n^{q\theta}} n^{1-\eta} Y_n + c_3(n+1)^{1-\eta} (\gamma_n^\frac{2\beta q-1}{2 \beta-1}+\gamma_n^2)\,.
    \end{equation*}
    Next, we take the expectation to obtain
    \begin{align*}
        \E[ (n+1)^{1-\eta} Y_{n+1}] &\le (1 - c_4 \frac{1}{n^{q\theta}}) \E[n^{1-\eta} Y_n] + c_3(n+1)^{1-\eta} (\gamma_n^\frac{2\beta q-1}{2 \beta-1}+\gamma_n^2)
    \end{align*}
    for all $n\geq \tilde N_1$, implying that $w_n=\E[n^{1-\eta} Y_n] \to 0$ as $n\to\infty$ by Lemma~\ref{lem:help-an-bn-cn}. Note that we have chosen $\gt, \eta$ and $q$ as in Lemma~\ref{lem:as_rate_extension_beta_allgemein}, such that $\sum_n \frac{1}{n^{q\theta}} = \infty$, $\sum_n (n+1)^{1-\eta} \gamma_n^\frac{2\beta q-1}{2 \beta-1} < \infty$, and $\sum_n  (n+1)^{1-\eta} \gamma_n^2 <\infty$ (see \cref{eq:cond-1-lem21}, \cref{eq:cond-2-lem21} and \cref{eq:cond-3-lem21}). Therefore, the assumptions of Lemma~\ref{lem:help-an-bn-cn} are met.

\end{proof}

To the best of our knowledge our theorem presents the first convergence rate for SGD under weak gradient domination with respect to almost sure convergence. 

It is natural to ask which $\theta$ leads to the best convergence rate. 
Optimizing  for $\eta$ yields an optimal choice $\theta = \frac{2\beta}{4\beta-1}$ to achieve the best possible rate of convergence. This specific choice yields a lower bound of the interval given by $2-2\gt = \frac{\gt+2\beta-2}{2\beta-1} = 1-\frac{1}{4\beta-1}$ and therefore an almost sure convergence of the form $ o(\frac{1}{n^{p}})$ where $p$ is arbitrarily close to $\frac{1}{4\beta-1}$ (see also~\cref{tab:my-table}). We emphasize that the rate we obtain is arbitrarily close to the one obtained in \citet{pmlr-v134-fontaine21a, fatkhullin2022} in expectation. According to \citet[Prop. 2]{fatkhullin2022} the rate is attained, if the recursive inequality is indeed an equality.

Roughly speaking, our result guarantees a faster convergence rate for "stronger" gradient domination properties (i.e.~for smaller $\beta$). Indeed, as $2-2\theta>\frac{\gt+2\beta-2}{2\beta-1}$ for $\beta$ sufficiently close to $\frac12$ our result is consistent to the one presented in \citet[Thm. 1]{LY2022} by replacing the $\mu$-strongly convex assumption with the strong gradient domination property with $\beta = \frac{1}{2}$.

Similar arguments can be used to derive almost sure convergence rates for SHB under global gradient domination:

\begin{theorem}\label{thm:SHB-global}
Suppose Assumption~\ref{ass:L-smooth} \ref{ass:L-smooth-global} and Assumption~\ref{ass:ABC} 
are fulfilled and let $f$ satisfy the global gradient domination property from Definition~\ref{def:PL-defs} with $\beta\in[\frac12,1]$. Denote by $(X_n)_{n\in\N}$ the sequence generated by \cref{eq:SHB} using a step size $\gamma_n =\Theta( \frac{1}{n^\gt})$ for $\gt \in (\frac{1}{2},1)$.  For any 
    \[\eta \in\begin{cases} 
    \left(\max\{2-2\gt, \frac{\gt+2\beta-2}{2\beta-1}\},1\right) &:\beta\in(\frac12,1]\\
    (2-2\gt,1) &:\beta=\frac12 
    \end{cases}\] 
    it holds that
    \begin{equation*} 
    (i)\ \ f(X_n)-f^\ast \in o\left( \frac{1}{n^{1-\eta}}\right),\ \text{a.s.},\quad \text{and} \quad (ii)\ \ 
        \E[f(X_n) - f^\ast] \in o\left(\frac{1}{n^{1-\eta}}\right).
    \end{equation*}
\end{theorem}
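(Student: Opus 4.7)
The plan is to mirror the proof of Theorem~\ref{thm:global_convergencerate_SGD_beta_allgemein}: reduce the analysis to a one-step super-martingale inequality of the form
\[
\E[Y_{n+1} \mid \mathcal F_n] \le (1 + c_1 \gamma_n^2) Y_n - c_2 \gamma_n Y_n^{2\beta} + c_3 \gamma_n^2
\]
for a non-negative Lyapunov quantity $Y_n$ dominating $f(X_n) - f^*$, and then invoke Lemma~\ref{lem:as_rate_extension_beta_allgemein}. The new obstacle compared to SGD is the momentum term $\nu(X_n - X_{n-1})$, which breaks the monotone-descent structure of $f(X_n) - f^*$ on its own, so a Lyapunov functional that incorporates the last increment is needed.

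First I would work with a Lyapunov function of the form
\[
Y_n = f(X_n) - f^* + \mu_n \|X_n - X_{n-1}\|^2,
\]
with a positive coefficient $\mu_n$ tuned to $\gamma_n$ and the momentum parameter $\nu$. Applying the descent lemma to the SHB update
\[
X_{n+1} - X_n = -\gamma_n V_{n+1}(X_n) + \nu (X_n - X_{n-1}),
\]
taking conditional expectation, and using unbiasedness together with \cref{eq:ABC-condition} to bound $\E[\|V_{n+1}(X_n)\|^2 \mid \mathcal F_n]$, yields an inequality for $\E[f(X_{n+1}) - f^* \mid \mathcal F_n]$ that contains the standard $-\gamma_n\|\nabla f(X_n)\|^2$ descent, a cross term $\nu\langle \nabla f(X_n), X_n - X_{n-1}\rangle$, and a positive $\|X_n - X_{n-1}\|^2$ contribution. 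Splitting the cross term by Young's inequality and expanding $\mu_{n+1}\E[\|X_{n+1}-X_n\|^2 \mid \mathcal F_n]$ via the SHB update and \cref{eq:ABC-condition} again, I would pick $\mu_n$ so that the net coefficient on $\|X_n - X_{n-1}\|^2$ is strictly negative for $n$ large, while preserving a negative multiple of $\gamma_n \|\nabla f(X_n)\|^2$. This produces
\[
\E[Y_{n+1} \mid \mathcal F_n] \le (1 + \tilde c_1 \gamma_n^2) Y_n - \tilde c_2 \gamma_n \|\nabla f(X_n)\|^2 + \tilde c_3 \gamma_n^2,
\]
which is the key intermediate recursion.

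Next, I would apply the global gradient domination property $\|\nabla f(X_n)\|^2 \ge c^2 (f(X_n) - f^*)^{2\beta}$, and use a Young-type split to pass from $(f(X_n) - f^*)^{2\beta}$ to $Y_n^{2\beta}$ modulo a remainder in $\mu_n^{2\beta}\|X_n - X_{n-1}\|^{4\beta}$. For $n$ large this remainder can be absorbed into $\tilde c_1 \gamma_n^2 Y_n$ and $\tilde c_3 \gamma_n^2$, provided $\mu_n$ decays fast enough and $\|X_n - X_{n-1}\| \to 0$ at a sufficient rate; the latter follows as a by-product of the same one-step inequality by summing the $\|X_n - X_{n-1}\|^2$ contributions along the trajectory. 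Once the recursion is in standard form, claim (i) is exactly Lemma~\ref{lem:as_rate_extension_beta_allgemein}, and claim (ii) follows by reusing verbatim the post-processing of the proof of Theorem~\ref{thm:global_convergencerate_SGD_beta_allgemein}: multiply by $(n+1)^{1-\eta}$, take unconditional expectation, and apply Lemma~\ref{lem:help-an-bn-cn} with the same exponent bookkeeping.

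The main obstacle is the second step, namely the choice of $\mu_n$ and the momentum-dissipation argument, particularly in the weak regime $\beta \in (1/2, 1]$ where the exponent $2\beta$ couples $\|X_n - X_{n-1}\|^{4\beta}$ with $\gamma_n$ in a non-trivial way. For the strong case $\beta = 1/2$ the recursion becomes linear in $Y_n$ and a constant $\mu_n \equiv \mu$ should suffice, recovering the flavour of \citet[Thm. 2]{LY2022}. Additional care is required to ensure that the hidden constants do not blow up as $\nu \to 1$, which typically forces a normalization of $\mu_n$ by $1-\nu$ or a mild restriction on the admissible range of the momentum parameter.
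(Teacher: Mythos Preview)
Your Lyapunov choice $Y_n=f(X_n)-f^\ast+\mu_n\|X_n-X_{n-1}\|^2$ differs from the paper's, which instead passes to the shifted iterate $Z_n=X_n+\tfrac{\nu}{1-\nu}(X_n-X_{n-1})$ and uses $Q_n+\|W_n\|^2$ with $Q_n=f(Z_n)-f^\ast$ and $W_n=X_n-X_{n-1}$. The point of the shift is that $Z_{n+1}-Z_n=-\tfrac{\gamma_n}{1-\nu}V_{n+1}(X_n)$ contains no momentum term, so the descent lemma at $Z_n$ together with $L$-smoothness gives (this is \cite[Eq.~(21)]{LY2022})
\[
\E[Q_{n+1}+\|W_{n+1}\|^2\mid\cF_n]\le (1+c_1\gamma_n^2)Q_n+(\lambda+c_2\gamma_n^2)\|W_n\|^2-c_3\gamma_n\|\nabla f(Z_n)\|^2+c_4\gamma_n^2
\]
with a \emph{constant} $\lambda\in(\nu,1)$. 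Gradient domination is then applied at $Z_n$, producing $-c_3c^2\gamma_n Q_n^{2\beta}$, and the paper runs the $q$-trick on $Q_n$ alone; since $\lambda<1$ is fixed, $\lambda+c_2\gamma_n^2\le 1-\tilde c\gamma_n^q$ automatically for large $n$, so both pieces of the Lyapunov contract by $(1-\tilde c\gamma_n^q)$ simultaneously and no remainder of the form $\gamma_n\|W_n\|^{4\beta}$ ever appears.

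Your route has a real gap precisely at the step you flag. After the Young split you obtain the extra term $\gamma_n\mu_n^{2\beta}\|W_n\|^{4\beta}$, and you propose to absorb it into $\tilde c_1\gamma_n^2 Y_n$ or $\tilde c_3\gamma_n^2$ using $\|W_n\|\to0$. But the ``by-product'' argument (summing the negative $\|W_n\|^2$ contributions and invoking Robbins--Siegmund) yields at best $\sum_n\|W_n\|^2<\infty$ almost surely, which does \emph{not} give the pointwise estimate $\mu_n^{2\beta-1}\|W_n\|^{4\beta-2}=O(\gamma_n)$ you would need; a square-summable sequence can have infinitely many terms of order $n^{-1/2}$. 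Letting $\mu_n$ decay does not help either, because then the negative coefficient you manufactured on $\|W_n\|^2$ in the first step degrades at the same rate. For $\beta=\tfrac12$ the split is unnecessary and your sketch is fine, but for $\beta\in(\tfrac12,1]$ you should adopt the $Z_n$ transformation: it gives the order-one contraction $\lambda<1$ on $\|W_n\|^2$ that makes the $q$-trick close without any bootstrap on $\|W_n\|$.
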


For the proof of \Cref{thm:SHB-global}, recall the definition of the iteration scheme \cref{eq:SHB}.
Using the following definitions
\begin{equation}\label{eq:def-z_n-w_n}
    Z_n = X_n + \frac{\nu}{1-\nu} W_n, \quad W_n = X_n - X_{n-1},
\end{equation}
one can 
derive the iterative evolution 
\begin{align}\label{eq:SHB-rewritten}
    &W_{n+1} = \nu W_n - \gamma_n V(X_n)\\
    &Z_{n+1} = Z_n -\frac{\gamma_n}{1-\nu} V(X_n).
\end{align}
The first equation follows directly from the definition of $W_n$. For the second equation we compute 
\begin{align*}
    Z_{n+1} = X_{n+1} + \frac{\nu}{1+\nu} W_{n+1} &= (1+\frac{\nu}{1-\nu}) X_{n+1} - \frac{\nu}{1-\nu} X_n\\
    &=\frac{1-\nu}{1-\nu}X_n - \frac{\gamma_n}{1-\nu}V(X_n) + \frac{\nu}{1-\nu}(X_n-X_{n-1})\\
    &= Z_n - \frac{\gamma_n}{1-\nu}V(X_n)\,.
\end{align*}
We will utilize these auxiliary variables in the proof. 

\begin{proof}[Proof of \Cref{thm:SHB-global}]
    The proof begins as in the proof of Theorem 2 in \cite{LY2022}. Using only $L$-smoothness and assumption~\cref{eq:ABC-condition}, they show that for any $c_3 \in (0,\frac{1}{1-\nu})$, $\lambda\in(\nu,1)$ there exist constants $c_1,c_2,c_4>0$ such that choosing the step size $\gg_n\sim \frac{1}{n^{\gt}}$, for some $\gt \in(\frac 12,1)$ results in \citep[Equation (21)]{LY2022}
    \begin{align}\label{eq:SHB_eq_21}
    \begin{split}
        &\E[f(Z_{n+1})-f^\ast + \lVert W_{n+1}\rVert^2 \mid \cF_n] \\
        &\leq (1+c_1\gg_n^2) (f(Z_n)-f^\ast) + (\lambda+c_2\gg_n^2) \lVert W_n\rVert^2 - c_3 \gg_n \lVert \nabla f(Z_n)\rVert^2 +c_4 \gg_n^2
    \end{split}
    \end{align}
    for all $n\ge N$ and some $N>0$ sufficiently large.
    Next, we apply the global gradient domination property for any $\beta\in[\frac{1}{2},1]$ to derive
    \begin{align}\label{eq:SHB_eq_21+PL}
        \begin{split}
        &\E[f(Z_{n+1})-f^\ast + \lVert W_{n+1}\rVert^2 \mid \cF_n] \\
        &\leq (1+c_1\gg_n^2) (f(Z_n)-f^\ast) - c c_3 \gg_n (f(Z_n)-f^\ast)^{2\beta}+ (\lambda+c_2\gg_n^2) \lVert W_n\rVert^2 +c_4 \gg_n^2.
        \end{split}
    \end{align}  
    For the remaining proof, we denote $Q_n := f(Z_n)-f^\ast$. Similar as before, we treat both cases for $\beta=\frac12$ and $\beta\in(\frac12,1]$ separately:
    
    \underline{$\beta = \frac12$:}
    Instead of $\mu$-strong convexity we use the gradient domination inequality $\lVert \nabla f(x) \rVert^2 \geq c (f^\ast-f(x))$, as the same inequality is implied by strong convexity using $c=\mu$. Then, Claim (i) follows using the same proof as \citet[Thm. 2b)]{LY2022}. Note that the inequality 
    \begin{equation}\label{eq:help-SHB-global-proof}
        \frac{1}{2L} \lVert \nabla f(x) \rVert^2 \leq f(x)-f^\ast,
    \end{equation} 
    used in the last step only requires the $L$-smoothness assumption \citep[Sec. 1.2.3]{nesterov2}. 

    For Claim (ii) we consider~\cref{eq:SHB_eq_21+PL} which simplifies for $\beta = \frac 12$ to
    \begin{align*}
        &\E[Q_{n+1} + \lVert W_{n+1}\rVert^2 \mid \cF_n] \leq (1+c_1\gg_n^2- c c_3 \gg_n) Q_n+ (\lambda+c_2\gg_n^2) \lVert W_n\rVert^2 +c_4 \gg_n^2.
    \end{align*}
    By the choice of $\gg_n$ there exists $N>0$ and $\tilde c_1, \tilde c_2 >0$, such that $c c_3 \gg_n-c_1\gg_n^2 \geq \tilde c_1 \gg_n$ and $\lambda + c_2 \gg_n^2 \leq \tilde c_2 \gg_n$ for all $n \geq N$. Hence, for $n\geq N$
    \begin{align*}
        \E[Q_{n+1} + \lVert W_{n+1}\rVert^2 \mid \cF_n] 
        &\leq (1- \tilde c_1 \gg_n) Q_n+ (1-\tilde c_2\gg_n) \lVert W_n\rVert^2 +c_4 \gg_n^2 \\
        &\leq (1- \min\{\tilde c_1, \tilde c_2\}) \left(Q_n + \lVert W_n\rVert^2\right)+c_4 \gg_n^2.
    \end{align*}
    Let $c_5 = \min\{\tilde c_1, \tilde c_2\}$, multiply by $(n+1)^{1-\eta}$ on both sides and use~\cref{eq:elementary} to obtain for $n\geq N$
    \begin{align*}
        &\E[(n+1)^{1-\eta} \left(Q_{n+1} + \lVert W_{n+1}\rVert^2\right) \mid \cF_n] \\
        &\leq (n+1)^{1-\eta}(1- c_5\gg_n) \left(Q_n + \lVert W_n\rVert^2\right) +c_4 \gg_n^2(n+1)^{1-\eta} \\
        &\leq (n^{1-\eta}+ (1-\eta)n^{-\eta})(1- c_5) \left(Q_n + \lVert W_n\rVert^2\right)+c_4 \gg_n^2(n+1)^{1-\eta}\\
        &= \left( 1-c_5 \gg_n + \frac{1-\eta}{n}- \frac{c_5(1-\eta)\gg_n}{n}\right) n^{1-\eta} \left(Q_n + \lVert W_n\rVert^2\right)+c_4 \gg_n^2(n+1)^{1-\eta}.
    \end{align*}
    Taking expectation and using that there exists $\tilde c_5>0 $ and $\tilde N >N $ such that $c_5 \gg_n - \frac{1-\eta}{n}+ \frac{c_5(1-\eta)\gg_n}{n}\geq \tilde c_5 \gg_n$, we have for all $n\geq \tilde N$
    \begin{align*}
        \E[(n+1)^{1-\eta} \left(Q_{n+1} + \lVert W_{n+1}\rVert^2\right)]
        &\leq  \left( 1-\tilde c_5 \gg_n\right) \E\left[n^{1-\eta}\left(Q_n + \lVert W_n\rVert^2\right)\right]+c_4 \gg_n^2(n+1)^{1-\eta}.
    \end{align*}
    Note that $\sum_n \gg_n^2(n+1)^{1-\eta} <\infty$ because $\eta \in (2-2\gt,1)$ implies $1-\eta < 2\gt-1$. We can apply Lemma~\ref{lem:help-an-bn-cn} which yields that $\E\left[n^{1-\eta} \left(Q_n + \lVert W_n\rVert^2\right)\right] \to 0$. Hence, $\E[ \left(Q_n + \lVert W_n\rVert^2\right)] \in o(\frac{1}{n^{1-\eta}})$.\\
    To finish the proof, one can derive 
    \begin{equation}
        f(X_n)-f^\ast \leq Q_n + \frac{1}{2} \lVert \nabla f(Z_n)\rVert^2 + \frac{\nu^2+L \nu^2}{2(1-\nu)^2} \lVert W_n\rVert^2 \,.
    \end{equation}
    To derive this equation, recall that $Z_n-X_n= - \frac{\nu}{1-\nu}W_n$, such that by $L$-smoothness we obtain  
    \[f(X_n) \le f(Z_n) + \langle \nabla f(Z_n),X_n-Z_n\rangle + \frac{L}2 \|X_n-Z_n\|^2 = f(Z_n) - \frac{\nu}{1-\nu}\langle \nabla f(Z_n),W_n\rangle + \frac{L\nu^2}{2(1-\nu)^2} \|W_n\|^2.\]
    Next, apply Cauchy-Schwarz and Young's inequality to obtain
    \[f(X_n)-f^\ast \le f(Z_n)-f^\ast + \frac{1}{2}\|\nabla f(Z_n)\|^2 + \frac{\nu^2}{2(1-\nu)^2}\|W_n\|^2 + \frac{L\nu^2}{2(1-\nu)^2} \|W_n\|^2. \]
    Using inequality~\cref{eq:help-SHB-global-proof}, we get almost surely
    \begin{equation}\label{eq:help-SHB-2}
        f(X_n)-f^\ast \leq (1+L)Q_n  + \frac{\nu^2+L \nu^2}{2(1-\nu)^2} \lVert W_n\rVert^2 \le (1+L)\max\Big(1,\frac{\nu^2}{2(1-\nu)^2}\Big) (Q_n+\lVert W_n\rVert^2).
    \end{equation}
   implying that $\E[(f(X_n)-f^\ast) ] \in o(\frac{1}{n^{1-\eta}})$ which proves Claim (ii).

    \underline{$\beta \in (\frac12,1]$:} For Claim (i), note that in~\cref{eq:SHB_eq_21+PL} $\lambda<1$,
    such that
    \begin{align*}
        &\E[Q_{n+1}+ \lVert W_{n+1}\rVert^2 \mid \cF_n] \\
        &\leq (1+c_1\gg_n^2) Q_n + (1+ c_2 \gg_n^2) \lVert W_n\rVert^2+ c c_3 \gg_n  Q_n^{2\beta}  +c_4 \gg_n^2 \\
        &\leq (1+\max\{c_1,c_2\} \gg_n^2) (Q_n +\lVert W_n\rVert^2 ) + c c_3 \gg_n  (Q_n+\lVert W_n\rVert^2)^{2\beta}  +c_4 \gg_n^2.
    \end{align*}  
    By Lemma~\ref{lem:as_rate_extension_beta_allgemein} we obtain that $Q_n +\lVert W_n\rVert^2 = f(Z_n)-f^\ast +\lVert W_n\rVert^2 \in o\left( \frac{1}{n^{1-\eta}}\right)$ for all $\eta \in \left( \max\{2-2\gt,\frac{\gt+2\beta-2}{2\beta-1} \},1\right)$.
    We apply the inequality in \cref{eq:help-SHB-2} to conclude that also $f(X_n)-f^\ast \in o\left( \frac{1}{n^{1-\eta}}\right)$ for all $\eta \in \left( \max\{2-2\gt,\frac{\gt+2\beta-2}{2\beta-1} \},1\right)$.
    This proves Claim (i).

    For Claim (ii), we again use the $q$-trick from Lemma~\ref{lem:as_rate_extension_beta_allgemein} in~\cref{eq:SHB_eq_21+PL}. For $1<q<\frac{1}{\gt}<2$ 
    we have that
    \begin{align*}
        &\E[Q_{n+1} + \lVert W_{n+1}\rVert^2 \mid \cF_n] \\
        &\leq (1+c_1\gg_n^2- c c_3\gg_n^q) Q_n + c c_3 \gg_n \left( \gg_n^{q-1} Q_n - Q_n^{2\beta}\right) + (\lambda+c_2\gg_n^2) \lVert W_n\rVert^2 +c_4 \gg_n^2.
    \end{align*}
    Now with \cref{eq:function-trick} in Lemma~\ref{lem:as_rate_extension_beta_allgemein} there exists $\tilde c_3\ge0$ such that 
    \begin{align*}
        \E[Q_{n+1} + \lVert W_{n+1}\rVert^2 \mid \cF_n] \leq (1+c_1\gg_n^2- c c_3\gg_n^q) Q_n + \tilde c_3 \gg_n^{\frac{2\beta q-1}{2\beta-1}} + (\lambda+c_2\gg_n^2) \lVert W_n\rVert^2 +c_4 \gg_n^2.
    \end{align*}
    By the choice of $\gg_n$ there exists $\tilde c_1>0$ and $N>0$ such that $c_1\gg_n^2- c c_3\gg_n^q \geq \tilde c_1 \gg_n^q$ and $\lambda+c_2\gg_n^2 \leq \tilde c_1 \gg_n^q$ for all $n\geq N$. Thus, for all $n \geq N$,
    \begin{align*}
        &\E[Q_{n+1} + \lVert W_{n+1}\rVert^2 \mid \cF_n] \leq (1-\tilde c_1\gg_n^q) (Q_n+  \lVert W_n\rVert^2) + \max\{\tilde c_3,c_4\} \left(\gg_n^{\frac{2\beta q-1}{2\beta-1}} +\gg_n^2\right).
    \end{align*}
    For $\max\{\tilde c_3,c_4\} =:\tilde c_2$, we multiply on both sides with $(n+1)^{1-\eta}$ and take the expectation to obtain for $n \geq N$
    \begin{align*}
        &\E[(n+1)^{1-\eta}\left(Q_{n+1} + \lVert W_{n+1}\rVert^2\right)] \\
        &\leq (n+1)^{1-\eta} (1-\tilde c_1\gg_n^q) \E[(Q_n+\lVert W_n\rVert^2)] + \tilde c_2 (n+1)^{1-\eta}\left(\gg_n^{\frac{2\beta q-1}{2\beta-1}} +\gg_n^2\right)\\
        &\leq (n^{1-\eta}+(1-\eta) n^{-\eta}) (1-\tilde c_1\gg_n^q) \E[(Q_n+ \lVert W_n\rVert^2)] + \tilde c_2(n+1)^{1-\eta} \left(\gg_n^{\frac{2\beta q-1}{2\beta-1}} +\gg_n^2\right)\\
        &= \left(1- \tilde c_1 \gg_n^q + \frac{1-\eta}{n} - \frac{\tilde c_1 (1-\eta) \gg_n^q}{n}\right) \E[n^{1-\eta} (Q_n+\lVert W_n\rVert^2)]\\
        &\quad+ \tilde c_2 (n+1)^{1-\eta}\left(\gg_n^{\frac{2\beta q-1}{2\beta-1}} +\gg_n^2\right).
    \end{align*}
    Next, there exists $\tilde N >N$ and $\tilde c_5>0$ such that for all $n\geq \tilde N$
    \begin{align*}
        &\E[(n+1)^{1-\eta}\left(Q_{n+1} + \lVert W_{n+1}\rVert^2\right)] \\
        &\leq \left(1- \tilde c_5 \gg_n^q \right) \E[n^{1-\eta} (Q_n+\lVert W_n\rVert^2)]+ \tilde c_2 (n+1)^{1-\eta}\left(\gg_n^{\frac{2\beta q-1}{2\beta-1}} +\gg_n^2\right).
    \end{align*}
    From the proof of Lemma~\ref{lem:as_rate_extension_beta_allgemein}, we choose the auxiliary parameter $q$ such that 
    \[ \sum_n (n+1)^{1-\eta}\left(\gg_n^{\frac{2\beta q-1}{2\beta-1}} +\gg_n^2\right) <\infty\,,\] 
 see \cref{eq:cond-2-lem21} and \cref{eq:cond-3-lem21}. By applying again Lemma~\ref{lem:help-an-bn-cn} we obtain $\E[n^{1-\eta} (Q_n+\lVert W_n\rVert^2)] \to 0$, i.e. $\E[ Q_n+\lVert W_n\rVert^2] \in o(\frac{1}{n^{1-\eta}})$. Finally, Claim (ii) follows again by~\cref{eq:help-SHB-2}.
\end{proof}

To the best of our knowledge, our result gives the first convergence proof of SHB to global optima under weak gradient domination, with rates for almost sure convergence and convergence of expectations. 
The resulting convergence rate using the optimized step size are summarized in~\cref{tab:my-table}. In the strong gradient domination setting our rate in expectation gets arbitrarily close to the $O(\frac{1}{n})$ convergence obtained in \citet{LIANG2023830}. 
It is noteworthy that the utilization of SHB in our analysis does not yield a superior convergence rate compared to SGD. This arises from the proof technique and aligns with the findings in \citet{LY2022, pmlr-v134-sebbouh21a} where the authors similarly achieve no acceleration. 
In general, for deterministic settings acceleration of gradient methods can achieve improvements of convergence rates \citep{Wilson2019AcceleratingRG}. Though in the special case of gradient domination with $\beta=\frac 12$, HB as well as Nesterov cannot accelerate in the deterministic setting as shown in \citet{pengyun-2023}.

\section{Convergence for Local Gradient Domination Property}\label{sec:localSGD}

In this section, we want to generalize the analysis in \cite{mertikopoulos2020sure} under local strong convexity to the weaker local gradient domination property for different cases of $\beta$. We consider the two cases of local gradient domination separately. 
The contributions and differences of our results 
under less restricted assumptions are the following:\\
First, we show in both cases that SGD remains in the gradient dominated region 
with high probability by only assuming local gradient domination instead of local strong convexity. Especially in the case of a local minimum $x^\ast$ this is a challenging task, as we have to ensure that the SGD scheme $(X_n)_{n\in\N}$ remains close to $x^\ast$ without exploiting convexity. We can guarantee this whenever $x^\ast$ is in an isolated connected compact set of local minima $\cX^\ast$. We prove convergence towards the level set of $\cX^\ast$ and obtain \cref{thm:conv-SGD-local-PL-xast}. 
Second, additionally to convergence in expectation we prove almost sure convergence conditioned on the "good event".
Third, due to the weaker gradient domination assumption and no convexity, one cannot expect the convergence of $X_n$ to (local or global) minimum $x^\ast$, instead we focus on convergence of $f(X_n)$ to $f(x^\ast)$. In \citet{liuZhou2023} they delve into the rationale behind considering this as a more robust metric.

The main result under local gradient domination in a local minimum $x^\ast$ is as follows:
\begin{theorem}\label{thm:conv-SGD-local-PL-xast}
    Fix some tolerance level $\delta >0$ and let $\cX^\ast\subset \R^d$ be an isolated compact connected set of local minima with level $l = f(x^\ast)$ for all $x^\ast \in\cX^\ast$. Suppose that $f$ satisfy the local gradient domination property in each $x^\ast\in\cX^\ast$, $f$ is locally $G$-Lipschitz continuous and satisfies Assumption~\ref{ass:L-smooth} \ref{ass:L-smooth-local}. Moreover, suppose Assumption~\ref{ass:ABC} hold true. Denote by $(X_n)_{n\in\N}$ the sequence generated by \cref{eq:SGD} using a step size $\gamma_n = \Theta(\frac{1}{n^{\theta}})$ for $ \theta\in (\frac{1}{2}, 1)$ and suppose that $\gg_n \leq \gamma_1$ for $\gamma_1$ sufficiently small enough such that $\sum_{n=1}^{\infty} \gg_n^2 < \frac{\delta \epsilon}{2(G^2 C^2 + G^2 + C)}$ for some $\epsilon>0$ independent of $\delta$. 
    Then, the following holds:
    \begin{enumerate}
        \item[(i)] There exist subsets $\cU$ and $\cU_1$ of $\R^d$ such that, if $X_1 \in\cU_1$ the event $\Omega_{\cU} = \{X_n \in\cU \textrm{ for all } n=1,2,\dots \}$ has probability at least $1-\delta$.
    \end{enumerate}
    Moreover, there exists $\beta \in [\frac 12, 1]$ such that for any 
    \[\eta \in\begin{cases} 
    \left(\max\{2-2\gt, \frac{\gt+2\beta-2}{2\beta-1}\},1\right) &: \beta\in(\frac12,1]\\
    (2-2\gt,1) &:\beta=\frac12 
    \end{cases}\] 
    it holds that
        \begin{equation*}
        (ii)\ \ |f(X_n) - l |\mathbf{1}_{\Omega_{\cU}} \in o\left(\frac{1}{n^{1-\eta}}\right),\  \text{a.s.},\quad \text{and}\quad
        (iii)\ \ \E[|f(X_n) - l | \mathbf{1}_{\Omega_{\cU}}] \in o\left(\frac{1}{n^{1-\eta}}\right).
        \end{equation*}
\end{theorem}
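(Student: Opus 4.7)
The plan is to adapt the stopping-time argument of \citet{mertikopoulos2020sure} to the weaker local gradient domination setting, and then on the good event to apply the super-martingale rate machinery of Lemma~\ref{lem:as_rate_extension_beta_allgemein} essentially verbatim from the global proof. The three claims split along these two steps, with (i) being the main technical obstacle: without local strong convexity we cannot push iterates toward a single point and must instead control a sublevel-set neighborhood of the entire set $\cX^\ast$.

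\textbf{Uniform local PL on a nested pair of neighborhoods.} Since $\cX^\ast$ is compact with common value $l = f(x^\ast)$, and each $x^\ast\in\cX^\ast$ admits a ball on which \cref{eq:loj} holds with some exponent $\beta_{x^\ast}\in[\tfrac12,1]$, a finite subcover together with the identity $|f(x)-f(x^\ast)|=|f(x)-l|$ yields a bounded open neighborhood $\cV$ of $\cX^\ast$, a common constant $c>0$, and a common exponent $\beta\in[\tfrac12,1]$ such that $\|\nabla f(x)\|\ge c\,|f(x)-l|^\beta$ throughout $\cV$. Because $\cX^\ast$ is isolated and consists of local minima, we may shrink $\cV$ so that $f\ge l$ on $\overline{\cV}$; Assumption~\ref{ass:L-smooth}\ref{ass:L-smooth-local} then supplies a uniform Lipschitz constant $L$ for $\nabla f$ on $\overline{\cV}$. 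I would pick nested sublevel neighborhoods $\cU_1\subset\cU\subset\cV$ of the form $\{x\in\cV:f(x)-l<\rho_i\}$ with $\rho_0<\rho_1$, so that a buffer of size $\rho_1-\rho_0$ separates $\cU_1$ from $\partial \cU$; this gap is what will absorb the stochastic fluctuations.

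\textbf{High-probability containment.} Let $\tau=\inf\{n\ge1:X_n\notin\cU\}$ and $Y_n=f(X_n)-l$. Repeating the derivation of \cref{eq:smooth+ABC} on the $\cF_n$-measurable event $\{\tau>n\}$, where $X_n\in\overline{\cU}$ and the uniform $L$ applies, one obtains
\begin{equation*}
    \E[Y_{n+1}\mathbf{1}_{\tau>n}\mid\cF_n]\le \bigl(1+\tfrac{LA\gamma_n^2}{2}\bigr)Y_n\mathbf{1}_{\tau>n}-\bigl(\gamma_n-\tfrac{LB\gamma_n^2}{2}\bigr)\|\nabla f(X_n)\|^2\mathbf{1}_{\tau>n}+\tfrac{LC\gamma_n^2}{2}\mathbf{1}_{\tau>n}.
\end{equation*}
For $\gamma_1$ sufficiently small the gradient term contributes non-positively, so $Y_{n\wedge\tau}$ is a non-negative super-martingale up to additive noise whose total magnitude is bounded by a constant multiple of $(G^2C^2+G^2+C)\sum_n\gamma_n^2$ (using local $G$-Lipschitzness of $f$ together with (ABC) to bound $\E[\|V_{n+1}(X_n)\|^2]$ on $\overline{\cU}$). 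A Doob-style maximal inequality applied to this stopped perturbed super-martingale then bounds $\bbP(\sup_n Y_{n\wedge\tau}\ge\rho_1)$ by a constant times $\sum_n\gamma_n^2$; choosing $\epsilon\le\rho_1-\rho_0$ and $X_1\in\cU_1$ forces this probability below $\delta$ via the standing hypothesis on $\gamma_1$, proving (i). This is the most delicate step: without convexity the buffer/level-set construction has to be coupled with the (ABC) jump-size control to rule out one-step escape from $\cU$ between two consecutive times spent in $\cU_1$.

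\textbf{Rates on the good event.} On $\Omega_\cU=\{\tau=\infty\}$ the uniform local PL holds along the entire trajectory, so the derivation of \cref{eq:smooth+ABC+gradient-dom} yields, with $\tilde Y_n:=Y_n\mathbf{1}_{\tau>n}$,
\begin{equation*}
    \E[\tilde Y_{n+1}\mid\cF_n]\le (1+c_1\gamma_n^2)\tilde Y_n - c_2\gamma_n \tilde Y_n^{2\beta} + c_3\gamma_n^2,
\end{equation*}
with $c_1=LA/2$, $c_2=c^2(1-LB\gamma_1/2)>0$, $c_3=LC/2$, using that $\mathbf{1}_{\tau>n}$ is $\cF_n$-measurable and non-increasing in $n$. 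Lemma~\ref{lem:as_rate_extension_beta_allgemein} then delivers $\tilde Y_n\in o(n^{-(1-\eta)})$ almost surely; since $\mathbf{1}_{\tau>n}\ge\mathbf{1}_{\Omega_\cU}$ and $Y_n\ge0$ on $\cU$, this gives $|f(X_n)-l|\mathbf{1}_{\Omega_\cU}\le\tilde Y_n$, proving (ii). For (iii), reproducing verbatim the $(n+1)^{1-\eta}$-multiplication and Lemma~\ref{lem:help-an-bn-cn} step from the proof of Theorem~\ref{thm:global_convergencerate_SGD_beta_allgemein} (split between $\beta=1/2$ and $\beta\in(1/2,1]$, with the $q$-trick used in the latter case) upgrades this to $\E[\tilde Y_n]\in o(n^{-(1-\eta)})$, and the same sandwich $|f(X_n)-l|\mathbf{1}_{\Omega_\cU}\le\tilde Y_n$ completes the proof.
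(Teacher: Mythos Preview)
Your overall architecture is right and matches the paper: unify the local PL exponents by compactness, set up a localized/stopped process, show the trajectory stays in the PL region with high probability, and then run Lemma~\ref{lem:as_rate_extension_beta_allgemein} (and its expectation counterpart via Lemma~\ref{lem:help-an-bn-cn}) on the indicator-truncated process exactly as in the global proof. Parts (ii) and (iii) of your sketch are essentially what the paper does.

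The genuine gap is in (i). A Doob-type maximal inequality for the stopped process $Y_{n\wedge\tau}=f(X_{n\wedge\tau})-l$ only bounds the probability that the \emph{function value} exceeds $\rho_1$; it does not by itself bound $\bbP(\tau<\infty)$, because $X_n$ can leave your set $\cU=\{x\in\cV:f(x)-l<\rho_1\}$ by leaving $\cV$ spatially while $f(X_n)-l$ is still small (or even negative, since $\cX^\ast$ is only an isolated component of local minima). Relatedly, the one–step descent inequality you write on $\{\tau>n\}$ uses a fixed $L$ on the segment $[X_n,X_{n+1}]$, but under only local smoothness you have no control on where $X_{n+1}$ lands unless you also bound the jump $\|X_{n+1}-X_n\|$. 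You flag this (``coupled with the (ABC) jump-size control''), but the coupling is precisely the missing piece.

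The paper's fix is to separate the two failure modes. It takes $\cU$ to be the \emph{distance} tube $\{\inf_{x^\ast\in\cX^\ast}\|x-x^\ast\|<\mathbf r/2\}$ and proves that the annulus between radii $\mathbf r/2$ and $3\mathbf r/4$ has a strictly positive infimum $s>0$ of $f-l$. It then introduces, in addition to the function-value event $E_n$ (built from $R_n=M_n^2+S_n$ with $M_n$ a martingale in the $\xi_{k+1}$'s and $S_n$ a submartingale in $\|V_{k+1}\|^2$), a \emph{jump event} $C_n=\{\|X_{k+1}-X_k\|\le \mathbf r/4\ \forall k\le n\}$, bounded by Markov's inequality and the (ABC) second moment. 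A short geometric lemma then shows $E_n\cap C_n\subset\Omega_{n+1}$: if $X_n$ is in the $\mathbf r/2$-tube, the jump is at most $\mathbf r/4$, and $f(X_{n+1})-l<s$ (forced by $E_n$ via the iterated bound $D_{n+1}\le \epsilon/2+\sqrt{R_n}+R_n$), then $X_{n+1}$ cannot lie in the annulus and hence stays in the tube. Your sublevel-set choice of $\cU$ can be made to work along the same lines, but you still need the explicit jump control and the $s>0$ barrier; without them the Doob bound alone does not close the argument.
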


We would like to emphasize that our result does not provide convergence rates in high probability. Instead, we restrict the probability space to "good" events (encoded in $\mathcal U$) that occur with high probability $1-\delta$. Conditioned on these events, we establish convergence with a rate that is independent of $\delta$. The dependence on $\delta$ instead manifests in the step size $\gamma_n$, which ensure that the iterations $(X_n)_{n\in\N}$ remain within $\cU$ with probability at least $\delta$. Notably, this is the only aspect of our convergence result that depends on $\delta$.

In the following, we only sketch the proof in the main part and 
provide the full proof 
in \cref{app:proof-localSGD}.
\begin{proof}[Sketch of proof]
   The proof is split into several intermediate steps outlined as follows:
\begin{itemize}
   \item First, we unify the gradient domination property around the set of local minima $\cX^\ast$ and obtain a radius $\mathbf r$ such that the unified gradient domination property is fulfilled in all open balls with radius $\mathbf r$ around $x^\ast\in\cX^\ast$ (Lemma~\ref{lem:unify-gradientdom-xast}). 
   \item Based on this we construct the two sets $\cU$, $\cU_1 \subseteq \R^d$ defined as neighborhoods of $\cX^\ast$ constructed such that the gradient domination property holds within this region,
   \begin{align*}
       \cU_1 &= \{ x\in\R^d\,: \, \inf_{x^\ast\in\cX^\ast}||x-x^\ast|| < \frac{\mathbf{r}}{2}, f(x)-l \leq \frac{\epsilon}{2}\}\\
       \cU &= \{  x\in\R^d\,: \, \inf_{x^\ast\in\cX^\ast}||x-x^\ast|| < \frac{\mathbf{r}}{2} \}
   \end{align*}
   and the events
   \[\Omega_n = \{X_k \in \cU \textrm{ for all } k \leq n\}\in\Omega \]
   such that $\Omega_\cU = \bigcap_{n} \Omega_n$ occurs with high probability. 
   This means, when starting in $\cU_1$ the gradient trajectory does remain in $\cU$ for all gradient steps with high probability.  
   \item We present a row of Lemmata to show that $\bbP(\Omega_n) \geq 1-\delta$ for all $n \in\N$, claim (i) of the Theorem. To show $\bbP(\Omega_n) \geq 1-\delta$ we construct the sets $C_n$ and $E_n$ defined in \cref{eq:h4} and \cref{def:E_n-x-ast} such that $E_n \cap C_n \subset \Omega_{n+1}$ (Lemma~\ref{lem:tilde-r-n-xast}) while Lemma~\ref{lem:aux-lemma-x_n-close-to-xast} is used to prove this claim. The sets $E_n$ are such that $f(X_n)$ remains close to $f^\ast$. We exploit the unified gradient domination property to construct the sets $E_n$ (Lemma~\ref{lem:bar_D_iteration_x_ast}) and derive a recursive inequality in Lemma~\ref{lem:tilde-r-n-xast} c) to prove that this event occurs with high probability (Lemma~\ref{lem:prob_E_n_lager_1-delta-xast}). The sets $C_n$ are such that $X_{n+1}$ remains close to $X_n$ and we exploit the finite variance assumption to show that these events occur with high probability (Lemma~\ref{lem:prob_hat-E_n_lager_1-delta}). 
   \item Finally, claim (ii) and (iii) are shown directly in the proof of \cref{thm:conv-SGD-local-PL-xast} at the end of \cref{app:proof-localSGD}, where we employ the uniform gradient domination in $\Omega_{\mathcal U}$.
\end{itemize}
\end{proof}

The main result concerning local gradient domination in $f^\ast$ is presented below and does not necessitate the existence of a local minimum or any stationary point. It is worth noting that the definition of local gradient domination in $f^\ast$ guarantees the gradient domination property for any $x$ with $f(x)$ close to $f^\ast$. Consequently, this definition ensures that functions satisfying this property cannot possess local minima or saddle points within this region. 
\begin{theorem}\label{thm:conv-SGD-local-PL-fast}
     Fix some tolerance level $\delta >0$. Suppose $f$ satisfies the local gradient domination property in $f^\ast$ from Definition~\ref{def:PL-defs} with $\beta\in[\frac12,1]$ and $\cB_{r}^\ast\subseteq\R^d$. Moreover, suppose within $\cB_{r}^\ast$ $f$ is $G$-Lipschitz continuous, Assumption~\ref{ass:L-smooth} \ref{ass:L-smooth-global} and Assumption~\ref{ass:ABC} hold true. Denote by $(X_n)_{n\in\N}$ the sequence generated by \cref{eq:SGD} using a step size $\gamma_n = \Theta(\frac{1}{n^{\theta}})$ for $ \theta\in (\frac{1}{2}, 1)$ and suppose that $\gg_n \leq \gamma_1$ for $\gamma_1$ 
     sufficiently small such that $\sum_{n=1}^{\infty} \gg_n^2 < \frac{\delta \epsilon}{2(G^2 C^2 + G^2 + C)}$ for some $\epsilon>0$ independent of $\delta$. Then, the following holds:
    \begin{enumerate}
        \item[(i)] There exist subsets $\cU$ and $\cU_1$ of $\R^d$ such that, if $X_1 \in\cU_1$ the event $\Omega_{\cU} = \{X_n \in\cU \textrm{ for all } n=1,2,\dots \}$ has probability at least $1-\delta$.
    \end{enumerate}
    Moreover, for any 
    \[\eta \in\begin{cases} 
    \left(\max\{2-2\gt, \frac{\gt+2\beta-2}{2\beta-1}\},1\right) &: \beta\in(\frac12,1]\\
    (2-2\gt,1) &:\beta=\frac12 
    \end{cases}\] 
    it holds that
        \begin{equation*}
        (ii)\ \ (f(X_n) - f^\ast ) \mathbf{1}_{\Omega_{\cU}} \in o\left(\frac{1}{n^{1-\eta}}\right),\ \text{a.s.}, \quad \text{and}\quad
        (iii)\ \ \E[(f(X_n) - f^\ast ) \mathbf{1}_{\Omega_{\cU}}] \in o\left(\frac{1}{n^{1-\eta}}\right).
        \end{equation*}
\end{theorem}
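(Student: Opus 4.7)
The strategy follows the architecture used for \cref{thm:conv-SGD-local-PL-xast} but is considerably simpler, because the local gradient domination hypothesis here is already phrased as a sublevel-set condition on $f$, so no unification step across a set of local minima is required. I would pick
\[
\cU = \cB_{r}^{\ast} = \{x\in\R^d: f(x)-f^\ast\leq r\} \quad\text{and}\quad \cU_1 = \{x\in\R^d: f(x)-f^\ast\leq r-\epsilon\},
\]
for some buffer $\epsilon\in(0,r)$ chosen independently of $\delta$, so that any trajectory starting in $\cU_1$ has margin $\epsilon$ in $f$-value before exiting $\cU$. On $\cU$ all three of the gradient domination, the global $L$-smoothness and the \cref{eq:ABC-condition} assumption are in force, and the $G$-Lipschitz bound converts one-step spatial displacements into one-step $f$-value displacements.

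For part (i), set $\Omega_n=\{X_k\in\cU: k\leq n\}$ so that $\Omega_\cU=\bigcap_n \Omega_n$, and follow the auxiliary-event blueprint of the sketch for \cref{thm:conv-SGD-local-PL-xast}. I would introduce $E_n=\{f(X_k)-f^\ast\leq r-\epsilon/2 \text{ for all } k\leq n\}$, which keeps trajectories strictly inside $\cU$, and $C_n=\{\gamma_k\|V_{k+1}(X_k)\|\leq \epsilon/(2G)\text{ for all }k\leq n\}$, which by $G$-Lipschitzness keeps single-step $f$-increments below $\epsilon/2$. Both events are $\cF_n$-measurable and one checks $E_n\cap C_n\subseteq \Omega_{n+1}$. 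The descent lemma combined with \cref{eq:ABC-condition} and the gradient domination, applied on the $\cF_n$-measurable event $\Omega_n$, produces a Robbins--Siegmund-type super-martingale whose one-step error is $O(\gamma_n^2)$; a maximal-type inequality for positive super-martingales then bounds $\bbP(E_n^c)$ by a constant times $\sum_{k\leq n}\gamma_k^2$. In parallel, the second-moment estimate in \cref{eq:ABC-condition}, combined with Markov's inequality and a union bound, controls $\bbP(C_n^c)$ by the same order with constants involving $G^2C^2+G^2+C$. The explicit step-size condition $\sum_n\gamma_n^2 < \delta\epsilon/(2(G^2 C^2 + G^2 + C))$ is precisely what is needed so that both deviations jointly sum to at most $\delta$, giving $\bbP(\Omega_\cU)=\lim_n\bbP(\Omega_n)\geq 1-\delta$.

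For parts (ii) and (iii), I would set $Y_n=(f(X_n)-f^\ast)\mathbf{1}_{\Omega_\cU}$. Using $\Omega_\cU\subseteq \Omega_n\in \cF_n$ to pull the indicator inside the conditional expectation and invoking the gradient domination on $\cU$, one recovers the same recursive super-martingale inequality as in the global setting,
\[
\E[Y_{n+1}\mid \cF_n] \leq \Bigl(1+\tfrac{LA\gamma_n^2}{2}\Bigr)Y_n - c_2\gamma_n Y_n^{2\beta} + \tfrac{LC\gamma_n^2}{2}.
\]
From here Lemma~\ref{lem:as_rate_extension_beta_allgemein} yields the almost-sure rate in (ii), and multiplying by $(n+1)^{1-\eta}$, taking expectations and invoking Lemma~\ref{lem:help-an-bn-cn} delivers the expected rate in (iii), exactly as in the final step of the proof of \cref{thm:global_convergencerate_SGD_beta_allgemein}.

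The main obstacle is part (i): $\Omega_\cU$ is not $\cF_n$-measurable for any finite $n$, so all probability estimates must be carried out on the increasing sequence $\Omega_n$ of $\cF_n$-measurable events and only combined in the limit. One must also split the $\delta$-budget carefully between $\bbP(E_n^c)$ and $\bbP(C_n^c)$, which is what forces the specific form of the constant $G^2C^2+G^2+C$ in the step-size bound. Once part (i) is established, parts (ii) and (iii) reduce to the global analysis via the indicator-pullback trick and are essentially routine.
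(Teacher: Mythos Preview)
Your high-level architecture is right, and parts (ii)--(iii) do indeed reduce to the global analysis once (i) is in hand; but your treatment of part (i) diverges from the paper's in a way that leaves a genuine gap.

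First, a structural simplification you miss: for the $f^\ast$ case the paper does \emph{not} use any $C_n$-type event at all. Because $\cU$ is a sublevel set of $f$ rather than a metric ball, one only needs to control $f(X_{n+1})$, and this is done directly via the descent inequality -- no separate control of the spatial step $\|X_{n+1}-X_n\|$ is required. The paper takes $\cU=\cB_{2\epsilon+\sqrt{\epsilon}}^\ast$ and $\cU_1=\cB_{\epsilon/2}^\ast$, and shows $E_n\subset\Omega_{n+1}$ directly.

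Second, and more importantly, your event $E_n=\{f(X_k)-f^\ast\leq r-\epsilon/2\text{ for all }k\leq n\}$ is not what the paper uses, and your proposed control of $\bbP(E_n^c)$ by ``a maximal-type inequality for positive super-martingales'' is the step that does not go through as stated. The process $D_n=f(X_n)-f^\ast$ is \emph{not} a super-martingale on $\Omega_n$: the recursion has both a multiplicative $(1+O(\gamma_n^2))$ factor and an additive $O(\gamma_n^2)$ error, so no standard maximal inequality applies directly to its running maximum. The paper's device is to unroll the descent inequality into
\[
D_{n+1}\mathbf{1}_{\Omega_n}\leq D_1\prod_{k}(1-\gamma_k^q c^2)+\tilde c\sum_k\gamma_k^{\frac{2\beta q-1}{2\beta-1}}+M_n+S_n,
\]
where $M_n$ is a zero-mean martingale and $S_n$ a nonnegative sub-martingale with summable increments, and then define $E_n=\{R_k<\epsilon\text{ for all }k\leq n\}$ with $R_n=M_n^2+S_n$. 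The recursion $\E[\tilde R_n]\leq\E[\tilde R_{n-1}]+\gamma_n^2[G^2C^2+G^2+C]-\epsilon\,\bbP(\tilde E_{n-1})$ for $\tilde R_n=R_n\mathbf{1}_{E_{n-1}}$ is what yields the probability bound after telescoping; the specific constant $G^2C^2+G^2+C$ arises from bounding $\E[\xi_{n+1}^2]\leq G^2(C^2+1)$ and $\E[\|V_{n+1}\|^2]\leq C$ in this recursion, not from any budget split between two events as you suggest. This $R_n$-construction is the missing ingredient in your sketch.
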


The outline of the proof is similar to the proof of \Cref{thm:conv-SGD-local-PL-xast}. We provide the full proof 
and more details on the sets $\cU$ and $\cU_1$ as well as on $\beta$ in \cref{app:proof-localSGD}.

\section{Application in the training of neural networks}
In supervised learning one aims to approximate an unknown model $\varphi:\R^{d_z}\to \R^{d_y}$ by a parametrized function $g_w:\R^{d_z}\to\R^{d_y}$ with parameter $w\in\R^{d_w}$. Given a family of training data $( (Z^{(m)},Y^{(m)}))_{m\in\N}$ generated as i.i.d.~samples from 
an unknown distribution $\mu_{(Z,Y)}$ one usually chooses the parameter $w\in\R^{d_w}$ by solving 
\begin{equation*}
    \min_{w\in\R^{d_w}}\ \mathbb E_{\mu_{(Z,Y)}}[\Phi(g_w(Z),Y)]\,,
\end{equation*}
where $\Phi:\R^{d_y}\times \R^{d_y}\to\R_+$ is a user specific data discrepancy. One popular choice of parametrizations are DNNs. We define a neural network of depth $L\in\N$ by the recursion
\[z_0:=z,\quad z_\ell = \sigma^{\otimes d_\ell}(A_\ell z_{\ell-1} + b_\ell),\ \ell=1,\dots,L-1,\quad g_w(z):= A_L z_{L-1} + b_{L}\,.\]
The weights $((A_\ell,b_\ell))_{\ell=1}^L$ of the DNN are collected in $w\in\cW:= \times_{\ell=1}^L (\R^{d_\ell\times d_{\ell-1}}\times \R^{d_\ell}) \simeq \R^{d_w}$, and $\sigma^{\otimes d}:\R^d\to\R^d$ describes the component-wise application of the activation function $\sigma:\R\to\R$. 

Provided that $\sigma$ and $\Phi$ are analytic, and $(Z,Y)$ are compactly supported $\R^{d_z}\times \R^{d_y}$-valued random variables,
then $f^{\mathrm{DNN}}:\R^{d_w}\to\R_+$ defined by $w\mapsto \E_{\mu_{(Z,Y)}}[\Phi(g_w(Z),Y)]$ is analytic \cite[Thm.~5.2]{dereich2021convergence}. 
By \citet[Thm.~II]{lojasiewicz1963propriete}, or \citet[§2, Thm.~2]{lojasiewicz1965}, it therefore satisfies local gradient domination in any stationary point. More precisely, for any stationary point $w_\ast$ there exist $\beta_{w_\ast} \in [\frac 12, 1]$ and $c_{w_\ast}>0$ such that \eqref{eq:loj} in Definition~\ref{def:PL-defs} is satisfied.
The local smoothness is inherently guaranteed by the fact that the loss function is analytic. To elaborate, every analytic function is infinitely differentiable, such that all derivatives remain locally bounded. Concerning Assumption~\ref{ass:ABC}, we assume that the underlying data in the supervised learning problem is compactly supported, ensuring that Assumption~\ref{ass:ABC} is always satisfied locally.

In our notation, the stochastic first order oracle takes the form 
\[V(w,(Z,Y)) = \nabla_w f^{\mathrm{DNN}}(w) + (\nabla_w \Phi(g_w(Z),Y)-\nabla_w f^{\mathrm{DNN}}(w))\,,\]
where we denote $\zeta = (Z,Y)$ and the iterative SGD then reads as 
\[W_{n+1} = W_n - \gamma_n \nabla_w \Phi(g_{W_n}(Z_{n+1}),Y_{n+1})\]
with $\zeta_n=(Z_n,Y_n)$ independent and identical distributed. The iterative scheme of SHB can be written similarly. Note that this scenario also includes the empirical risk minimization of $\frac{1}M\sum_{m=1}^M \Phi(g_w(z^{(m)}),y^{(m)})$ when $\zeta =(Z,Y)\sim \frac{1}M\sum_{m=1}^M \delta_{(z^{(m)},y^{(m)})}$, see \cref{ex:example} for more details. In this case, Assumption~\ref{ass:ABC} is satisfied even with $A = B = 0$. Thus, the following local convergence is a direct consequence of Theorem~\ref{thm:conv-SGD-local-PL-xast}.

\begin{corollary}\label{cor:DNN}
    Let $\delta >0$. Denote by $(W_n)_{n\in\N}$ the sequence generated by SGD with $w \mapsto \nabla_w f^{\mathrm{DNN}}(w)$ as objective function, step size $\gg_n \in \Theta(n^{-\gt})$ for $\gt\in(\frac{1}{2},1)$, and assume that $f^{\mathrm{DNN}}$ is analytic. Let $\mathcal W^{\ast}$ be an isolated compact set  of local minima with level $l=f^{\mathrm{DNN}}(w^\ast)$ for all $w^\ast\in \mathcal W^\ast$ and suppose Assumption~\ref{ass:ABC} is satisfied within $\mathcal W^{\ast}$. Suppose that $\gg_n\le \gg_1$ for sufficiently small $\gamma_1$ (depending on $\delta$), then there exist two subsets $\cU, \cU_1$ of $\R^{d_w}$ such that $W_1 \in \cU_1$ implies that the event $\Omega_\cU =\{W_n \in \cU, \textrm{ for all } n\geq 1\}$ has  probability at least $1-\delta$. Moreover, there exists $\beta \in [\frac 12, 1]$ such that for any 
    \[\eta \in\begin{cases} 
    \left(\max\{2-2\gt, \frac{\gt+2\beta-2}{2\beta-1}\},1\right) &: \beta\in(\frac12,1]\\
    (2-2\gt,1) &:\beta=\frac12 
    \end{cases}\] 
    it holds that $|f^{\mathrm{DNN}}(W_n) - l | \mathbf{1}_{\Omega} \in o\big( n^{\eta-1}\big)$ almost surely and in expectation.
\end{corollary}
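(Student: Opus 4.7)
The plan is to deduce the corollary as a direct application of \Cref{thm:conv-SGD-local-PL-xast} with $f=f^{\mathrm{DNN}}$ and $\mathcal{X}^\ast = \mathcal{W}^\ast$. My task reduces to checking that each hypothesis of that theorem is met in the present setting, all of which follow essentially from the analyticity of $f^{\mathrm{DNN}}$ combined with the assumptions already imposed in the statement.

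First, I would verify the smoothness assumptions. Since $f^{\mathrm{DNN}}$ is analytic, it is in particular $C^\infty$, so its partial derivatives of every order are bounded on every compact subset of $\R^{d_w}$. Consequently, $\nabla f^{\mathrm{DNN}}$ is locally Lipschitz continuous, which verifies \Cref{ass:L-smooth} \ref{ass:L-smooth-local}, and the same reasoning shows that $f^{\mathrm{DNN}}$ itself is locally $G$-Lipschitz continuous on a neighbourhood of the compact set $\mathcal{W}^\ast$. \Cref{ass:ABC} is assumed explicitly, and the step size condition $\sum_n \gamma_n^2 < \tfrac{\delta \epsilon}{2(G^2 C^2 + G^2 + C)}$ required by \Cref{thm:conv-SGD-local-PL-xast} is exactly what is encoded in the phrase ``for sufficiently small $\gamma_1$ (depending on $\delta$)''.

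Next, I would establish the local gradient domination property around every point of $\mathcal{W}^\ast$. Since $f^{\mathrm{DNN}}$ is analytic and each $w^\ast\in\mathcal{W}^\ast$ is a local minimum (hence stationary), the classical Łojasiewicz inequality \citep{lojasiewicz1963propriete,lojasiewicz1965} provides $\beta_{w^\ast}\in[\tfrac12,1]$, a radius $r_{w^\ast}>0$, and a constant $c_{w^\ast}>0$ such that
\[
\|\nabla f^{\mathrm{DNN}}(w)\| \ge c_{w^\ast}\, |f^{\mathrm{DNN}}(w) - l|^{\beta_{w^\ast}}
\qquad\text{for all } w\in \cB_{r_{w^\ast}}(w^\ast),
\]
which is exactly the local gradient domination property in $w^\ast$ from \Cref{def:PL-defs}.

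With all hypotheses in place, I would invoke \Cref{thm:conv-SGD-local-PL-xast} directly: parts (i), (ii) and (iii) of that theorem yield respectively the existence of $\cU$ and $\cU_1$ together with the probability bound $\bbP(\Omega_\cU)\geq 1-\delta$, the almost sure rate, and the rate in expectation, all with the same admissible range of $\eta$ stated in the corollary. The one subtle point, which is the only step that requires more than bookkeeping, is the passage from the pointwise Łojasiewicz exponents $\beta_{w^\ast}$ to a single uniform exponent $\beta\in[\tfrac12,1]$ valid on a neighbourhood of the whole compact set $\mathcal{W}^\ast$; however this uniformization is exactly what is carried out inside the proof of \Cref{thm:conv-SGD-local-PL-xast} (the unification lemma referenced in its proof sketch), so it does not need to be redone here. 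This delivers the claimed rate $|f^{\mathrm{DNN}}(W_n)-l|\,\mathbf{1}_{\Omega_\cU}\in o(n^{\eta-1})$ almost surely and in expectation, completing the proof.
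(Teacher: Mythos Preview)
Your proposal is correct and follows essentially the same approach as the paper: the paper states that the corollary is ``a direct consequence of Theorem~\ref{thm:conv-SGD-local-PL-xast}'' after observing that analyticity yields both the local Lipschitz properties (since all derivatives are locally bounded) and the local gradient domination at each stationary point via the classical \L{}ojasiewicz inequality, which is exactly the verification you carry out. Your additional remark that the uniformization of the exponents $\beta_{w^\ast}$ to a single $\beta$ is handled inside the proof of Theorem~\ref{thm:conv-SGD-local-PL-xast} (via Lemma~\ref{lem:unify-gradientdom-xast}) is a helpful clarification that the paper leaves implicit.
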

In words: If the iterates of SGD reach a certain area around a local minimum, they are likely to become trapped in that region with high probability, provided that the step size is sufficiently small. 
This results shows that, under very general conditions, SGD converges to local minima and furthermore quantifies the convergence speed.

\begin{remark}
    One may similarly apply \cref{thm:conv-SGD-local-PL-fast} in the training of DNNs to derive convergence towards a global minimum with high probability provided that the initial loss $f^{\mathrm{DNN}}(X_1)$ and initial step size $\gamma_1$ are sufficiently small.
\end{remark}

\section{Application in Reinforcement Learning}\label{sec:application-RL}
Recent results showed that choosing the tabular softmax parametrization in policy gradient (PG) algorithms results in objective functions which fulfill a non-uniform gradient domination property \citep{mei2020b,yuan2022general, klein2023stationarity}. While convergence of PG for exact gradients is well understood, convergence rates for stochastic PG are rare and mostly require very large batch sizes \citep{ding2022beyondEG,klein2023stationarity,ding2023local}. \medskip

Let $(\cS,\cA,\rho,r,p)$ be a discounted MDP with finite state space $\cS$, finite action space $\cA$ and discount factor $\rho\in [0,1)$.  
Further, $r:\cS \times \cA \to \R_+$ is the positive expected reward function and $p(s^\prime |s,a)$ describes the transition probability from state $s$ to $s^\prime$ under action $a$. As in \citet{mei2020b} we assume that the rewards are bounded in $[0,1]$. Consider the stationary tabular softmax policy for parameter $w\in\R^{|\cS||\cA|}$, i.e.
\[ \pi_w(a|s) = \frac{\exp(w(s,a))}{\sum_{a^\prime \in\cA_{s}}\exp(w(s,a^\prime))}, \quad \forall s \in \cS,\, a\in\cA.\]
In the following we consider entropy regularized PG jointly with vanilla PG by setting $\lambda =0$ when no regularization is considered.
Then, for an initial state distribution $\mu$, the value function under the softmax parametrization is given by 
  \[  V_\lambda^{\pi_w}(\mu) =  \E_\mu^{\pi_w}\Big[ \sum_{t=0}^\infty \rho^t r(S_t,A_t) \Big] - \lambda \E_\mu^{\pi_w}\Big[ \sum_{t=0}^\infty \rho^t \log(\pi_w(A_t|S_t) \Big] \]
and denote by $V_\lambda^\ast(\mu)$ the global optimum and by $\pi^\ast$ the optimal policy. If $\lambda >0$ there exists a unique optimal policy which can be represented by the tabular softmax parametrization \cite{nachum2017bridging}. Therefore, there must exist a continuum of optimal parameters $w^\ast$, such that $V_\lambda^{\pi_{w^\ast}}(\mu) = V_\lambda^\ast(\mu)$; see also the discussion in \cite{ding2022beyondEG}. If $\lambda =0$ no such parameters exists.

In order to maximize the objective we use stochastic gradient ascent. 
For $\lambda=0$ this is called stochastic policy gradient method or REINFORCE. Note that e.g., \citet{Zhang2020} provide a stochastic first order oracle which meets the conditions required in Assumption~\ref{ass:ABC}. 
For $\lambda >0$, a stochastic gradient estimator that satisfies Assumption~\ref{ass:ABC} with $A=B=0$ is presented in \citet[Eq. (4)]{ding2023local}. 
In both cases a non-uniform PL-inequality holds \citep[Lem. 8, Lem. 15]{mei2020b}: For every $w \in \R^{|\cS|\times|\cA|}$ it holds that 
\begin{align*}
    \lVert \nabla_w V_\lambda^{\pi_w}(\mu)\rVert_2 \geq c_\lambda(w)^{x} \left[ V_\lambda^\ast(\mu) -V_\lambda^{\pi_w} (\mu)\right]^{x},
\end{align*}
with $x=1$ if $\lambda =0$ and $x= \frac 12$ if $\lambda >0$ and
\begin{align}
    c_\lambda(w) = \begin{cases} \frac{\min_{s\in\cS} \pi_w(a^\ast(s)|s)}{\sqrt{|\cS|}(1-\rho) }\Big\lVert\frac{d_\mu^{\pi^\ast}}{\mu} \Big\rVert_\infty^{-1}, &\lambda=0, \\
    \frac{2\lambda}{|\cS|(1-\rho)} \min_s \mu(s) \min_{s,a} \pi_w(a|s)^2 \Big\lVert\frac{d_\mu^{\pi^\ast}}{\mu} \Big\rVert_\infty^{-1}, &\lambda>0.
\end{cases}
\end{align}
Here $a^\ast(s)$ denotes the best possible action in state $s$. W.l.o.g. we assume that $a^\ast(s)$ is unique, otherwise one can simply consider the maximum over all possible best actions, i.e. replace $\min_{s\in\cS} \pi_w(a^\ast(s)|s)$ with $\min_{s\in\cS}\min_{a \textrm{ is optimal action in } s} \pi_w(a|s)$. 
We prove that this implies a local gradient domination property with $\beta=1$ (weak PL) for $\lambda =0$ and $\beta=\frac 12$ (strong PL) for $\lambda>0$. 
\begin{proposition}\label{lem:localPL}
    There exists $r,c>0$ such that for all $w\in \cB_{r,\lambda}^\ast = \{w: V_\lambda^\ast(\mu) - V_\lambda^{\pi_{w}}(\mu) \leq r \}$ it holds that $c_\lambda(w) \geq c$. 
\end{proposition}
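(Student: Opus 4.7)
The plan is to split into the two regimes $\lambda=0$ and $\lambda>0$, since the $w$-dependence of $c_\lambda(w)$ enters only through $\min_{s\in\cS}\pi_w(a^\ast(s)|s)$ in the unregularized case and through $\min_{s,a}\pi_w(a|s)^2$ in the regularized case. In both regimes the task reduces to showing that a small value gap $V_\lambda^\ast(\mu)-V_\lambda^{\pi_w}(\mu)\leq r$ forces these softmax probabilities to stay uniformly bounded away from zero; plugging that lower bound back into the definition of $c_\lambda(w)$ then yields the constant $c$. I would also assume throughout that $\mu$ is fully supported, so that $\mu_{\min}:=\min_{s\in\cS}\mu(s)>0$.

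For $\lambda=0$ I would introduce the uniform optimality gap $\Delta^\ast:=\min_{s\in\cS}\min_{a\neq a^\ast(s)}(V^\ast(s)-Q^\ast(s,a))$, which is strictly positive by the assumed uniqueness of $a^\ast(s)$. Using the standard bound $Q^{\pi_w}(s,a)\leq Q^\ast(s,a)$ together with the Bellman decomposition $V^{\pi_w}(s)=\sum_a \pi_w(a|s)Q^{\pi_w}(s,a)$, and isolating the contribution of $a^\ast(s)$, one obtains
\[
V^\ast(s)-V^{\pi_w}(s)\;\ge\;(1-\pi_w(a^\ast(s)|s))\,\Delta^\ast.
\]
Averaging against $\mu$ and using $\mu(s)\geq\mu_{\min}$ gives $V^\ast(\mu)-V^{\pi_w}(\mu)\ge\mu_{\min}\Delta^\ast(1-\min_{s}\pi_w(a^\ast(s)|s))$, so picking $r<\tfrac{1}{2}\mu_{\min}\Delta^\ast$ forces $\min_{s}\pi_w(a^\ast(s)|s)\geq 1/2$ on $\cB_{r,0}^\ast$, from which the $\lambda=0$ claim follows.

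For $\lambda>0$ the entropy-regularized optimal policy is the explicit softmax $\pi^\ast_\lambda(a|s)=\exp(Q^\ast_\lambda(s,a)/\lambda)/\sum_{a'}\exp(Q^\ast_\lambda(s,a')/\lambda)$, which is uniformly positive with $p_{\min}:=\min_{s,a}\pi^\ast_\lambda(a|s)>0$. The second step is to upgrade value closeness to uniform policy closeness by means of a quantitative bound of the form
\[
V^\ast_\lambda(\mu)-V^{\pi_w}_\lambda(\mu)\;\ge\;\kappa_\lambda\,\max_{s\in\cS}\lVert\pi_w(\cdot|s)-\pi^\ast_\lambda(\cdot|s)\rVert_\infty^2
\]
for some $\kappa_\lambda>0$ depending on $\lambda,\rho,\mu_{\min}$. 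This combines the entropic performance-difference lemma, the $\lambda$-strong concavity of the Shannon entropy on the simplex, and Pinsker's inequality to convert the resulting KL divergence into an $\ell^\infty$ bound on policies. Then $r\leq\kappa_\lambda(p_{\min}/2)^2$ gives $\lVert\pi_w(\cdot|s)-\pi^\ast_\lambda(\cdot|s)\rVert_\infty\leq p_{\min}/2$ uniformly in $s$, hence $\min_{s,a}\pi_w(a|s)\geq p_{\min}/2$.

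The main obstacle is the $\lambda>0$ case: producing the quantitative value-to-policy inequality above. Because $\pi^\ast_\lambda$ is a fixed point of the soft Bellman operator rather than a pointwise maximizer of $V^{\pi}_\lambda$ at each $s$, strong concavity cannot be applied state by state directly; the argument has to be propagated through the soft Bellman recursion, and the factors $(1-\rho)^{-1}$ and $\lVert d_\mu^{\pi^\ast}/\mu\rVert_\infty$ must be tracked to land in the final constant $c$. The cleanest route is to reuse the regularized performance-difference computation appearing inside the proof of the non-uniform PL inequality of \citet[Lem.~15]{mei2020b}, which already produces the KL-type bound needed to invoke Pinsker.
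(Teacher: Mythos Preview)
Your proposal is correct and follows essentially the same approach as the paper: in both regimes you reduce the claim to showing that a small value gap forces the relevant softmax probabilities to stay bounded away from zero, and for $\lambda=0$ your optimality-gap argument via $Q^{\pi_w}\le Q^\ast$ is exactly the paper's. For $\lambda>0$ the paper shortcuts your sketched derivation by invoking \citet[Lem.~12]{ding2023local} directly, which already states $|\pi_w(a|s)-\pi^\ast(a|s)|\le\sqrt{2(V_\lambda^\ast(\mu)-V_\lambda^{\pi_w}(\mu))\ln 2/(\lambda\,\mu_{\min})}$, together with the explicit lower bound $\min_{s,a}\pi^\ast(a|s)\ge\exp(-1/((1-\rho)\lambda))$ from \citet[Thm.~1]{nachum2017bridging}; your entropic performance-difference plus Pinsker route is precisely how that lemma is proved, so the two arguments coincide at the level of ideas.
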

The proof of this proposition is given in \Cref{app:proof-localPL_RL}.

As the objective function $w \mapsto V_\lambda^{\pi_w} (\mu) $ is smooth and Lipschitz  on $\R^{|\cS||\cA|}$ (see \citet[Lem. E.1]{yuan2022general} for $\lambda =0$ and \citet{ding2023local} for $\lambda>0$), all assumptions in Theorem~\ref{thm:conv-SGD-local-PL-fast} are satisfied and we obtain the following result.
\begin{corollary}\label{cor:discounted-convergence}
    Let $\delta >0$. Denote by $(W_n)_{n\in\N}$ the sequence generated by SGD with $w \mapsto-V_\lambda^{\pi_w}(\mu)$ as objective function, step size $\gg_n \in \Theta(n^{-\gt})$ for $\gt\in(\frac{1}{2},1)$ and suppose $\gg_n\le \gg_1$ for sufficiently small $\gamma_1$ (depending on $\delta$). Then, there exist two subsets $\cU, \cU_1$ of $\R^{|\cS||\cA|}$ such that $W_1 \in \cU_1$ implies that the event $\Omega_\cU =\{W_n \in \cU, \textrm{ for all } n\geq 1\}$ has  probability at least $1-\delta$. Moreover, for any 
    \begin{equation*} 
    \eta \in\begin{cases}
        (\max\{2-2\gt,\gt\},1),& \text{if}\ \lambda=0, \\ (2-2\gt,1), &\text{if}\ \lambda>0
    \end{cases}
    \end{equation*}
    it holds that $(V_\lambda^\ast(\mu) - V_\lambda^{\pi_{W_n}}(\mu) ) \mathbf{1}_{\Omega} \in o\big( n^{\eta-1}\big)$  almost surely and in expectation.
\end{corollary}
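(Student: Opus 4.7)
The plan is to reduce the corollary to a direct application of \Cref{thm:conv-SGD-local-PL-fast} to the function $f(w) = V_\lambda^\ast(\mu) - V_\lambda^{\pi_w}(\mu)$, which has global infimum $f^\ast = 0$. What needs to be checked is that all hypotheses of \Cref{thm:conv-SGD-local-PL-fast} hold for this $f$ in some neighbourhood of $f^\ast$, with the correct value of $\beta$ for each regime.

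First I would identify $\beta$ and the local gradient domination region. The non-uniform PL inequality from \citet[Lem.~8, Lem.~15]{mei2020b} recalled just before \Cref{lem:localPL} gives, for every $w\in\R^{|\cS||\cA|}$, the bound $\|\nabla f(w)\|\ge c_\lambda(w)^x (f(w)-f^\ast)^x$ with $x=1$ for $\lambda=0$ and $x=\tfrac12$ for $\lambda>0$. By \Cref{lem:localPL}, there exist $r,c>0$ such that $c_\lambda(w)\ge c$ on $\cB_{r,\lambda}^\ast=\{w: f(w)-f^\ast\le r\}$. Combining these gives $\|\nabla f(w)\|\ge c^x (f(w)-f^\ast)^x$ for every $w\in\cB_{r,\lambda}^\ast$, which is exactly the local gradient domination in $f^\ast$ from \Cref{def:PL-defs} with $\beta=1$ when $\lambda=0$ and $\beta=\tfrac12$ when $\lambda>0$.

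Next I would verify the remaining hypotheses of \Cref{thm:conv-SGD-local-PL-fast} on the set $\cB_{r,\lambda}^\ast$. Global $L$-smoothness and Lipschitzness of $w\mapsto V_\lambda^{\pi_w}(\mu)$ (and hence of $f$) are provided by \citet[Lem.~E.1]{yuan2022general} for $\lambda=0$ and by \citet{ding2023local} for $\lambda>0$; these pass to $f$ with the same constants. \Cref{ass:ABC} for the stochastic first-order oracle is supplied by \citet{Zhang2020} for $\lambda=0$ and by \citet[Eq.~(4)]{ding2023local} (with $A=B=0$) for $\lambda>0$. Thus every hypothesis of \Cref{thm:conv-SGD-local-PL-fast} is met.

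Finally I would apply \Cref{thm:conv-SGD-local-PL-fast}: for sufficiently small $\gamma_1$ (depending on $\delta$ through the summability condition $\sum_n\gamma_n^2 < \tfrac{\delta\epsilon}{2(G^2C^2+G^2+C)}$), there exist the required sets $\cU,\cU_1\subset\R^{|\cS||\cA|}$ such that $W_1\in\cU_1$ implies $\bbP(\Omega_\cU)\ge 1-\delta$, and on $\Omega_\cU$ the theorem gives $(f(W_n)-f^\ast)\mathbf{1}_{\Omega_\cU}\in o(n^{\eta-1})$ both almost surely and in expectation for any admissible $\eta$. Substituting the two values of $\beta$ into the range in \Cref{thm:conv-SGD-local-PL-fast} yields the two cases stated in the corollary: for $\lambda>0$ the condition $\eta>\max\{2-2\theta,\tfrac{\theta+2\beta-2}{2\beta-1}\}$ reduces to $\eta>2-2\theta$ since $\beta=\tfrac12$ lies in the boundary case, and for $\lambda=0$ with $\beta=1$ it reduces to $\eta>\max\{2-2\theta,\theta\}$. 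The only subtle point, and the main thing to double check, is the transfer of the non-uniform PL constant to a uniform one via \Cref{lem:localPL} on the sublevel set $\cB_{r,\lambda}^\ast$; everything else is a direct invocation of the abstract theorem.
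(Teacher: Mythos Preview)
Your proposal is correct and follows essentially the same approach as the paper: verify the local gradient domination in $f^\ast$ via \Cref{lem:localPL}, check smoothness, Lipschitzness, and \Cref{ass:ABC} from the cited references, and then invoke \Cref{thm:conv-SGD-local-PL-fast} with $\beta=1$ for $\lambda=0$ and $\beta=\tfrac12$ for $\lambda>0$. The paper's own argument is precisely this reduction, stated in the paragraph preceding the corollary.
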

In words: If the (regularized) stochastic policy gradient algorithm is started close enough to the optimum a nearly $o(n^{-\frac{1}{3}})$ ($o(n^{-1})$ respectively) almost sure rate of convergence can be obtained by choosing $\gt= \frac{2}{3}$ ($\gt$ close to $1$ respectively) This is in contrast to $o(n^{-1})$ (linear convergence) known in (regularized) policy gradient with access to exact gradients.
\begin{remark}
    Note that $r$ and $c$ in Lemma~\ref{lem:localPL} can be explicitly chosen (see Remark~\ref{rem:c-r-depend-on-alpha}). Hence, one can choose the neighbourhoods $\cU$ and $\cU_1$ w.r.t. $r$ as in \cref{eq:B_epsilon_ast} and Lemma~\ref{lem:tilde-r-n} in \cref{app:proof-localSGD} to find an explicit neighbourhood $\cU_1$ as condition for initialization.
\end{remark}

Our convergence result in \Cref{cor:discounted-convergence} extends the local convergence analysis of stochastic policy gradient under entropy regularization from \citet{ding2023local}. Like their work, our analysis builds on \citet{mertikopoulos2020sure}. However, while \citet{ding2023local} require an increasing batch size sequence to ensure convergence to the regularized optimum, we achieve local convergence for both the unregularized and entropy-regularized settings without this requirement. For $\lambda > 0$, our method attains convergence arbitrarily close to $o(\frac{1}{n})$ that matches the rate obtained by \citet{ding2023local} while avoiding the need to increase batch sizes. Finally, we highlight that our local convergence holds almost surely on an event with high probability.


\paragraph{Acknowledgements} Sara Klein thankfully acknowledges the funding support by the Hanns-Seidel-Stiftung e.V. and is grateful to the DFG RTG1953 ”Statistical Modeling of Complex Systems and Processes” for funding this research.
Waïss Azizian was supported in part by the French National Research Agency (ANR) in the
framework of the PEPR IA FOUNDRY project (ANR-23-PEIA-0003) and MIAI@Grenoble Alpes (ANR-19-P3IA-0003).

\bibliography{SGD-paper}
\bibliographystyle{plainnat}


\newpage

\appendix

\section{Auxiliary Convergence Theorems}
In the following section, we provide two specific convergence theorems used to prove almost sure convergence (Lemma~\ref{cor:RS_corollary}) as well as convergence in expectation (Lemma~\ref{lem:help-an-bn-cn}). The former one is a direct consequence of the well-known Robbins-Siegmund theorem, provided here for completeness. 
\begin{theorem}[Theorem~1 in \cite{RS1971}]\label{app:thm:RS}
    Let $(\Omega,\mathcal F,(\mathcal F_n)_{n\in\mathbb N},\mathbb P)$ be a filtered probability space, $(Z_n)_{n\in\mathbb N}$, $(A_n)_{n\in\mathbb N}$, $(B_n)_{n\in\mathbb N}$ and $(C_n)_{n\in\mathbb N}$ be non-negative and adapted stochastic processes with
    \[\sum_{n=1}^\infty A_n<\infty \quad \text{and} \quad \sum_{n=1}^\infty B_n <\infty \]
almost surely. Suppose that for each $n\in\mathbb N$ the recursion
    \[ \mathbb E[Z_{n+1} \mid \mathcal F_n]\le (1+A_n) Z_n + B_n - C_n\]
    is satisfied, then (i) there exists an almost surely finite random variable $Z_\infty$ such that $Z_n \to Z_\infty$ almost surely as $n\to\infty$ and (ii) $\sum_{n=1}^\infty C_n<\infty$ almost surely.
\end{theorem}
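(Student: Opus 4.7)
The plan is to reduce this hybrid recursion to the a.s.~convergence of a non-negative super-martingale, apply Doob's martingale convergence theorem, and then unravel. Two technical complications drive the strategy: the multiplicative prefactor $(1+A_n)$, which I absorb through a pathwise rescaling, and the fact that $(B_n)$ is only summable almost surely (not in $L^1$), which will force a localization argument.

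First I rescale. Define $M_n := \prod_{k=1}^{n-1}(1+A_k)$, which is $\mathcal F_n$-adapted and converges a.s.~to a finite $M_\infty \geq 1$ because $\sum_n A_n < \infty$ a.s.~yields $\sum_n \log(1+A_n) < \infty$ a.s. Dividing the recursion by $M_{n+1} = (1+A_n)M_n$ and setting $\tilde Z_n := Z_n/M_n$, $\tilde B_n := B_n/M_{n+1}$, $\tilde C_n := C_n/M_{n+1}$ gives the additive recursion
\[
\mathbb E[\tilde Z_{n+1}\mid \mathcal F_n] \le \tilde Z_n + \tilde B_n - \tilde C_n,
\]
with $\sum_n \tilde B_n \le \sum_n B_n < \infty$ a.s., since $M_{n+1} \ge 1$.

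Next I construct a super-martingale. A direct computation shows that
\[
U_n := \tilde Z_n + \sum_{k<n}\tilde C_k - \sum_{k<n}\tilde B_k
\]
satisfies $\mathbb E[U_{n+1}\mid \mathcal F_n] \le U_n$. It is bounded below by $-\sum_{k=1}^\infty \tilde B_k$, which is a.s.~finite but not a priori integrable. I would localize via the stopping times $\tau_m := \inf\{n : \sum_{k\le n}B_k > m\}$ (well-defined because $(B_k)$ is adapted), so that $U^-_{n\wedge \tau_m} \le m$ pointwise and hence is integrable, while $\tau_m \uparrow \infty$ a.s. Doob's super-martingale convergence theorem applied to the stopped process $(U_{n\wedge \tau_m})_n$ gives a.s.~convergence of $U_n$ on the event $\{\tau_m = \infty\}$; taking $m\to\infty$ yields a.s.~convergence of $U_n$ on a set of full measure.

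Finally I unravel. Since $\sum_{k<n}\tilde B_k$ converges a.s., the partial sum $\tilde Z_n + \sum_{k<n}\tilde C_k$ must also converge a.s.; as both summands are non-negative and the second is monotone, this forces $\tilde Z_n \to \tilde Z_\infty$ a.s.~with $\tilde Z_\infty$ finite and $\sum_n \tilde C_n < \infty$ a.s. Multiplying back by $M_n \to M_\infty \in [1,\infty)$ yields $Z_n \to M_\infty \tilde Z_\infty =: Z_\infty$ a.s.~with $Z_\infty$ finite, proving (i), and $\sum_n C_n = \sum_n M_{n+1}\tilde C_n \le M_\infty \sum_n \tilde C_n < \infty$ a.s., proving (ii). The main obstacle I anticipate is the measurability and integrability bookkeeping in the localization step, specifically ensuring that the limits obtained on each $\{\tau_m = \infty\}$ are consistent and patch together on a single event of full measure; the rest amounts to a clean super-martingale argument and pointwise manipulation.
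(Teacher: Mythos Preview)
The paper does not prove this statement; Theorem~\ref{app:thm:RS} is merely quoted from \cite{RS1971} and used as a black box, so there is no in-paper argument to compare your proposal against.

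For what it is worth, your route---divide through by $M_n=\prod_{k<n}(1+A_k)$ to kill the multiplicative factor, form the compensated process $U_n=\tilde Z_n+\sum_{k<n}\tilde C_k-\sum_{k<n}\tilde B_k$, localize via $\tau_m$ to force a uniform lower bound, apply Doob, and then unravel---is exactly the classical Robbins--Siegmund argument. One detail to tighten: your stopping time $\tau_m$ controls $U_{n\wedge\tau_m}^-$, but the super-martingale property still requires $U_{n\wedge\tau_m}$ to be integrable, i.e.\ $\mathbb E[U_{n\wedge\tau_m}^+]<\infty$. Since $U_n^+\le \tilde Z_n+\sum_{k<n}\tilde C_k$ and neither $\tilde Z_n$ nor the $\tilde C_k$ are assumed integrable a priori, you should either (a) additionally localize on $Z_n$ (e.g.\ stop also when $Z_n>m$), or (b) argue inductively from the recursion that on $\{\tau_m=\infty\}$ the stopped quantities have finite expectation. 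This is the integrability bookkeeping you flagged yourself; once handled, the rest of your outline is correct.
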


\begin{lemma}\label{cor:RS_corollary}
Let $(\Omega,\mathcal F,(\mathcal F_n)_{n\in\mathbb N},\mathbb P)$ be a filtered probability space, $(Y_n)_{n\in\mathbb N}$, $(a_n)_{n\in\mathbb N}$, $(b_n)_{n\in\mathbb N}$ and $(r_n)_{n\in\mathbb N}$ be non-negative and adapted stochastic processes with 
\[\sum_{n=1}^\infty a_n = \infty, \quad \sum_{n=1}^\infty b_n <\infty \quad \text{and}\quad r_n>0  \]
almost surely. Suppose that for each $n\in\mathbb N$ the recursion
\[
\mathbb E[r_{n+1} Y_{n+1} \mid \mathcal F_n] \le (1-a_n)r_n Y_n + b_n
\]
is satisfied, then we have $r_n Y_n \to 0$ almost surely as $n\to \infty$. 
\end{lemma}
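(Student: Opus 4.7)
The plan is to reduce the statement directly to \cref{app:thm:RS} by treating $Z_n := r_n Y_n$ as the process to which Robbins--Siegmund applies, and then exploiting the conclusion $\sum_n a_n Z_n < \infty$ to rule out a positive limit.

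First, observe that $Z_n = r_n Y_n$ is non-negative and $\mathcal F_n$-adapted, and that the hypothesis rewrites as
\[
\mathbb E[Z_{n+1}\mid \mathcal F_n] \le (1 + 0)\,Z_n + b_n - a_n Z_n\,.
\]
Since $a_n, b_n, Z_n \ge 0$ and $\sum_n b_n < \infty$ almost surely, \cref{app:thm:RS} (applied with $A_n = 0$, $B_n = b_n$, $C_n = a_n Z_n$) yields two conclusions: there exists an almost surely finite random variable $Z_\infty \ge 0$ such that $Z_n \to Z_\infty$ almost surely, and
\[
\sum_{n=1}^\infty a_n Z_n < \infty \quad \text{almost surely.}
\]

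Second, I would show $Z_\infty = 0$ almost surely by contradiction on the exceptional event. Fix $\varepsilon > 0$ and consider the event $E_\varepsilon = \{Z_\infty > \varepsilon\}$. On $E_\varepsilon$, the almost sure convergence $Z_n \to Z_\infty$ implies that there exists a (random) $N$ such that $Z_n \ge \varepsilon/2$ for all $n \ge N$; combined with $\sum_n a_n = \infty$ almost surely, this forces
\[
\sum_{n=N}^\infty a_n Z_n \ge \frac{\varepsilon}{2}\sum_{n=N}^\infty a_n = \infty
\]
on $E_\varepsilon$, contradicting the almost sure finiteness of $\sum_n a_n Z_n$. Hence $\mathbb P(E_\varepsilon) = 0$ for every $\varepsilon > 0$, and sending $\varepsilon \downarrow 0$ along a countable sequence gives $Z_\infty = 0$ almost surely, i.e.\ $r_n Y_n \to 0$ almost surely.

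There is no real obstacle here beyond the careful bookkeeping of null sets: the two almost sure statements produced by \cref{app:thm:RS} together with the almost sure divergence of $\sum_n a_n$ must be intersected, and the resulting event of full probability is the one on which the contradiction argument runs pointwise. The whole proof is essentially a one-line reformulation followed by a standard "divergent weights force the limit to be zero" step.
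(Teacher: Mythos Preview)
Your proposal is correct and follows essentially the same approach as the paper: both reduce to Robbins--Siegmund with $Z_n = r_n Y_n$, $A_n = 0$, $B_n = b_n$, $C_n = a_n Z_n$, obtain convergence of $Z_n$ together with $\sum_n a_n Z_n < \infty$, and then conclude $Z_\infty = 0$. The only cosmetic difference is in the last step: the paper notes $\liminf_n Z_n = 0$ (since $\sum a_n = \infty$) and uses that limit and liminf coincide for convergent sequences, whereas you run the equivalent $\varepsilon$-contradiction argument directly.
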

\begin{proof}
    We define $Z_n:= r_n Y_n$, $B_n:=b_n$ and $C_n := a_n r_n Y_n$ such that
    \[ \mathbb E[Z_{n+1}\mid \mathcal F_n] \le Z_n - C_n + B_n\]
    for $n\in\mathbb N$. Using Lemma~\ref{app:thm:RS} we observe that there exists $Z_\infty$ almost surely finite such that $Z_n = r_nY_n \to Z_\infty$ almost surely as $n\to\infty$. Moreover, we obtain that
    \[\sum_{n=1}^\infty C_n = \sum_{n=1}^\infty a_n r_n Y_n <\infty\]
    almost surely, which yields that 
    \[\liminf_{n\to\infty} r_nY_n = 0\]
    almost surely, since $\sum_{n=1}^\infty a_n = \infty$ almost surely. Since limit inferior and limit coincide for converging sequences, the assertion follows:
    \[Z_\infty = \lim_{n\to\infty} r_n Y_n = \liminf_{n\to\infty} r_n Y_n = 0 \]
    almost surely.
\end{proof}

The following Lemma will be applied to prove convergence in expectation.
\begin{lemma}\label{lem:help-an-bn-cn}
    Let $(w_n)_{n\in\N}$ be a non-negative sequence, such that $w_{n+1} \leq (1-a_n) w_n + b_n$, where $(a_n)_{n\in\N}$ and $(b_n)_{n\in\N}$ are non-negative sequences satisfying
    \begin{align*}
        \sum_{n=1}^\infty a_n = \infty \quad \textrm{ and } \quad \sum_{n=1}^\infty b_n <\infty.
    \end{align*}
    Then, $\lim_{n\to\infty} w_n =0$.
\end{lemma}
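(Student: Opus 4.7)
My plan is to reduce the inhomogeneous recursion to an ``absorbing threshold'' argument by a tail-sum change of variables. Set
\[
s_n := \sum_{k\ge n} b_k,\qquad u_n := w_n + s_n.
\]
The hypothesis $\sum b_k<\infty$ ensures $s_n<\infty$ and $s_n\downarrow 0$. Writing $b_n=s_n-s_{n+1}$ and plugging into the recursion, one obtains
\[
u_{n+1}=w_{n+1}+s_{n+1}\le (1-a_n)w_n+b_n+s_{n+1}=(1-a_n)u_n+a_n s_n.
\]
Since $u_n\ge w_n\ge 0$, it suffices to show $u_n\to 0$.

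Fix $\eps>0$ and choose $N$ large enough that $s_n\le \eps$ and $a_n\le 1$ for all $n\ge N$ (the latter is legitimate: by $\sum b_n<\infty$ we already have $b_n\to 0$, and in every application of this lemma in the paper $a_n=\Theta(\gamma_n^q)\to 0$; alternatively one may replace $a_n$ by $\min(a_n,1)$ without weakening either $\sum a_n=\infty$ or the recursion, because $w_n\ge 0$). The key step is a case split starting at $N$: if $u_n\le \eps$, then the rewritten recursion gives
\[
u_{n+1}\le (1-a_n)\eps+a_n\eps=\eps,
\]
so the set $\{u\le \eps\}$ is forward-invariant. If instead $u_k>\eps$ for every $N\le k\le n$, then subtracting $\eps$ from both sides of the recursion yields
\[
u_{k+1}-\eps\le (1-a_k)(u_k-\eps)+a_k(s_k-\eps)\le (1-a_k)(u_k-\eps),
\]
since $s_k\le\eps$. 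Iterating from $N$ to $n$,
\[
u_{n+1}-\eps\le (u_N-\eps)\prod_{k=N}^{n}(1-a_k),
\]
and $\sum a_k=\infty$ together with $a_k\in[0,1]$ forces $\prod_{k=N}^n(1-a_k)\to 0$, so $u_n-\eps$ becomes negative for some $n$, and from that point on the first case applies.

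In either scenario, $u_n\le \eps$ for all sufficiently large $n$, hence $\limsup_{n\to\infty}u_n\le\eps$. Since $\eps>0$ was arbitrary and $u_n\ge 0$, one concludes $u_n\to 0$, and therefore $w_n\le u_n\to 0$. The only delicate point is the justification that $a_n\le 1$ eventually (or, equivalently, that the product $\prod(1-a_k)$ is meaningful and tends to zero); this is the main obstacle if one wished to state the lemma in full generality, but it is entirely harmless given the intended usage.
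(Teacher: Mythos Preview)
Your proof is correct and takes a genuinely different route from the paper. The paper first reduces (without loss of generality) to the equality case $w_{n+1}=(1-a_n)w_n+b_n$, telescopes to obtain $\sum_k a_k w_k\le w_1+\sum_k b_k<\infty$, and from this deduces both that $(w_n)$ converges and that $\liminf_n w_n=0$ (because $\sum a_n=\infty$), hence $\lim w_n=0$. Your approach instead absorbs the inhomogeneity by shifting with the tail $s_n=\sum_{k\ge n}b_k$, producing the clean contraction $u_{n+1}\le(1-a_n)u_n+a_n s_n$, and then runs an $\eps$-threshold argument. The paper's argument is closer in spirit to Robbins--Siegmund; yours makes the mechanism ``contract to the current noise floor $s_n$'' very explicit. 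Your version requires the side remark that one may assume $a_n\le1$ (which you dispose of correctly via $\min(a_n,1)$), whereas the paper's telescoping does not need this.

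One minor wrinkle in your write-up: in the second case, the right-hand side $(u_N-\eps)\prod_{k=N}^n(1-a_k)$ is nonnegative, so the bound does not literally force $u_n-\eps$ to become \emph{negative}; it is consistent that $u_n>\eps$ for all $n$ while $u_n\to\eps$. This does not damage the argument, since in that scenario $\limsup_n u_n=\eps$ directly, and your final conclusion $\limsup_n u_n\le\eps$ holds either way. Replacing ``$u_n-\eps$ becomes negative for some $n$'' by ``$\limsup_n u_n\le\eps$ in either case'' makes the argument watertight.
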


\begin{proof}
    W.l.o.g we assume that $w_{n+1} = (1-a_n) w_n + b_n$, otherwise we could just increase $a_n$ or decrease $b_n$ which would have no effect on the summation tests. 
    We obtain
    \begin{align*}
        -w_1 \leq w_n -w_1 = \sum_{k=1}^{n-1} (w_{k+1} - w_k) =  \sum_{k=1}^{n-1} b_k - \sum_{k=1}^{n-1} w_k a_k.
    \end{align*}
    Since $w_n - w_1$ is bounded below and $\sum_{k=1}^{\infty} b_k <\infty$, we deduce that $\sum_{k=1}^{n} w_k a_k$ is bounded. Since all summands are positive, the infinite sum converges. Thus, as a difference of two converging series also $(w_n)_{n\in\N}$ converges. Finally, the convergence of $\sum_{k=1}^{\infty} w_k a_k$ implies $\liminf_{n\to\infty}w_n=0$ which, by the convergence of $(w_n)_{n\in\N}$, implies $\lim_n w_n=\liminf_n w_n=0$.
\end{proof}

\section{Numerical experiment - Toy example}\label{app:numerics}
In the following numerical experiment, we aim to verify our theoretical finding. We have implemented the same toy example similar to \cite{fatkhullin2022} to test our theoretical findings. In our implementation, we consider both SGD and SHB applied to the objective function $f_p(x) = |x|^p$, where $x\in\mathbb{R}$, for various choices of $p\geq 2$. It is straightforward to verify that $f_p$ satisfies the global gradient domination with parameter $\beta(p) = \frac{p-1}{p}$. It is noteworthy that for $p=2$, the $f_p$ obviously satisfies the PL condition with $\beta=\frac12$, whereas for increasing $p\to\infty$, we move towards $\beta(p)\to1$. We have used the step size schedule $\Theta(n^{-\frac{2\beta(p)}{4\beta(p)-1}})$ discussed in Table~\ref{tab:my-table} and observed the almost sure convergence rates $n^{-\frac{1}{4\beta(p)-1}}$ as suggested by \cref{thm:global_convergencerate_SGD_beta_allgemein} and \cref{thm:SHB-global}. Note that our derived rates are arbitrarily close to the sharp upper bound known in expectation \cite{fatkhullin2022}.

\begin{figure}[htb!]
\begin{center}
\includegraphics[width=0.7\textwidth]{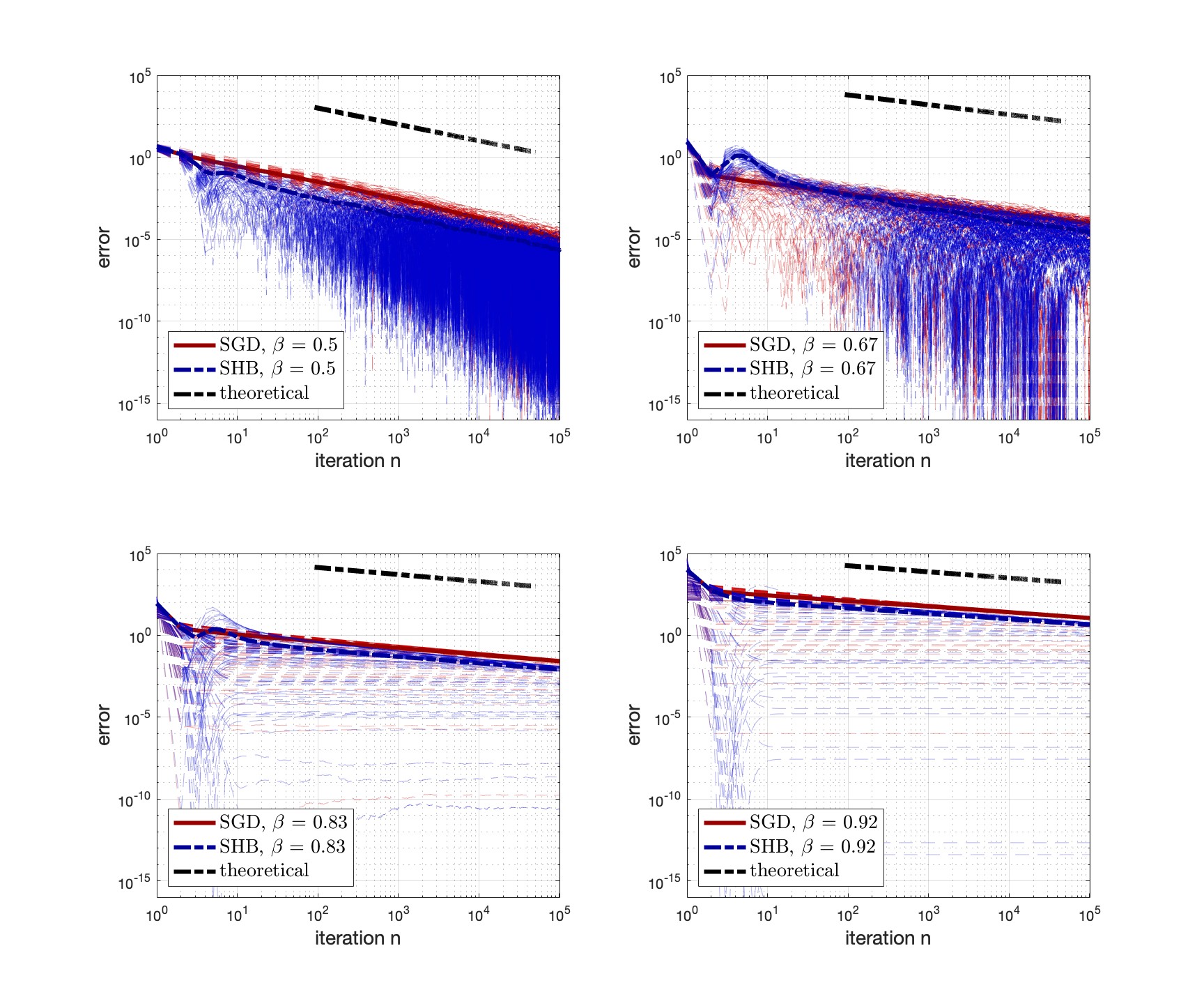}
\end{center}
\caption{Pathwise error $(f_p(X_n))_{n=1,\dots,N}$ of SGD and SHB for various choices of $\beta\in\{0.5, 0.67, 0.83, 0.92\}$. For each setting we have simulated $100$ runs of length , $N=10^5$. The bold lines correspond to the average error of SGD (red) and SHB (blue), and the black dash-dotted line corresponds to the theoretical rate $n^{-\frac{1}{4\beta-1}}$.}\label{fig:1}
\end{figure}

\paragraph{Details of the implementation:} Both algorithms have been implemented by hand using \texttt{MATLAB}. We have initialized both SGD and SHB with the initial state $X_1 \sim \frac12\cU([1.5,2.5])+\frac12\cU([-2.5,1.5])$ to force initials which are not close to the actual minimum $x^\ast = 0$. The initial step sizes $\gamma_1(\beta)$ for both algorithms are chosen as
\[\gamma_1(0.5)=0.2,\ \gamma_1(0.67) = 0.13,\ \gamma_1(0.83) = 0.004,\ \gamma_1(0.92) = 10^{-6} \]
through which we counteract the decreasing smoothness for $\beta\to1$. The momentum parameter for SHB is fixed for all $\beta$ as $\nu = 0.5$. The exact gradients $\nabla f_p$ are perturbed by independent additive noise following a standard normal distribution $\cN(0,1)$.

\section{Proof of Lemma~\ref{lem:as_rate_extension_beta_allgemein}}\label{app:proof-lem-as-rate-beta-allgemein}
In the following section, we present the proof of our super-martingale result.
\begin{proof}[Proof of Lemma~\ref{lem:as_rate_extension_beta_allgemein}]
In the following, we treat both cases $\beta = \frac12$ and $\beta \in (\frac12,1]$ separately. \\
\underline{$\beta = \frac{1}{2}$:} In this case, the inequality reduces to 
\[ \E[Y_{n+1}\mid \mathcal F_n] \le (1+c_1\gamma_n^2- c_2\gamma_n)Y_n + c_3\gamma_n^2.\]
By the choice of $\gamma_n$, there exists some $N>0$ and $\tilde c_1 >0$ such that $c_2 \gg_n -c_1 \gamma_n^2 \geq \tilde c_1 \gg_n$ for all $n \geq N$.
Hence, for all $n \geq N$
\[ \E[Y_{n+1}\mid \mathcal F_n] \le (1- \tilde c_1 \gamma_n)Y_n + c_3\gamma_n^2\]
such that the claim follows by~\citet[Lem. 1]{LY2022}. 

\underline{$\beta \in (\frac{1}{2},1]$:} The proof uses the elementary inequality 
\begin{equation} \label{eq:elementary}
(n+1)^{1-\eta} \le n^{1-\eta} + (1-\eta)n^{-\eta},
\end{equation}
which was also applied and proved in~\citet[Lem. 1]{LY2022}.
The aim is to apply the Robbins-Siegmund implication, Lemma~\ref{cor:RS_corollary}, in order to derive the almost sure convergence rate.
Let $1 \leq q< 2$ be arbitrary for now. The key step of the proof is the following computation
\begin{equation}\label{eq:recursion-Y_n-with-q}
\begin{split}
\E[Y_{n+1}\mid\cF_n] &\le (1+c_1 \gg_n^2)Y_n - c_2\gamma_n Y_n^{2\beta} + c_3 \gamma_n^2\\
 &= (1+c_1 \gg_n^2) Y_n - c_2\gamma_n^q Y_n + c_2\gamma_n^q Y_n - c_2\gamma_n Y_n^{2\beta} + c_3\gamma_n^2\\
 &=(1+c_1 \gg_n^2-c_2 \gamma_n^q) Y_n + c_2 \gamma_n \left(\gamma_n^{q-1} Y_n - Y_n^{2\beta} \right)+ c_3\gamma_n^2.
 \end{split}
\end{equation}
Similar to the case $\beta = \frac 12$ there exists some $N>0$ and $\tilde c_1 >0$ such that $c_2 \gg_n^q -c_1 \gamma_n^2 \geq \tilde c_1 \gg_n^q$ for all $n \geq N$.
Hence, for all $n \geq N$ we obtain the iterative inequality of the form
\begin{equation}
\begin{split}
\E[Y_{n+1}\mid\cF_n] &\le (1-\tilde c_1 \gamma_n^q) Y_n + c_2 \gamma_n \left(\gamma_n^{q-1} Y_n - Y_n^{2\beta} \right)+ c_3\gamma_n^2.
 \end{split}
\end{equation}
The function $x\mapsto ax-bx^{2\beta}$ takes it maximum at $\bar x = \left(\frac{a}{2b\beta}\right)^{\frac{1}{2\beta-1}}$ such that
\begin{equation}\label{eq:function-trick}
\begin{split}
    \gamma_n (\gamma_n^{q-1} Y_n - Y_n^{2\beta})&\le \frac{\gamma_n^{q+ \frac{q-1}{2\beta-1}}}{(2\beta)^{\frac{1}{2\beta-1}}}- \frac{\gamma_n^{1+ \frac{(q-1)2\beta}{2\beta-1}}}{(2\beta)^{\frac{2\beta}{2\beta-1}}}\\
    &=\frac{1}{(2\beta)^{\frac{1}{2\beta-1}}} \gamma_n^{\frac{2q\beta-1}{2\beta-1}}- \frac{1}{(2\beta)^{\frac{2\beta}{2\beta-1}}}\gamma_n^{\frac{2q\beta-1}{2\beta-1}}\\
    &= (2\beta)^{-\frac{1}{2\beta-1}} (1-\frac{1}{2\beta}) \gamma_n^{\frac{2q\beta-1}{2\beta-1}}
\end{split}
\end{equation}
holds almost surely. We define $\tilde c_2 = c_2 (2\beta)^{-\frac{1}{2\beta-1}} (1-\frac{1}{2\beta}) \in (0,\infty)$ for $\beta \in (\frac12,1)$ and proceed with
\begin{align}\label{eq:recursion-Y_n-with-q-after-function-trick}
\E[Y_{n+1}\mid\cF_n] &\le (1-\tilde c_1 \gamma_n^q) Y_n +\tilde c_2 \gamma_n^\frac{2\beta q-1}{2 \beta-1} + c_3 \gamma_n^2.
\end{align}

Next, we apply the elementary inequality \cref{eq:elementary} and choose $q$ such that $\frac{1}{2}<\theta\leq\frac{1}{q}\leq 1$. Moreover by the choice of $\gamma_n$, there exists some $c_4 >0$ such that $\tilde c_1 \gamma_n^q \geq \frac{c_4}{t^{q\theta}}$ for all $n \geq N$. It follows that for all $n \geq N$ 
\begin{align*}
&\E[ (n+1)^{1-\eta} Y_{n+1}\mid \cF_n] \\
&\le (n+1)^{1-\eta} (1-\tilde c_1\gamma_n^q) Y_n + (n+1)^{1-\eta}  \tilde c_2 \gamma_n^\frac{2\beta q-1}{2 \beta-1}+(n+1)^{1-\eta} c_3 \gamma_n^2\\
&\le (n^{1-\eta} + (1-\eta) n^{-\eta}) (1-\frac{c_4}{n^{q\theta}})Y_n + (n+1)^{1-\eta}  \tilde c_2 \gamma_n^\frac{2\beta q-1}{2 \beta-1}+(n+1)^{1-\eta} c_3 \gamma_n^2\\
&=\left(1+\frac{1-\eta}{n}-\frac{c_4}{n^{q\theta}}-\frac{c_4(1-\eta)}{n^{q\theta + 1}}\right)n^{1-\eta} Y_n 
+ (n+1)^{1-\eta}  \tilde c_2 \gamma_n^\frac{2\beta q-1}{2 \beta-1}+(n+1)^{1-\eta} c_3 \gamma_n^2.
\end{align*}
We set $\tilde c_3 = \max\{\tilde c_2, c_3\}$ such that for all $n \geq\ N$ 
\begin{align*}
&\E[ (n+1)^{1-\eta} Y_{n+1}\mid \cF_n] \\
&\le \left(1+\frac{1-\eta}{n}-\frac{c_4}{n^{q\theta}}-\frac{c_4(1-\eta)}{n^{q\theta + 1}}\right)n^{1-\eta} Y_n 
+ \tilde c_3 (n+1)^{1-\eta} (\gamma_n^\frac{2\beta q-1}{2 \beta-1}+\gamma_n^2).
\end{align*}

Observe that $q\theta\leq1$ by condition $\theta\le\frac{1}{q}$. Hence, there exists $\tilde c_4>0$ and $\tilde N >N$for sufficiently large $\tilde N\ge N$ such that for all $n \geq \tilde N$ we have
\begin{equation}\label{eq:help-lem-21}
    \E[ (n+1)^{1-\eta} Y_{n+1}\mid \cF_n] 
    \le (1 - \tilde c_4 \frac{1}{n^{q\theta}}) n^{1-\eta} Y_n + c_3(n+1)^{1-\eta} (\gamma_n^\frac{2\beta q-1}{2 \beta-1}+\gamma_n^2)
\end{equation}
In order to apply Robbins-Siegmund, more precisely Lemma~\ref{cor:RS_corollary}, we are going to verify the following three sufficient conditions:
\begin{align}
    &\sum_{n=\tilde N}^\infty \frac{1}{n^{q\theta}} = \infty, \label{eq:cond-1-lem21}\\
    &\sum_{n=\tilde N}^\infty n^{1-\eta-2\gt} <\infty, \label{eq:cond-2-lem21}\\
    &\sum_{n=\tilde N}^\infty n^{1-\eta-\frac{\gt (2\beta q-1)}{2\beta-1}}<\infty.\label{eq:cond-3-lem21}
\end{align}
Then, $Y_n \in o\left(\frac{1}{n^{1-\eta}}\right)$ almost surely. 

The first condition \cref{eq:cond-1-lem21} is obviously satisfied, since we assume $\theta\leq\frac{1}{q}$. For the second condition \cref{eq:cond-2-lem21} we may choose $\theta> 1-\frac \eta 2$ such that $1-\eta-2\theta<-1$. The third condition \cref{eq:cond-3-lem21} gives $1-\eta-\frac{\gt (2\beta q-1)}{2\beta-1}<-1$ which leads to the condition $\theta>\frac{(2-\eta) (2\beta -1)}{2\beta q-1}$. Hence, all together we obtain the sufficient condition 
\[\theta \in \left(\max\left\{\frac{(2-\eta) (2\beta -1)}{2\beta q-1},1-\frac{\eta}{2}\right\},\frac{1}{q}\right].\]
In the following, we consider the two cases separately that correspond to the maximum being either $1-\frac\eta 2$ or $\frac{(2-\eta) (2\beta-1)}{2\beta q -1}$. The first case occurs precisely for $\frac{1}{q}\leq \frac{2\beta}{4\beta-1}$, the latter one for $\frac 1 q \geq \frac{2\beta}{4\beta-1}$.\\
 Firstly, let $\frac{1}{q}\leq \frac{2\beta}{4\beta-1}$. In this situation the sufficient condition on $\gt$ simplifies to 
\[\theta \in \left(1-\frac\eta 2,\frac{1}{q}\right].\]
The interval is non-empty for $\frac1q > \frac{2-\eta}{2}$, which requires $\eta\in (\frac{4\beta-2}{4\beta-1},1)$.\\
Secondly, let $\frac1q \geq \frac{2\beta}{4\beta-1}$. In this situation the sufficient condition on $\gt$ simplifies to  
\[\theta \in \left(\frac{(2-\eta) (2\beta -1)}{2\beta q-1},\frac{1}{q}\right],\]
the interval is non-empty for $\frac{1}{q} < 2\beta\eta-2\beta+2-\eta$. Hence, $\frac 1q\in (\frac{2\beta}{4\beta-1},{2\beta\eta-2\beta+2-\eta})$ which requires the condition $\eta\in (\frac{4\beta-2}{4\beta-1},1)$.

Either case yields sufficient conditions on $\gt$ and $\eta$ (depending on the auxiliary variable $q$) under which $Y_n \in o\left(\frac{1}{n^{1-\eta}}\right)$ holds almost surely. We will now utilize the free variable $q$ to prove the claim.

\begin{itemize}
    \item Let $\gt \in (\frac 12, \frac{2\beta}{4\beta-1})$: We set $q= \frac{4\beta-1}{2\beta}$ and use the first case. The assumption $\eta > 2-2\gt= \max\{2-2\gt,\frac{\gt+2\beta-2}{2\beta-1}\}$ implies $\gt\in \left(1-\frac\eta 2,\frac{1}{q}\right]$. (Note that $ \eta > \frac{4\beta-2}{4\beta-1}$ is automatically fulfilled by $2-2\gt > \frac{4\beta-2}{4\beta-1}$ for this choice of $\gt$.)
    \item Let $\gt \in [\frac{2\beta}{4\beta-1},1)$: By assumption we have $\eta > \frac{\gt+2\beta-2}{2\beta-1}= \max\{2-2\gt, \frac{\gt+2\beta-2}{2\beta-1}\}$. We choose some $\frac 1q\in (\gt,{2\beta\eta-2\beta+2-\eta})$ and use the second case. (Note that $ \eta > \frac{4\beta-2}{4\beta-1}$ again is automatically fulfilled by $ \frac{\gt+2\beta-2}{2\beta-1}>\frac{4\beta-2}{4\beta-1}$ for this choice of $\gt$.) 
\end{itemize}
All in all we have proved that $\gt\in (\frac 12,2)$ implies $Y_n \in o\left( \frac{1}{n^{1-\eta}}\right)$ almost surely for all $\eta \in(\max\{2-2\gt, \frac{\gt+2\beta-2}{2\beta-1}\},1)$.
\end{proof}

\section{Proofs of Section~\ref{sec:localSGD}}\label{app:proof-localSGD}
In the following section, we present the proofs of \Cref{sec:localSGD}.
\subsection{Proof of Theorem~\ref{thm:conv-SGD-local-PL-xast}}
Suppose that the assumptions of Theorem~\ref{thm:conv-SGD-local-PL-xast} hold throughout this section. In contrast to the global gradient domination analysis we may assume w.l.o.g.~the uniform second moment bounds, i.e.~$A=B=0$, instead of the more general \cref{eq:ABC-condition} condition. Choosing $A,B>0$ would imply the bounded variance assumption of the gradient estimator. Note therefore, that the first term $A (f(x)-f(x^\ast))$ and the second term $B\lVert \nabla f(x)\rVert^2$ are both locally bounded by the local Lipschitz assumptions on $f$ and $\nabla f$.

Note that every isolated local minimum $\{x^\ast\}$ is a special case of an isolated compact connected set of local minima.  In this case it holds that $\beta = \beta_{x^\ast}$. If $\cX^\ast$ contains more then one point, we can unify the gradient domination property in a neighbourhood of $\cX^\ast$ due to compactness. The set $\cX^\ast$ has to be connected to assure that all local minima are on the same level $l$.

Recall, the outline of the proof is structured as follows:
\begin{itemize}
    \item First, we unify the gradient domination property around the set of local minima $\cX^\ast$ and obtain a radius $r$ such that the unified gradient domination property is fulfilled in all open balls with radius $r$ around $x^\ast\in\cX^\ast$ (Lemma~\ref{lem:unify-gradientdom-xast}). 
    \item Based on this we construct sets $\cU$, $\cU_1 \subseteq \R^d$ and the events $\Omega_n \in \Omega$ (see \cref{eq:h1}, \cref{eq:h2} and \cref{eq:h3}), such that $\Omega_\cU = \bigcap_{n} \Omega_n$ occurs with high probability. To be precise, $\cU_1$  and $\cU$ are neighborhoods of $\cX^\ast$ constructed such that the gradient domination property holds within this region, and when starting in $\cU_1$ the gradient trajectory does remain in $\cU$ for all gradient steps with high probability. Then, $\Omega_n$ describes the event that $X_k \in \cU$ for all $k\leq n$. 
    \item All following Lemmata before the proof of \cref{thm:conv-SGD-local-PL-xast} are devoted to show that $\bbP(\Omega_n) \geq 1-\delta$ for all $n \in\N$. This then proves Claim (i) of the Theorem. Claim (ii) and (iii) will be shown directly in the proof of \cref{thm:conv-SGD-local-PL-xast} at the end of this subsection. 
    \item In order to show $\bbP(\Omega_n) \geq 1-\delta$ we construct set $C_n$ and $E_n$ defined in \cref{eq:h4} and \cref{def:E_n-x-ast} such that $E_n \cap C_n \subset \Omega_{n+1}$ (Lemma~\ref{lem:tilde-r-n-xast}) while Lemma~\ref{lem:aux-lemma-x_n-close-to-xast} is used to prove this claim.
    \item The sets $E_n$ are such that $f(X_n)$ remains close to $f^\ast$. We exploit the unified gradient domination property to construct the sets $E_n$ (Lemma~\ref{lem:bar_D_iteration_x_ast}) and derive a recursive inequality in Lemma~\ref{lem:tilde-r-n-xast} c) to prove that this event occurs with high probability (Lemma~\ref{lem:prob_E_n_lager_1-delta-xast}). 
    \item The sets $C_n$ are such that $X_{n+1}$ remains close to $X_n$ and we exploit the finite variance assumption to show that these events occur with high probability (Lemma~\ref{lem:prob_hat-E_n_lager_1-delta}). 
\end{itemize}

We denote by 
$$\tilde\cB_{r}(x) = \{ y\in\R^d\, :\, ||x-y|| < r\} $$ the open ball with radius $r>0$ around $x\in\R^d$ and by 
$$\cB_{r}(x) = \{ y\in\R^d\, :\, ||x-y|| \leq r\} $$ the closed ball with radius $r>0$ around $x\in\R^d$. 

In the following Lemma we unify the gradient domination property around the set of local minima $\cX^\ast\subset \R^d$.
\begin{lemma}\label{lem:unify-gradientdom-xast}.
    There exists $r>0$, $\beta \in [\frac{1}{2},1]$ and $c>0$, such that for all $x \in \bigcup_{x^\ast\in \cX^\ast} \tilde\cB_{r}(x^\ast)$ it holds that
    \[ f(x) > l\ \textrm{ for }\ x\notin \cX^\ast \quad \textrm{ and } \quad ||\nabla f(x) || \geq c (f(x)-l)^\beta\,.\]
\end{lemma}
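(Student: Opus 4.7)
The strategy is to upgrade the pointwise local gradient domination at each $x^\ast\in\cX^\ast$ into a uniform one over a tube around $\cX^\ast$, using compactness. First, for each $x^\ast \in \cX^\ast$ the hypothesis gives radius $r_{x^\ast}>0$, constant $c_{x^\ast}>0$, and exponent $\beta_{x^\ast}\in[\tfrac12,1]$ such that \eqref{eq:loj} holds on $\cB_{r_{x^\ast}}(x^\ast)$. Since $\cX^\ast$ is compact, I would extract a finite subcover $\cX^\ast \subset \bigcup_{i=1}^N \tilde\cB_{r_i/4}(x^\ast_i)$, where $r_i = r_{x^\ast_i}$, $c_i = c_{x^\ast_i}$, $\beta_i = \beta_{x^\ast_i}$. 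Set $\beta := \max_i \beta_i \in[\tfrac12,1]$. The finite minimum $c := \min_i c_i > 0$ will serve as the uniform constant, after a small adjustment below.

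Next I would choose a single small radius $r>0$ so that three conditions hold simultaneously on the tube $\cV := \bigcup_{x^\ast\in\cX^\ast}\tilde\cB_r(x^\ast)$:
\begin{enumerate}
\item[(a)] every point of $\cV$ lies in some $\tilde\cB_{r_i}(x^\ast_i)$ (achievable by a Lebesgue-number type argument combined with the choice $r_i/4$ above, picking $r<\min_i r_i/4$);
\item[(b)] $f(x)\le l+1$ on $\cV$ (continuity of $f$ at $\cX^\ast$ together with compactness);
\item[(c)] $f(x)\ge l$ on a slightly larger tube $\cV'\supset\cV$.
\end{enumerate}
Condition (c) is what uses the local-minimum property of each point of $\cX^\ast$: each $x^\ast$ has a neighborhood on which $f\ge l$, and compactness upgrades this to a uniform tube. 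Shrinking $r$ further so that $\cV$ has positive distance to $\partial\cV'$, I ensure that every $x\in\cV$ has a whole ball around it still contained in $\cV'$, hence on which $f\ge l$.

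With these choices fixed, the two conclusions follow easily. For the first, if $x\in\cV\setminus\cX^\ast$ satisfied $f(x)=l$, then by the previous step $x$ would be a local minimum of $f$ (since $f\ge l = f(x)$ on a whole neighborhood of $x$), contradicting that $\cX^\ast$ is an \emph{isolated} set of local minima; hence $f(x)>l$ for $x\in\cV\setminus\cX^\ast$. For the gradient bound, given $x\in\cV$, pick $i$ with $x\in\tilde\cB_{r_i}(x^\ast_i)$; using $f(x^\ast_i)=l$ and the local gradient domination at $x^\ast_i$,
\[
\|\nabla f(x)\| \;\ge\; c_i\,(f(x)-l)^{\beta_i}.
\]
Because $f(x)-l\in[0,1]$ by (b) and $\beta_i\le\beta$, one has $(f(x)-l)^{\beta_i}\ge (f(x)-l)^{\beta}$, so $\|\nabla f(x)\|\ge c\,(f(x)-l)^\beta$, as required.

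The main obstacle is condition (c): extracting from "each $x^\ast$ is a local minimum" and "$\cX^\ast$ is isolated" a single open tube in which $f\ge l$ with equality only on $\cX^\ast$. Compactness handles the uniform lower bound, and the isolation hypothesis (interpreted as: no local minima of $f$ in a neighborhood of $\cX^\ast$ other than those in $\cX^\ast$) is exactly what rules out spurious zero-level points, via the short contradiction argument above. The remaining steps (finite subcover, taking $\beta=\max\beta_i$, $c=\min c_i$, the monotonicity $a^{\beta_i}\ge a^\beta$ on $[0,1]$) are routine.
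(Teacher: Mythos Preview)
Your proposal is correct and follows essentially the same route as the paper: extract a finite subcover of $\cX^\ast$ by gradient-domination balls, take $\beta=\max_i\beta_i$ and $c=\min_i c_i$, then use compactness to find a uniform tube radius $r$. Your treatment is in fact slightly more careful than the paper's in two places: you explicitly impose condition~(b), $f\le l+1$ on the tube, which is what justifies the monotonicity step $(f(x)-l)^{\beta_i}\ge(f(x)-l)^{\beta}$ (the paper writes this inequality without comment), and you spell out the contradiction argument for $f>l$ via isolation, whereas the paper absorbs it into a ``w.l.o.g.\ shrink $r_{x^\ast}$'' at the outset.
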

\begin{proof}
    By the local gradient domination property, for every $x^\ast \in \cX^\ast$ there exist $r_{x^\ast}>0$, $\beta_{x^\ast}\in[\frac 12,1]$ and $c_{x^\ast}>0$ such that 
    \[ ||\nabla f(x) || \geq c_{x^\ast} |f(x)-l|^{\beta_{x^\ast}}, \quad \forall x \in \cB_{r_{x^\ast}}(x^\ast).\]
    Moreover, w.l.o.g we can assume that $f(x) > l$ for all $ x \in \cB_{r_{x^\ast}}(x^\ast) \setminus\cX^\ast$, as $\cX^\ast$ is an isolated compact connected set of local minima (otherwise choose $r_{x^\ast}$ small enough).

    By the compactness of $\cX^\ast$ we can find a finite subset $\cY^\ast\subset\cX^\ast$, such that
    $$\tilde \cU := \bigcup_{y^\ast\in \cY^\ast} \tilde\cB_{r_{y^\ast}}(y^\ast) \supset \cX^\ast.$$ 
    Then, we define $\beta = \max_{y^\ast\in \cY^\ast} \beta_{y^\ast}$ and $c= \min_{y^\ast\in \cY^\ast} c_{y^\ast}$. For any $x \in \tilde \cU$ there exits $y^\ast \in \cY^\ast$ such that 
    $$ ||\nabla f(x)|| \geq c_{y^\ast} (f(x)-l)^{\beta_{y^\ast}} \geq c (f(x)-l)^{\beta}.$$
    Thus, there exists an open neighbourhood $\tilde \cU$ of $\cX^\ast$ and $\beta \in [\frac{1}{2},1]$, $c>0$, such that for all $x \in \tilde \cU$ it holds that
    \[ f(x) > l  \textrm{ for } x\notin \cX^\ast  \quad \textrm{ and } \quad ||\nabla f(x) || \geq c (f(x)-l)^\beta.\]
    As $\tilde \cU$ is open by definition and $\cX^\ast \subset \tilde \cU$, we can find a radius $r>0$, such that 
    $\bigcup_{x^\ast\in \cX^\ast} \tilde\cB_{r}(x^\ast) \subseteq \tilde\cU.$ This proves the claim.
\end{proof}

\begin{remark}
    It is noteworthy that the unified gradient domination property obtained in the previous Lemma does not require an absolute value, as $f(x) \geq l$ for all  $x \in \bigcup_{x^\ast\in \cX^\ast} \tilde\cB_{r}(x^\ast)$. 
    This is crucial to obtain the recursive inequalities in Lemma~\ref{lem:bar_D_iteration_x_ast} and we will exploit this also in the proof of \cref{thm:conv-SGD-local-PL-xast} to obtain the convergence rates. 
\end{remark}

In the following let $\mathbf{r}>0$, $c>0$ and $\beta \in [\frac{1}{2},1]$ chosen as in the previous Lemma, such that the unified gradient domination property holds for all $x \in \bigcup_{x^\ast\in \cX^\ast} \tilde\cB_{\mathbf{r}}(x^\ast)$.
Further define 
    $$s = \inf\left\{ f(x)-l \,: \, x \in \bigcup_{x^\ast\in \cX^\ast} \cB_{\frac{3\mathbf{r}}{4}}(x^\ast) \setminus \bigcup_{x^\ast\in \cX^\ast} \tilde\cB_{\frac{\mathbf{r}}{2}}(x^\ast) \right\}.$$
\begin{lemma}
    It holds that $s>0$.
\end{lemma}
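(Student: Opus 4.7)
The plan is to show that the infimum is taken over a compact set on which $f - l$ is strictly positive, so by continuity it is attained and hence strictly positive.

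First, I would argue that the set
\[
K := \bigcup_{x^\ast\in \cX^\ast} \cB_{\frac{3\mathbf{r}}{4}}(x^\ast) \setminus \bigcup_{x^\ast\in \cX^\ast} \tilde\cB_{\frac{\mathbf{r}}{2}}(x^\ast)
\]
is compact. It is closed as the difference of a closed set and an open set (the union $\bigcup_{x^\ast \in \cX^\ast} \cB_{3\mathbf{r}/4}(x^\ast)$ is closed because $\cX^\ast$ is compact and distance to a compact set is continuous, so this union equals $\{x : \mathrm{dist}(x,\cX^\ast) \le 3\mathbf{r}/4\}$; likewise the second union is $\{x : \mathrm{dist}(x,\cX^\ast) < \mathbf{r}/2\}$, which is open). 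It is bounded because it is contained in the $3\mathbf{r}/4$-neighborhood of the bounded set $\cX^\ast$.

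Next, I would show that $f(x) > l$ for every $x \in K$. Since $3\mathbf{r}/4 < \mathbf{r}$, we have $K \subset \bigcup_{x^\ast \in \cX^\ast} \tilde\cB_{\mathbf{r}}(x^\ast)$, so Lemma~\ref{lem:unify-gradientdom-xast} applies. Moreover, every $x \in K$ satisfies $\|x - x^\ast\| \geq \mathbf{r}/2 > 0$ for all $x^\ast \in \cX^\ast$, so $x \notin \cX^\ast$. Consequently, Lemma~\ref{lem:unify-gradientdom-xast} yields $f(x) > l$.

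Finally, if $K = \emptyset$ then $s = +\infty > 0$ by convention. Otherwise, $f$ is continuous on the nonempty compact set $K$ and therefore attains its infimum at some $x_0 \in K$. Combining with the previous step, $s = f(x_0) - l > 0$, which is the claim. The only subtle point is ensuring $K$ is closed (hence compact), which I would handle by rewriting the two unions as sublevel sets of $x \mapsto \mathrm{dist}(x,\cX^\ast)$; no further obstacle is expected.
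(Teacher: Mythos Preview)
Your proof is correct and follows essentially the same approach as the paper: both rely on compactness of the annular region $K$ (obtained from compactness of $\cX^\ast$) together with the strict inequality $f>l$ on $K$ furnished by Lemma~\ref{lem:unify-gradientdom-xast}. The only cosmetic difference is that the paper argues by contradiction via a convergent subsequence, whereas you invoke the extreme value theorem directly after establishing that $K$ is closed (your rewriting of the two unions as sublevel/superlevel sets of $\mathrm{dist}(\cdot,\cX^\ast)$ is exactly what is needed here).
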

\begin{proof}
    If $s=0$, then there exists a sequence $(x_n) \in \bigcup_{x^\ast\in \cX^\ast} \cB_{\frac{3\mathbf{r}}{4}}(x^\ast) \setminus \bigcup_{x^\ast\in \cX^\ast} \tilde\cB_{\frac{\mathbf{r}}{2}}(x^\ast)$ with $f(x_n) \to l$ for $n \to \infty$. 
    By definition of the set and compactness (boundedness) of $\cX^\ast$, the sequence $x_n$ is bounded: 
    $$ ||x_n|| \leq \frac{3\mathbf{r}}{4} + \sup_{x^\ast\in\cX^\ast} ||x^\ast||< \infty.$$
    Hence, there is a convergent sub-sequence $(x_{n_k})$ with $x_{n_k} \to x$ for $k \to \infty$ and by continuity of $f$ it holds that $f(x) = l$.    
    Further, it holds for all $x^\ast \in\cX^\ast$ that $||x_n -x^\ast || \geq \frac{\mathbf{r}}{2}$ for all $n \in \N$ such that $\inf_{x^\ast\in\cX^\ast}||x - x^\ast|| \geq \frac{\mathbf{r}}{2}$.

    On the other hand, by construction we have that $x \in \overline{\bigcup_{x^\ast\in \cX^\ast} \cB_{\frac{3\mathbf{r}}{4}}(x^\ast) \setminus \bigcup_{x^\ast\in \cX^\ast} \tilde\cB{\frac{\mathbf{r}}{2}}(x^\ast)} \subset \overline{\bigcup_{x^\ast\in \cX^\ast} \cB_{\frac{3\mathbf{r}}{4}}(x^\ast)} \subset\bigcup_{x^\ast\in \cX^\ast} \tilde\cB_{\mathbf{r}}(x^\ast)$. And as $f(y) > l$ for all $y \in \tilde\cB_{\mathbf{r}}(x^\ast)\setminus \cX^\ast$ we deduce from $f(x) =l$ that $x \in \cX^\ast$. This is a contradiction to $\inf_{x^\ast\in\cX^\ast}||x - x^\ast|| \geq \frac{\mathbf{r}}{2}$.
\end{proof}

We choose $\epsilon>0$, such that $2\epsilon+\sqrt{\epsilon}<s$.
We define the sets
\begin{align}
    \cU_1 &= \{ x\in\R^d\,: \, \inf_{x^\ast\in\cX^\ast}||x-x^\ast|| < \frac{\mathbf{r}}{2}, f(x)-l \leq \frac{\epsilon}{2}\}\label{eq:h1}\\
    \cU &= \{  x\in\R^d\,: \, \inf_{x^\ast\in\cX^\ast}||x-x^\ast|| < \frac{\mathbf{r}}{2} \}\label{eq:h2}
\end{align}
which are subsets of $\R^d$ and the decreasing sequence of events 
\begin{align}
    \Omega_n &= \{X_k \in \cU \textrm{ for all } k \leq n\}\label{eq:h3} \\
    C_n &= \{  ||X_{k+1} - X_{k}||\leq \frac{\mathbf{r}}{4} \textrm{ for all } k \leq n\},\label{eq:h4}
\end{align}
and $C_0 = \Omega$, which are measurable sets in $(\Omega, \cF,\bbP)$.

In order to prove Theorem~\ref{thm:conv-SGD-local-PL-xast} we will show that $\Omega_n$ has probability at least $1-\delta$ for all $n\in\N$. To do this, we construct another sequence of events $(\hat E_n)$ with $\hat E_n \subset \Omega_n$ which occur with probability at least $1-\delta$ for any $n \in \N$. 

Therefore, we fix the notation $D_n := f(X_n)-l$, $\xi_{n+1} := -\langle \nabla f(X_n),Z(X_n,\zeta_{n+1})\rangle$ and $\mathbf{1}_\cA$ denotes the indicator function for a measurable set $\cA$ in $(\Omega,\cF,\bbP)$, i.e. $\mathbf{1}_\cA(\omega) = 1$ if $\omega \in \cA$ and $\mathbf{1}_\cA(\omega) = 0$ if $\omega \notin \cA$. We prove the following (recursive) inequalities. 

\begin{lemma}\label{lem:bar_D_iteration_x_ast}
    If $\beta = \frac 12$, then it holds that
    \begin{equation}\label{eq:lemE1-eq3-xast}
        \begin{split}
         {D}_{n+1} \mathbf{1}_{\Omega_n}&\le (1-\gg_n c^2) D_n\mathbf{1}_{\Omega_n}  + \gg_n \xi_{n+1}\mathbf{1}_{\Omega_n} + \frac{L \gg_n^2}{2} \mathbf{1}_{\Omega_n} \lVert V_{n+1}(X_n) \rVert^2, \\
        &\le {D}_1 \prod_{k=1}^{n}(1- \gg_k c^2) \mathbf{1}_{\Omega_n}+ \sum_{k=1}^{n} \left(\prod_{j=k}^{n}(1- \gg_j c^2)\right) \gg_k \xi_{k+1} \mathbf{1}_{\Omega_n}\\ &\quad + \frac{L}{2}\sum_{k=1}^{n}  \gg_k^2 \lVert V_{k+1}(X_k) \rVert^2 \mathbf{1}_{\Omega_n}.
        \end{split}
    \end{equation}
    If $\beta \in (\frac{1}{2},1]$, for any $1 \leq q< 2$, it holds that
    \begin{equation}\label{eq:lemE1-eq1-xast}
        \begin{split}
         {D}_{n+1} \mathbf{1}_{\Omega_n} &\le (1- \gg_n^q c^2)  {D}_n \mathbf{1}_{\Omega_n}+  (2\beta)^{-\frac{1}{2\beta-1}} (1-\frac{1}{2\beta}) {c^2}\gg_n^\frac{2\beta q-1}{2\beta-1}+ \gg_n \xi_{n+1} \mathbf{1}_{\Omega_n} + \frac{L \gg_n^2}{2} \lVert V_{n+1}(X_n) \rVert^2 \mathbf{1}_{\Omega_n} \\
        &\le {D}_1 \prod_{k=1}^{n}(1- \gg_k^q c^2) +  \tilde c \sum_{k=1}^{n} \gg_k^\frac{2\beta q-1}{2\beta-1} + \sum_{k=1}^{n} \left(\prod_{j=k}^{n}(1- \gg_j^q c^2)\right)\gg_k \xi_{k+1} \mathbf{1}_{\Omega_n}\\ &\quad+ \frac{L}{2}\sum_{k=1}^{n} \gg_k^2 \lVert V_{k+1}(X_k) \rVert^2 \mathbf{1}_{\Omega_n} ,
        \end{split}
    \end{equation}
    for $\tilde c = (2\beta)^{-\frac{1}{2\beta-1}} (1-\frac{1}{2\beta}) c^2$.
\end{lemma}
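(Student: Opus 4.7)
The plan is to first derive a one-step recursive inequality and then iterate it. I start from the descent inequality \cref{eq:descent-lemma} applied to \cref{eq:SGD}: using the oracle decomposition $V_{n+1}(X_n) = \nabla f(X_n) + Z(X_n,\zeta_{n+1})$, together with $X_{n+1}-X_n = -\gamma_n V_{n+1}(X_n)$ and the definition $\xi_{n+1} = -\langle \nabla f(X_n), Z(X_n,\zeta_{n+1})\rangle$, I obtain
\[
 f(X_{n+1})-l \leq D_n - \gamma_n \|\nabla f(X_n)\|^2 + \gamma_n \xi_{n+1} + \frac{L\gamma_n^2}{2}\|V_{n+1}(X_n)\|^2.
\]
Multiplying by $\mathbf{1}_{\Omega_n}$ confines $X_n$ to $\cU\subseteq\bigcup_{x^\ast\in\cX^\ast}\tilde{\cB}_{\mathbf r}(x^\ast)$, so the unified gradient dominance from Lemma~\ref{lem:unify-gradientdom-xast} gives $\|\nabla f(X_n)\|^2\mathbf{1}_{\Omega_n} \geq c^2 D_n^{2\beta}\mathbf{1}_{\Omega_n}$, yielding
\[
 D_{n+1}\mathbf{1}_{\Omega_n} \leq (D_n - \gamma_n c^2 D_n^{2\beta})\mathbf{1}_{\Omega_n} + \gamma_n \xi_{n+1}\mathbf{1}_{\Omega_n} + \frac{L\gamma_n^2}{2}\|V_{n+1}(X_n)\|^2\mathbf{1}_{\Omega_n}.
\]

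For $\beta=\tfrac12$ this is already the one-step form in \cref{eq:lemE1-eq3-xast}. For $\beta\in(\tfrac12,1]$, I would reuse the $q$-trick from the proof of Lemma~\ref{lem:as_rate_extension_beta_allgemein}: split
\[
-\gamma_n c^2 D_n^{2\beta} = -\gamma_n^q c^2 D_n + c^2\gamma_n\bigl(\gamma_n^{q-1}D_n - D_n^{2\beta}\bigr)
\]
and upper bound the last summand by the explicit one-variable maximum computation from \cref{eq:function-trick}, producing exactly the constant $(2\beta)^{-1/(2\beta-1)}(1-1/(2\beta))c^2\gamma_n^{(2\beta q-1)/(2\beta-1)}$ that appears in \cref{eq:lemE1-eq1-xast}.

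For the iterated forms, I exploit that $(\Omega_n)_{n\in\N}$ is decreasing, so $\mathbf{1}_{\Omega_n}\leq \mathbf{1}_{\Omega_k}$ for every $k\leq n$. Applying the one-step inequality at index $k$ on $\Omega_k$ and multiplying by $\mathbf{1}_{\Omega_n}$ yields a bound for $D_{k+1}\mathbf{1}_{\Omega_n}$ in terms of $D_k\mathbf{1}_{\Omega_n}$. Substituting recursively from $k=n$ down to $k=1$, and using $1-\gamma_k c^2 \geq 0$ (respectively $1-\gamma_k^q c^2\geq 0$) for the chosen step-size regime, the linear prefactors accumulate into the products $\prod_{j=k}^{n}(1-\gamma_j c^2)$ and $\prod_{j=k}^{n}(1-\gamma_j^q c^2)$ shown in the statement. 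For the $\|V_{k+1}(X_k)\|^2$ summands I further use $\prod_j(1-\gamma_j c^2)\leq 1$ to drop the prefactor, matching the displayed form. In the $\beta\in(\tfrac12,1]$ case, the deterministic remainder $c^2(2\beta)^{-1/(2\beta-1)}(1-1/(2\beta))\gamma_k^{(2\beta q-1)/(2\beta-1)}$ also accumulates into the sum $\tilde c\sum_{k=1}^n \gamma_k^{(2\beta q-1)/(2\beta-1)}$.

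The main technical subtlety is the application of the descent lemma under the local smoothness assumption: although $X_n$ is confined to $\cU$ on $\Omega_n$, the step $X_{n+1}$ is not a priori bounded, so the segment $[X_n,X_{n+1}]$ need not lie inside a fixed region on which $\nabla f$ is $L$-Lipschitz. I would resolve this by choosing $L$ as the local Lipschitz constant of $\nabla f$ on a fixed ball strictly larger than $\cU$; the subsequent analysis (via the events $C_n$ that a posteriori control $\|X_{n+1}-X_n\|\leq \mathbf{r}/4$) ensures that the iterates effectively remain inside this enlarged ball wherever the inequalities will be used, so the descent step is legitimate on the relevant events. Everything else is the same mechanical unrolling as in the global analysis combined with the already-established unified local gradient dominance.
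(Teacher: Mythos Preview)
Your proposal is correct and follows essentially the same route as the paper: apply the descent inequality, multiply by $\mathbf{1}_{\Omega_n}$, invoke the unified gradient domination on $\Omega_n$, use the $q$-trick together with \cref{eq:function-trick} for $\beta>\tfrac12$, and then unroll recursively using $\Omega_n\subset\Omega_k$, bounding the product prefactors by $1$ on the nonnegative summands. Your observation about the local-smoothness subtlety is well taken; the paper simply writes ``From $L$-smoothness'' and proceeds as if it were global, effectively relying (as you propose) on the later restriction to events where $\|X_{k+1}-X_k\|$ is controlled via $C_n$.
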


\begin{proof}
From $L$-smoothness we can deduce that 
\begin{align*}
    D_{n+1} &\le D_n - \gg_n \langle \nabla f(X_n), V_{n+1}(X_n) \rangle + \frac{L \gg_n^2}{2} \lVert V_{n+1}(X_n) \rVert^2 \\
    &=  D_n - \gg_n \lVert \nabla f(X_n) \rVert^2 - \gg_n \langle \nabla f(X_n), Z(X_n, \zeta_{n+1}) \rangle + \frac{L \gg_n^2}{2} \lVert V_{n+1}(X_n) \rVert^2 \\
    &= D_n - \gg_n \lVert \nabla f(X_n) \rVert^2 + \gg_n \xi_{n+1} + \frac{L \gg_n^2}{2} \lVert V_{n+1}(X_n) \rVert^2
\end{align*}
for $Z(X_n, \zeta_{n+1})$ from 
Assumption~\ref{ass:ABC} and $\xi_{n+1} = -\langle \nabla f(X_n), Z(X_n,\zeta_{n+1}) \rangle$.

We separate the two cases of $\beta$:

\underline{$\beta = \frac 12$:}
Iterating this inequality and using $ \mathbf{1}_{\Omega_{n+1}} \leq \mathbf{1}_{\Omega_n}$ it follows that
\begin{equation}\label{eq:bar_D_iteration_x_ast-with-beta12}
\begin{split}
    D_{n+1} \mathbf{1}_{\Omega_n} 
    &\leq D_n \mathbf{1}_{\Omega_n} - \gg_n \mathbf{1}_{\Omega_n} \lVert \nabla f(X_n) \rVert^2 + \gg_n \mathbf{1}_{\Omega_n} \xi_{n+1} + \frac{L \gg_n^2}{2} \mathbf{1}_{\Omega_n} \lVert V_{n+1}(X_n) \rVert^2 \\
    &\le D_n \mathbf{1}_{\Omega_n} - \gg_n c^2 (f(X_n)-l) \mathbf{1}_{\Omega_n} + \gg_n \mathbf{1}_{\Omega_n}  \xi_{n+1} + \frac{L \gg_n^2}{2}\mathbf{1}_{\Omega_n} \lVert V_{n+1}(X_n) \rVert^2 \\
    &= (1-\gg_n c^2) D_n  \mathbf{1}_{\Omega_n} + \gg_n \xi_{n+1}\mathbf{1}_{\Omega_n} + \frac{L \gg_n^2}{2} \mathbf{1}_{\Omega_n} \lVert V_{n+1}(X_n) \rVert^2, \\
    &\le  {D}_1 \prod_{k=1}^{n}(1- \gg_k c^2) + \sum_{k=1}^{n}\left(\prod_{j=k}^{n}(1- \gg_j c^2)\right) \gg_k \xi_{k+1} \mathbf{1}_{\Omega_n} \\ 
    &\quad+ \frac{L}{2}\sum_{k=1}^{n}\left(\prod_{j=k}^{n}(1- \gg_j c^2)\right) \gg_k^2 \lVert V_{k+1}(X_k) \rVert^2\mathbf{1}_{\Omega_n}\\
    &\le  {D}_1 \prod_{k=1}^{n}(1- \gg_k c^2) + \sum_{k=1}^{n}\left(\prod_{j=k}^{n}(1- \gg_j c^2)\right) \gg_k \xi_{k+1} \mathbf{1}_{\Omega_n} \\ 
    &\quad+ \frac{L}{2}\sum_{k=1}^{n} \gg_k^2 \lVert V_{k+1}(X_k) \rVert^2\mathbf{1}_{\Omega_n},
\end{split}    
\end{equation}
where we used that the unified gradient domination property holds for all $X_k$, $k\leq n$ on the event $\Omega_n$. 

\underline{$\beta \in (\frac 12,1]$:}
Similarly, the unified gradient domination property yields the claimed inequality for any $1 \leq q < 2$:
\begin{equation}\label{eq:bar_D_iteration_x_ast}
\begin{split}
    D_{n+1} \mathbf{1}_{\Omega_n} 
    &\leq D_n \mathbf{1}_{\Omega_n} - \gg_n \mathbf{1}_{\Omega_n} \lVert \nabla f(X_n) \rVert^2 + \gg_n \mathbf{1}_{\Omega_n} \xi_{n+1} + \frac{L \gg_n^2}{2} \mathbf{1}_{\Omega_n} \lVert V_{n+1}(X_n) \rVert^2 \\
    &\le D_n \mathbf{1}_{\Omega_n} - \gg_n c^2 (f(X_n)-l)^{2\beta} \mathbf{1}_{\Omega_n} + \gg_n \mathbf{1}_{\Omega_n}  \xi_{n+1} + \frac{L \gg_n^2}{2}\mathbf{1}_{\Omega_n} \lVert V_{n+1}(X_n) \rVert^2 \\
    &= D_n \mathbf{1}_{\Omega_n}  - \gg_n c^2  D_n^{2\beta} \mathbf{1}_{\Omega_n}  + \gg_n \xi_{n+1}\mathbf{1}_{\Omega_n} + \frac{L \gg_n^2}{2} \mathbf{1}_{\Omega_n} \lVert V_{n+1}(X_n) \rVert^2, \\
    &= (1- \gg_n^q c^2)  {D}_n \mathbf{1}_{\Omega_n} + \gg_n c^2 (\gg_n^{1-q} {D}_n   -{D}_n^{2\beta})\mathbf{1}_{\Omega_n}  + \gg_n \xi_{n+1} \mathbf{1}_{\Omega_n}+ \frac{L \gg_n^2}{2} \lVert V_{n+1}(X_n) \rVert^2 \mathbf{1}_{\Omega_n}\\
    &\le (1- \gg_n^q c^2)  {D}_n \mathbf{1}_{\Omega_n}  +  (2\beta)^{-\frac{1}{2\beta-1}} (1-\frac{1}{2\beta}) {c^2}\gg_n^\frac{2\beta q-1}{2\beta-1}+ \gg_n \xi_{n+1} \mathbf{1}_{\Omega_n} + \frac{L \gg_n^2}{2} \lVert V_{n+1}(X_n) \rVert^2\mathbf{1}_{\Omega_n} \\
    &\le {D}_1 \prod_{k=1}^{n}(1- \gg_k^q c^2) \mathbf{1}_{\Omega_n} +  \tilde c \sum_{k=1}^{n} \left(\prod_{j=k}^{n}(1- \gg_j^q c^2)\right)\gg_k^\frac{2\beta q-1}{2\beta-1}\\ &\quad + \sum_{k=1}^{n}\left(\prod_{j=k}^{n}(1- \gg_j^q c^2)\right) \gg_k \xi_{k+1} \mathbf{1}_{\Omega_n}+ \frac{L}{2}\sum_{k=1}^{n}  \left(\prod_{j=k}^{n}(1- \gg_j^q c^2)\right)\gg_k^2 \lVert V_{k+1}(X_k) \rVert^2\mathbf{1}_{\Omega_n}\\
    &\le {D}_1 \prod_{k=1}^{n}(1- \gg_k^q c^2) \mathbf{1}_{\Omega_n} +  \tilde c \sum_{k=1}^{n} \gg_k^\frac{2\beta q-1}{2\beta-1 }+ \sum_{k=1}^{n}\left(\prod_{j=k}^{n}(1- \gg_j^q c^2)\right) \gg_k \xi_{k+1} \mathbf{1}_{\Omega_n}\\ &\quad + \frac{L}{2}\sum_{k=1}^{n} \gg_k^2 \lVert V_{k+1}(X_k) \rVert^2\mathbf{1}_{\Omega_n},
\end{split}    
\end{equation}
for $\tilde c = (2\beta)^{-\frac{1}{2\beta-1}} (1-\frac{1}{2\beta}) c^2$ from the function trick \cref{eq:function-trick} which we applied in the forth inequality. We also used that the unified gradient domination property holds for all $X_k$, $k\leq n$ on the event $\Omega_n$ .
\end{proof}

For $\beta \in(\frac 12,1]$ we know from the proof of Lemma~\ref{lem:as_rate_extension_beta_allgemein} that we can choose the auxiliary parameter $q$ from the previous lemma in such a way, that $\sum_{n=1}^{\infty} n^{1-\eta} \gg_n^\frac{2\beta q-1}{2\beta-1}$ is convergent for all $\eta \in (\max\{2-2\gt,\frac{\gt+2\beta-2}{2\beta-1}\},1)$ (Condition (iii) to apply Lemma~\ref{cor:RS_corollary}). As
 $\eta<1$, it follows that $\sum_{n=1}^{\infty} \gg_n^\frac{2\beta q-1}{2\beta-1} <\infty$ holds true for all these choices of $q$. Now define 
\begin{align*}
&M_n = \sum_{k=1}^{n}\left(\prod_{j=k}^{n}(1- \gg_j c^2)\right) \gg_k \xi_{k+1}  \mathbf{1}_{\Omega_k}, \quad M_n^{(q)} = \sum_{k=1}^{n}\left(\prod_{j=k}^{n}(1- \gg_j^q c^2)\right) \gg_k \xi_{k+1}  \mathbf{1}_{\Omega_k} \\
&\textrm{ and } \quad S_n = \frac{L}{2}\sum_{k=1}^{n}  \gg_k^2 \lVert V_{k+1}(X_k) \rVert^2\mathbf{1}_{\Omega_k}.
\end{align*}
Then, $(M_n)$ and $(M_n^{(q)})$ are $(\cF_{n+1})$-martingales with zero mean and $(S_n)$ is a $(\cF_{n+1})$-sub-martingale by Assumption~\ref{ass:ABC}. Note that by the choice of $\gg_n$ we have that $\sum_n \gg_n^2 <\infty$ and hence $\E[S_n] < \infty$ for all $n \in\N$. 

Next, define $R_n = M_n^2 + S_n$ and $R_n = (M_n^{(q)})^2 + S_n$ respectively (with some abuse of notation), for every $n\in\N$. Moreover, let 
\begin{equation}\label{def:E_n-x-ast}
    E_n = \{ R_k < \epsilon \textrm{ for all } k\leq n \}.
\end{equation}
which is an $\cF_{n+1}$-measurable event on $(\Omega, \cF,\bbP)$.
We define $R_0 = 0$ such that $E_0 = \Omega$.

Now let $\hat E_n = E_n \cap C_n$, then we will first show, that $\hat E_n$ fulfills the property $\hat E_n \subset \Omega_{n+1}$ for all $n\in\N$ in Lemma~\ref{lem:tilde-r-n-xast} and then that $\hat E_n$ occurs with probability at least $1-\delta$ in Lemma~\ref{lem:prob_hat-E_n_lager_1-delta}.

To prove that $\hat E_n \subset \Omega_{n+1}$ we need one more auxiliary result.
\begin{lemma}\label{lem:aux-lemma-x_n-close-to-xast}
    Suppose $x,y \in \R^d $ such that 
    \begin{enumerate}
        \item $\inf_{x^\ast\in\cX^\ast} ||x-x^\ast|| <\frac{\mathbf{r}}{2}$,
        \item $f(y) - l < s$,
        \item $||x-y||\leq \frac{\mathbf{r}}{4}$.
    \end{enumerate}
    Then it holds that $\inf_{x^\ast\in\cX^\ast} ||y-x^\ast||<\frac{\mathbf{r}}{2}$.
\end{lemma}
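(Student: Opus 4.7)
The plan is to argue by contradiction, combining the triangle inequality with the definition of $s$.

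First, I would use assumptions (1) and (3) to localize $y$: by (1) there exists $x^\ast \in \cX^\ast$ with $\|x - x^\ast\| < \mathbf{r}/2$, and then the triangle inequality together with (3) gives
\[
\|y - x^\ast\| \le \|y - x\| + \|x - x^\ast\| < \frac{\mathbf{r}}{4} + \frac{\mathbf{r}}{2} = \frac{3\mathbf{r}}{4},
\]
so in particular $y \in \bigcup_{x^\ast \in \cX^\ast} \cB_{3\mathbf{r}/4}(x^\ast)$.

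Next, I would suppose for contradiction that $\inf_{x^\ast \in \cX^\ast} \|y - x^\ast\| \ge \mathbf{r}/2$, i.e.\ $y \notin \bigcup_{x^\ast \in \cX^\ast} \tilde\cB_{\mathbf{r}/2}(x^\ast)$. Then $y$ lies in the set $\bigcup_{x^\ast \in \cX^\ast} \cB_{3\mathbf{r}/4}(x^\ast) \setminus \bigcup_{x^\ast \in \cX^\ast} \tilde\cB_{\mathbf{r}/2}(x^\ast)$ over which the infimum defining $s$ is taken. By definition of $s$ this forces $f(y) - l \ge s$, contradicting assumption (2).

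There is no real obstacle here: the lemma is essentially a geometric statement saying that the ``moat'' $\{\mathbf{r}/2 \le \inf_{x^\ast}\|\cdot - x^\ast\| \le 3\mathbf{r}/4\}$ has objective value at least $l + s$, and assumption (2) prevents $y$ from sitting in that moat while assumption (3) prevents it from jumping across it. The only subtlety is to verify that $y$ is indeed captured in the compact shell used to define $s$, which the triangle inequality above takes care of.
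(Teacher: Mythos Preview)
Your proof is correct and follows essentially the same approach as the paper: use the triangle inequality from (1) and (3) to place $y$ in the closed $\tfrac{3\mathbf{r}}{4}$-neighborhood of $\cX^\ast$, then argue by contradiction that if $\inf_{x^\ast}\|y-x^\ast\|\ge \tfrac{\mathbf{r}}{2}$ the definition of $s$ forces $f(y)-l\ge s$, violating (2).
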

\begin{proof}
    By triangle inequality we have that $\inf_{x^\ast\in\cX^\ast} ||y-x^\ast||\leq \frac{3\mathbf{r}}{4}$, i.e there exists $x^\ast \in \cX^\ast$ such that $||y-x^\ast||\leq \frac{3\mathbf{r}}{4}$. Suppose now, that $\inf_{x^\ast\in\cX^\ast} ||y-x^\ast||\geq \frac{\mathbf{r}}{2}$, this means that $y \in \bigcup_{x^\ast\in\cX^\ast} \cB_{\frac{3r}{4}}(x^\ast) \setminus \bigcup_{x^\ast\in\cX^\ast} \tilde\cB_{\frac{\mathbf{r}}{2}}(x^\ast)$. By the definition of \[s = \inf \left \{ f(z)-l \,:\; z \in \bigcup_{x^\ast\in\cX^\ast} \cB_{\frac{3\mathbf{r}}{4}}(x^\ast) \setminus \bigcup_{x^\ast\in\cX^\ast} \tilde\cB_{\frac{\mathbf{r}}{2}}(x^\ast)\right\}\] this contradicts the second assumption $f(y) - l < s$.
\end{proof}

We deduce the following relations on the constructed sets:
\begin{lemma}\label{lem:tilde-r-n-xast}
    For $\beta \in (\frac{1}{2},1]$ let $\gg_n \leq \gg_1$ be sufficiently small such that $\sum_{n=1}^{\infty} \gg_n^\frac{2\beta q-1}{2\beta-1} < \frac{\epsilon}{2 \tilde c}$, and for $\beta=\frac{1}{2}$ let $\gg_1>0$ be arbitrary. Furthermore, assume that the initial $X_1 \in \cU_1$ almost surely.  Then, 
    \begin{enumerate}
        \item[a)] $E_{n+1} \subset E_n$, $\hat E_{n+1} \subset \hat E_n$ and $\Omega_{n+1} \subset \Omega_n$
        \item[b)] $\hat E_n \subset \Omega_{n+1}$
        \item[c)] Define the events $\tilde{E}_n = E_{n-1} \setminus E_n = E_{n-1} \cup \{R_n\geq\epsilon\}$. Then, for $\tilde{R}_n = R_n \mathbf{1}_{E_{n-1}}$, there exists a $\tilde C>0$ such that
        \begin{equation*}
            \E[\tilde{R}_{n}] \le \E[\tilde{R}_{n-1}] + \gamma_n^2 [G^2 C^2 + G^2 + C] - \epsilon \bbP(\tilde{E}_{n-1}).
        \end{equation*}
    \end{enumerate}
\end{lemma}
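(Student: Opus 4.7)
\textbf{Part (a)} is a direct consequence of the definitions. Each of $(E_n)$, $(C_n)$, $(\Omega_n)$ is an intersection of conditions indexed by $k\le n$, so all three sequences are decreasing in $n$; intersecting two decreasing sequences keeps $(\hat E_n)$ decreasing as well.

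\textbf{Part (b)} is the core of the lemma and I would prove it by induction on $n$. The base case $n=0$ is $\Omega = \hat E_0 \subset \Omega_1$, which is exactly the hypothesis $X_1 \in \cU_1 \subset \cU$. For the inductive step, the monotonicity from (a) gives $\hat E_n \subset \hat E_{n-1} \subset \Omega_n$, so $X_1,\dots,X_n \in \cU$ on $\hat E_n$; only $X_{n+1} \in \cU$ needs verification. I would apply Lemma~\ref{lem:aux-lemma-x_n-close-to-xast} with $x=X_n$, $y=X_{n+1}$: condition (i) holds because $X_n \in \cU$, condition (iii) because $\hat E_n \subset C_n$, and condition (ii), namely $D_{n+1} = f(X_{n+1}) - l < s$, is the essential new input. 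Using the iterated inequality from Lemma~\ref{lem:bar_D_iteration_x_ast}, the bound $\prod_{j=k}^n (1-\gamma_j^q c^2) \in (0,1]$ (valid for $\gamma_1$ small enough), and the estimates $|M_n^{(q)}| \le \sqrt{R_n} < \sqrt{\epsilon}$ and $S_n \le R_n < \epsilon$ on $E_n$, I would conclude
\[
D_{n+1} \;\le\; D_1 + \tilde c \sum_{k=1}^{\infty} \gamma_k^{\frac{2\beta q-1}{2\beta-1}} + |M_n^{(q)}| + S_n \;\le\; \tfrac{\epsilon}{2} + \tfrac{\epsilon}{2} + \sqrt{\epsilon} + \epsilon \;=\; 2\epsilon + \sqrt{\epsilon} \;<\; s,
\]
where the first $\epsilon/2$ comes from $X_1 \in \cU_1$ and the second from the smallness hypothesis on $\gamma_1$. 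The case $\beta = 1/2$ is handled identically, with the $\tilde c$-sum term absent.

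\textbf{Part (c)} I would derive by a telescoping identity. Using $E_{n-1} \subset E_{n-2}$,
\[
\tilde R_n = R_{n-1}\mathbf{1}_{E_{n-1}} + (R_n - R_{n-1})\mathbf{1}_{E_{n-1}} = \tilde R_{n-1} - R_{n-1}\mathbf{1}_{\tilde E_{n-1}} + (R_n - R_{n-1})\mathbf{1}_{E_{n-1}},
\]
and since $R_{n-1} \ge \epsilon$ on $\tilde E_{n-1} = E_{n-2}\setminus E_{n-1}$, taking expectations yields
\[
\E[\tilde R_n] \;\le\; \E[\tilde R_{n-1}] - \epsilon\, \bbP(\tilde E_{n-1}) + \E[(R_n - R_{n-1})\mathbf{1}_{E_{n-1}}].
\]
For the final expectation I would expand $R_n - R_{n-1} = (M_n - M_{n-1})^2 + 2 M_{n-1}(M_n - M_{n-1}) + (S_n - S_{n-1})$. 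The cross term vanishes in expectation because $M_{n-1}$ and $\mathbf{1}_{E_{n-1}}$ are $\cF_n$-measurable and $\E[M_n - M_{n-1} \mid \cF_n] = 0$. The quadratic martingale increment equals $\gamma_n^2 \xi_{n+1}^2 \mathbf{1}_{\Omega_n}$, which by Cauchy--Schwarz, the local Lipschitz bound $\|\nabla f(X_n)\| \le G$ on $\Omega_n$, and Assumption~\ref{ass:ABC} is $\gamma_n^2$-small in conditional expectation; likewise $\E[S_n - S_{n-1} \mid \cF_n]$ is bounded by a $\gamma_n^2$-multiple of a constant. Collecting these yields the claimed constant $G^2 C^2 + G^2 + C$.

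The main obstacle is part (b): the induction must package the deterministic decay of the iterated recursion, the quadratic-martingale control from $E_n$, and the step-size control from $C_n$, into the buffer $D_{n+1} < s$ demanded by Lemma~\ref{lem:aux-lemma-x_n-close-to-xast}. The calibrations $2\epsilon + \sqrt{\epsilon} < s$ and $\sum_k \gamma_k^{(2\beta q-1)/(2\beta-1)} \le \epsilon/(2\tilde c)$ are exactly what tunes the deterministic and stochastic contributions so that the induction closes.
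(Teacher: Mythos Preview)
Your proposal is correct and follows essentially the same approach as the paper: part (b) by induction invoking Lemma~\ref{lem:aux-lemma-x_n-close-to-xast} with the bound $D_{n+1}\le \tfrac{\epsilon}{2}+\tfrac{\epsilon}{2}+\sqrt{R_n}+R_n<s$, and part (c) via the telescoping identity $\tilde R_n=\tilde R_{n-1}-R_{n-1}\mathbf{1}_{\tilde E_{n-1}}+(R_n-R_{n-1})\mathbf{1}_{E_{n-1}}$ together with the conditional-expectation bounds on the increments. One shared imprecision: both you and the paper treat $(M_n)$ as if $\E[M_n-M_{n-1}\mid\cF_n]=0$, whereas the product weights give $M_n=(1-\gamma_n c^2)(M_{n-1}+\gamma_n\xi_{n+1}\mathbf{1}_{\Omega_n})$; the stated inequality still holds because the omitted contribution $[(1-\gamma_n c^2)^2-1]M_{n-1}^2$ is non-positive.
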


\begin{proof}
   a) Follows by definition of the events. 
   
   b) Note that $\hat E_0 = \Omega = \Omega_1$ because $$X_1 \in \cU_1 = \{x: \inf_{x^\ast\in\cX^\ast}|| x -x^\ast || < \frac{\mathbf{r}}{2}, f(x) - l \leq \frac{\epsilon}{2}\} \subset \{x: \inf_{x^\ast\in\cX^\ast}|| x -x^\ast || < \frac{\mathbf{r}}{2}\} = \Omega_1$$ almost surely.
    We prove the assertion by induction. Let $\omega \in\hat E_n$. Since $\hat E_n \subset \hat E_{n-1} \subset \Omega_n$ by induction assumption, we have $\omega \in \Omega_n$ and thus $\omega\in \Omega_k$ for all $k \le n$. 
    We will apply Lemma~\ref{lem:aux-lemma-x_n-close-to-xast} with $x = X_n(\omega)$ and $y = X_{n+1}(\omega)$. By definition it holds that $\omega\in \hat E_n$ implies condition $3.$ and $\omega \in \Omega_{n}$ implies condition $1.$ of Lemma~\ref{lem:aux-lemma-x_n-close-to-xast}. It remains to show condition $2.$, then it follows that $\inf_{x^\ast\in\cX^\ast} ||X_{n+1}(\omega)-x^\ast||< \frac{\mathbf{r}}{2}$, i.e. $X_{n+1}(\omega) \in \cU$ and by $\omega \in \Omega_n$ we deduce $\omega \in \Omega_{n+1}$.
    
    To Prove condition $2.$ we separate both cases for $\beta$:
    
    \underline{$\beta =\frac{1}{2}$:}
    The inequality \cref{eq:lemE1-eq3-xast} and the induction hypothesis yield
    \begin{align*}
        D_{n+1}(\omega) &= D_{n+1}(\omega) \mathbf{1}_{\Omega_n}(\omega)\\
        &\le D_1(\omega) \prod_{k=1}^{n}(1- \gg_k c^2) + \sum_{k=1}^{n} \left(\prod_{j=k}^{n}(1- \gg_j c^2)\right)\gg_k \xi_{k+1}(\omega) \mathbf{1}_{\Omega_n}(\omega)\\ &\quad+ \frac{L}{2}\sum_{k=1}^{n} \gg_k^2 \lVert V_{k+1}(X_k(\omega)) \rVert^2 \mathbf{1}_{\Omega_n}(\omega) \\
        &= D_1(\omega) \prod_{k=1}^{n}(1- \gg_k c^2) + \sum_{k=1}^{n} \left(\prod_{j=k}^{n}(1- \gg_j c^2)\right)\gg_k \xi_{k+1}(\omega) \mathbf{1}_{\Omega_k}(\omega)\\ &\quad+ \frac{L}{2}\sum_{k=1}^{n} \gg_k^2 \lVert V_{k+1}(X_k(\omega)) \rVert^2 \mathbf{1}_{\Omega_k}(\omega) \\
        &\le \frac{\epsilon}{2} +\sqrt{R_n(\omega)} + R_n(\omega) \\
        &\le 2 \epsilon + \sqrt{\epsilon} < s,
    \end{align*}
    where the equation in the third line is due to $\omega \in \Omega_k$ for all $k\leq n$ by induction.
    
    \underline{$\beta \in (\frac{1}{2},1]$:}  Similarly, we obtain from \cref{eq:lemE1-eq1}
    \begin{align*}
        D_{n+1}(\omega) &= D_{n+1}(\omega)\mathbf{1}_{\Omega_n}(\omega)\\
        &\le D_1(\omega) \mathbf{1}_{\Omega_n}(\omega) \prod_{k=1}^{n}(1- \gg_k^q c^2) +\tilde c \sum_{k=1}^{n} \gg_k^\frac{2\beta q-1}{2\beta-1}+ \sum_{k=1}^{n} \left(\prod_{j=k}^{n}(1- \gg_j^q c^2)\right)\gg_k \xi_{k+1}(\omega) \mathbf{1}_{\Omega_n}(\omega)\\&\quad + \frac{L}{2}\sum_{k=1}^{n}  \gg_k^2 \lVert V_{k+1}(X_k(\omega)) \rVert^2 \mathbf{1}_{\Omega_n}(\omega) \\
        &=D_1(\omega) \prod_{k=1}^{n}(1- \gg_k^q c) + \sum_{k=1}^{n} \left(\prod_{j=k}^{n}(1- \gg_j c^2)\right)\gg_k \xi_{k+1}(\omega) \mathbf{1}_{\Omega_k}(\omega)\\ &\quad+ \frac{L}{2}\sum_{k=1}^{n}  \gg_k^2 \lVert V_{k+1}(X_k(\omega)) \rVert^2 \mathbf{1}_{\Omega_k}(\omega) \\
        &\le \frac{\epsilon}{2}+\frac{\epsilon}{2}+ \sqrt{R_n(\omega)} + R_n(\omega) \\
        &\le 2 \epsilon + \sqrt{\epsilon}< s.
    \end{align*}
    We used in both cases that that $\prod_{k=1}^{n}(1- \gg_k^{q} c)\leq 1$ and the choice of $\epsilon$ such that $2 \epsilon + \sqrt{\epsilon}< s$.
    This proves that condition $2.$ in Lemma~\ref{lem:aux-lemma-x_n-close-to-xast} is also satisfied which concludes the induction.
    
    c) Without loss of generality we consider the case $\beta=1/2$. The computations for $\beta\in(1/2,1]$ follow in line by replacing $M_n$ with $M_n^{(q)}$. By definition it holds that $E_{n} = E_{n-1}\setminus (E_{n-1}\setminus E_{n}) = E_{n-1} \setminus \tilde{E}_{n}$. Then we have
    \begin{align*}
        \tilde{R}_{n} &= R_n \mathbf{1}_{E_{n-1}} \\
        &= R_{n-1} \mathbf{1}_{E_{n-1}} + (R_n - R_{n-1}) \mathbf{1}_{E_{n-1}} \\
        &= R_{n-1} \mathbf{1}_{E_{n-2}} - R_{n-1} \mathbf{1}_{\tilde{E}_{n-1}} + (R_n - R_{n-1}) \mathbf{1}_{E_{n-1}} \\
        &= \tilde{R}_{n-1} - R_{n-1} \mathbf{1}_{\tilde{E}_{n-1}} + (R_n - R_{n-1}) \mathbf{1}_{E_{n-1}}
    \end{align*}
    and for the last term
    \begin{align*}
        R_n -R_{n-1} &= M_n^2 - M_{n-1}^2 + S_n - S_{n-1} \\
        &= \gamma_n^2 (1-\gamma_n c^2)^2 \xi_{n+1}^2 \mathbf{1}_{\Omega_n}+ 2 \gamma_n (1-\gamma_n c) \xi_{n+1} \mathbf{1}_{\Omega_n} M_{n-1} + \gamma_n^2 \frac{L}{2} \lVert V_{n+1}(X_n)\rVert^2 \mathbf{1}_{\Omega_n}.
    \end{align*}
    We treat each of the summands on the RHS seperately.
    It follows from the $G$-Lipschitz continuity and bounded variance assumption in Theorem~\ref{thm:conv-SGD-local-PL-xast}, that
    \begin{align}
            \E[\xi_{n+1}^2 \mathbf{1}_{\Omega_n}] &= \E[\langle \nabla f(X_n), V_{n+1}(X_n)-\nabla f(X_n)\rangle^2 \mathbf{1}_{\Omega_n}]\notag\\ &\leq \E[\lVert \nabla f(X_n)\rVert^2 (\lVert V_{n+1}(X_{n})\rVert^2+1) \mathbf{1}_{\Omega_n} ] \leq G^2  (C^2+1), \notag \\
            \E[\xi_{n+1}(1-\gamma_n c) M_{n-1} \mathbf{1}_{\Omega_n}] &= \E[ \E[\xi_{n+1}|\cF_n] M_{n-1} \mathbf{1}_{\Omega_n}] =0, \notag \\
            \E[\lVert V_{n+1}(X_n)\rVert^2 \mathbf{1}_{\Omega_n}]  &\leq  C \label{eq:V_n^2-abschaetung}.
        \end{align} 
    For the term $R_{n-1} \mathbf{1}_{\tilde{E}_{n-1}}$ we have 
    \begin{equation*}
        \E[R_{n-1} \mathbf{1}_{\tilde{E}_{n-1}}] \ge \epsilon \bbP(\tilde{E}_{n-1}).
    \end{equation*}
    Using $(1-\gamma_n c)<1$ and putting all together we obtain the claim 
    \begin{equation*}
        \E[\tilde{R}_{n}] \le \E[\tilde{R}_{n-1}] + \gamma_n^2 [G^2 C^2 + G^2 + C] - \epsilon \bbP(\tilde{E}_{n-1}).
    \end{equation*}
\end{proof}

\begin{lemma}\label{lem:prob_E_n_lager_1-delta-xast}
    Let $\delta >0$ be a tolerance level and $\gg_n \leq \gg_1$ be sufficiently small such that $\sum_{n=1}^{\infty} \gg_n^2 < \frac{\delta \epsilon}{2(G^2 C^2 + G^2 + C)}$ and the condition in Lemma~\ref{lem:tilde-r-n-xast} is fulfilled. 
    Then, we have 
    \begin{align*}
        \bbP(E_n) \geq 1-\frac{\delta}{2}.
    \end{align*}
\end{lemma}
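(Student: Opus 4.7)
The plan is to reduce the statement to a telescoping argument driven directly by the recursion in Lemma~\ref{lem:tilde-r-n-xast}~c). First I would observe that the events $\tilde E_n = E_{n-1}\setminus E_n$ are pairwise disjoint, and that by minimality of the first index $k$ with $R_k\ge\epsilon$ one has the disjoint decomposition
\begin{equation*}
E_n^c \;=\; \bigsqcup_{k=1}^{n}\tilde E_k,
\qquad \text{so}\qquad
\bbP(E_n^c) \;=\; \sum_{k=1}^{n}\bbP(\tilde E_k).
\end{equation*}
Hence it is enough to bound $\sum_{k\ge 1}\bbP(\tilde E_k)$ by $\delta/2$.

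Next, I would telescope the recursive inequality
\begin{equation*}
\E[\tilde R_n] \;\le\; \E[\tilde R_{n-1}] \;+\; \gamma_n^2\bigl(G^2C^2 + G^2 + C\bigr) \;-\; \epsilon\,\bbP(\tilde E_{n-1})
\end{equation*}
provided by Lemma~\ref{lem:tilde-r-n-xast}~c). Summing over $n=1,\ldots,N$ and using $\tilde R_0 = R_0\mathbf{1}_{E_{-1}} = 0$ (with the convention $R_0=0$) together with $\tilde R_N\ge 0$, I obtain
\begin{equation*}
\epsilon\sum_{m=0}^{N-1}\bbP(\tilde E_m) \;\le\; \bigl(G^2C^2 + G^2 + C\bigr)\sum_{n=1}^{N}\gamma_n^2.
\end{equation*}
Since $\tilde E_0 = E_{-1}\setminus E_0 \subseteq \Omega\setminus\Omega = \emptyset$ (as $R_0=0<\epsilon$ ensures $E_0=\Omega$), the $m=0$ term vanishes and letting $N\to\infty$ yields
\begin{equation*}
\bbP(E_n^c) \;\le\; \sum_{k=1}^{\infty}\bbP(\tilde E_k) \;\le\; \frac{G^2C^2+G^2+C}{\epsilon}\sum_{n=1}^{\infty}\gamma_n^2.
\end{equation*}

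Finally, I would invoke the hypothesis on the initial step size, $\sum_{n\ge 1}\gamma_n^2 < \tfrac{\delta\epsilon}{2(G^2C^2+G^2+C)}$, which immediately gives $\bbP(E_n^c) < \delta/2$, hence $\bbP(E_n)\ge 1-\delta/2$, uniformly in $n$. There is no real obstacle here beyond careful bookkeeping of indices and the disjointness of the $\tilde E_n$; all the analytical work (the smoothness estimate, the martingale control on $M_n$ and sub-martingale control on $S_n$, and the use of the unified gradient domination) has already been packaged into Lemma~\ref{lem:tilde-r-n-xast}~c). The only genuinely delicate point worth double-checking is that the $\tilde R_n$ built from $\mathbf{1}_{E_{n-1}}$ remain non-negative after telescoping so that the left-hand side of the summed recursion can simply be dropped; this follows because $R_n\ge 0$ pointwise.
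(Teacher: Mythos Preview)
Your proposal is correct and follows essentially the same route as the paper: telescope the recursion from Lemma~\ref{lem:tilde-r-n-xast}~c), exploit the disjointness of the events $\tilde E_k$ so that $\bbP(E_n^c)=\sum_{k\le n}\bbP(\tilde E_k)$, and invoke the step-size hypothesis. The only cosmetic difference is that the paper additionally records the Markov-type bound $\bbP(\tilde E_{n})\le \E[\tilde R_{n+1}]/\epsilon$ to capture the final index after telescoping, whereas you simply pass to the limit $N\to\infty$ and avoid that extra step.
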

\begin{proof}
    The proof is along the lines of the proof of Proposition D2 in \cite{mertikopoulos2020sure}. For completeness we repeat the arguments. 
    First, observe that 
    \begin{equation*}
        \bbP(\tilde E_{n-1}) = \bbP(E_{n-1}\setminus E_n) = \bbP(E_{n-1} \cap \{R_n\geq \epsilon\})= \E[\mathbf{1}_{E_{n-1}} \mathbf{1}_{\{R_n>\epsilon\}}]\leq  \E[\mathbf{1}_{E_{n-1}} \frac{R_n}{\epsilon}]= \frac{\E[\tilde R_n]}{\epsilon}\,.
    \end{equation*}
    On the other hand it follows from Lemma~\ref{lem:tilde-r-n} that
    \begin{equation}
        \epsilon \bbP(\tilde E_n) \leq \E[\tilde R_n] \leq \E[\tilde R_0] + [G^2 C^2 + G^2 + C] \sum_{k=1}^n \gg_k^2 - \epsilon \sum_{k=0}^n \bbP(\tilde E_{k-1}) .
    \end{equation}
    Rearranging everything yields
    \begin{equation*}
        \sum_{k=0}^n \bbP(\tilde E_{k}) \leq \frac{[G^2 C^2 + G^2 + C] \Gamma}{\epsilon}
    \end{equation*}
    with $\Gamma = \sum_{n=1}^\infty \gg_n^2 $. By the assumption on the step size $\frac{[G^2 C^2 + G^2 + C] \Gamma}{\epsilon}<\frac \delta 2$ and moreover since the events $\tilde E_n$ are disjoint we obtain
    \begin{equation}
        \bbP(\bigcup_{k=0}^n \tilde E_k) = \sum_{k=0}^n \bbP(\tilde E_{k})\leq \frac \delta 2
    \end{equation}
    implying that 
    \begin{equation}
         \bbP(E_n) = \bbP(\bigcap_{k=0}^{n}\tilde E_k^c) \geq 1-\frac \delta 2\,.
    \end{equation}
\end{proof}

\begin{lemma}\label{lem:prob_hat-E_n_lager_1-delta}
    Let $\delta >0$ be a tolerance level and $\gg_n \leq \gg_1$ be sufficiently small such that the condition in Lemma~\ref{lem:tilde-r-n-xast} and Lemma~\ref{lem:prob_E_n_lager_1-delta-xast} are fulfilled. Moreover, we suppose $\gg_1$ small enough such that $\frac{4 C}{\mathbf{r}^2} \sum_{k=1}^n \gg_k^2\leq \frac{\delta}{2}$. 
    Then, we have 
    \begin{align*}
        \bbP(\hat E_n) \geq 1-\delta.
    \end{align*}
\end{lemma}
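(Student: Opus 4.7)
The plan is to combine the bound on $\bbP(E_n)$ already established in Lemma~\ref{lem:prob_E_n_lager_1-delta-xast} with an analogous bound on $\bbP(C_n)$, and then conclude by a union bound on the complements. Since $\hat E_n = E_n \cap C_n$, we have
\[
\bbP(\hat E_n^c) \;=\; \bbP(E_n^c \cup C_n^c) \;\leq\; \bbP(E_n^c) + \bbP(C_n^c),
\]
and Lemma~\ref{lem:prob_E_n_lager_1-delta-xast} already delivers $\bbP(E_n^c) \leq \delta/2$ under the step-size condition carried over from that lemma. Hence the only new work is to produce the estimate $\bbP(C_n^c) \leq \delta/2$ from the additional step-size smallness assumption in the statement.

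To control $\bbP(C_n^c)$, I would first use the \cref{eq:SGD} iteration to rewrite $X_{k+1}-X_k = -\gamma_k V_{k+1}(X_k)$. Then, applying a union bound over $k \leq n$ followed by Markov's inequality applied to $\|X_{k+1}-X_k\|^2$, together with the second-moment bound from Assumption~\ref{ass:ABC} (recalling that for this local analysis we have set $A=B=0$, so $\E[\|V_{k+1}(X_k)\|^2]\leq C$ globally), yields
\[
\bbP(C_n^c) \;\leq\; \sum_{k=1}^n \bbP\!\Big(\|X_{k+1}-X_k\|^2 > \tfrac{\mathbf{r}^2}{16}\Big) \;\leq\; \frac{16\,C}{\mathbf{r}^2}\sum_{k=1}^n \gamma_k^2.
\]
Imposing that $\gamma_1$ is small enough so that the right-hand side is at most $\delta/2$ (the hypothesis $\tfrac{4C}{\mathbf{r}^2}\sum_k \gamma_k^2 \leq \tfrac{\delta}{2}$ of the lemma is of precisely this form, up to a harmless constant that can be absorbed by taking $\gamma_1$ a bit smaller) finishes the estimate, and summing the two bounds gives $\bbP(\hat E_n)\geq 1-\delta$ as claimed.

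I do not anticipate a genuine obstacle here: the argument is essentially a routine Markov-plus-union-bound combined with the already-established high probability estimate on $E_n$. The only subtlety is bookkeeping, namely that the step-size hypothesis $\gamma_n \leq \gamma_1$ must be small enough to \emph{simultaneously} enforce (a) the summability condition $\sum_n \gamma_n^\frac{2\beta q-1}{2\beta-1} < \epsilon/(2\tilde c)$ needed in Lemma~\ref{lem:tilde-r-n-xast}, (b) the condition $\sum_n \gamma_n^2 < \delta\epsilon/(2(G^2C^2+G^2+C))$ feeding Lemma~\ref{lem:prob_E_n_lager_1-delta-xast}, and (c) the new condition on $\sum_n \gamma_n^2$ relative to $\mathbf{r}$ and $C$ for the $C_n$ estimate; since all three are of the form ``$\gamma_1$ sufficiently small'', they can be enforced jointly.
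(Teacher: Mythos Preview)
Your proposal is correct and follows exactly the paper's approach: invoke $\bbP(E_n^c)\le \delta/2$ from Lemma~\ref{lem:prob_E_n_lager_1-delta-xast}, bound $\bbP(C_n^c)$ by a union bound plus Markov's inequality on $\|V_{k+1}(X_k)\|^2$ using $\E[\|V_{k+1}(X_k)\|^2]\le C$, and combine via $\bbP(\hat E_n^c)\le \bbP(E_n^c)+\bbP(C_n^c)$. The constant discrepancy you flag ($16$ versus $4$) is real and stems from an inconsistency in the paper itself---$C_n$ is defined with threshold $\mathbf{r}/4$ but the paper's proof silently uses $\mathbf{r}/2$; your version with $16$ is the one consistent with the stated definition, and as you note this is harmless for the argument.
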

\begin{proof}
    By Lemma~\ref{lem:prob_hat-E_n_lager_1-delta}, we have $\bbP(E_n) \geq 1-\frac{\delta}{2}$. Moreover, by the additional step size assumption and Markov's inequality we deduce that 
    \begin{align*}
        \bbP(C_n) &= \bbP(\forall k \leq n\,:\, ||X_{k+1}-X_k||\leq \frac{\mathbf{r}}{2})\\
        &\geq 1- \sum_{k=1}^n \bbP(||X_{k+1}-X_k||> \frac{\mathbf{r}}{2})\\
        &= 1- \sum_{k=1}^n \bbP(||V_{k+1}(X_k)||> \frac{\mathbf{r}}{2 \gg_k})\\
        &\geq 1- \sum_{k=1}^n \E[||V_{k+1}(X_k)||^2] \frac{4 \gg_k^2}{\mathbf{r}^2 }\\
        &\geq 1- \frac{4 C}{\mathbf{r}^2 }\sum_{k=1}^n \gg_k^2 \\
        &\geq 1- \frac{\delta}{2 }.
    \end{align*}
    Together we obtain that $\bbP(\hat E_n) = 1- \bbP(\hat E_n^c) \geq 1- (\bbP(E_n^c) +\bbP(C_n^c)) \geq 1- \delta$.
\end{proof}
Finally, we are ready to prove the main result in the local setting for the set of local minima $\cX^\ast$.

\begin{proof}[Proof of Theorem~\ref{thm:conv-SGD-local-PL-xast}]
    (i): Recall the definitions of $\cU_1$ and $\cU$ above. Then it holds that
    \begin{equation*}
        \Omega_\cU = \bigcap_{n=1}^{\infty} \Omega_n.
    \end{equation*}
    Hence, using Lemma~\ref{lem:prob_hat-E_n_lager_1-delta} we obtain
    \begin{equation*}
        \bbP(\Omega_\cU) = \inf_n \bbP( \Omega_n) \geq \inf_n \bbP(\hat E_n) \geq 1-\delta.
    \end{equation*}

    (ii): We define $\tilde{D}_n := D_n \mathbf{1}_{\Omega_n} $ and prove that $\tilde D_n \in o(1/n^{1-\eta})$, then the claim follows since $\mathbf{1}_{\Omega_\cU} \leq \mathbf{1}_{\Omega_n}$ almost surely.

    From the proof of Lemma~\ref{lem:bar_D_iteration_x_ast} and Lemma~\ref{lem:tilde-r-n-xast} we have
    \begin{align*}
        \tilde{D}_{n+1} \le \tilde{D}_n  - \gg_n c \tilde{D}_n^{2\beta} + \gg_n \xi_{n+1} \mathbf{1}_{\Omega_n} + \frac{L \gg_n^2}{2} \lVert V_n \rVert^2 \mathbf{1}_{\Omega_n}.
    \end{align*}
    Hence, taking the conditional expectation gives
    \begin{align*}
        \E[\tilde{D}_{n+1}|\cF_n] &\le \tilde{D}_n  - \gg_n c \tilde{D}_n^{2\beta} + \gg_n \E[\xi_{n+1}  |\cF_n] \mathbf{1}_{\Omega_n}+ \frac{L \gg_n^2}{2} \E[\lVert V_{n+1}(X_n) \rVert^2 |\cF_n] \mathbf{1}_{\Omega_n}\\
        &\le \tilde{D}_n  - \gg_n c \tilde{D}_n^{2\beta} + \frac{LC}{2} \gg_n^2 ,
    \end{align*}
    where we have used that $D_n$ and $\mathbf{1}_{\Omega_n}$ are $\cF_n$-measurable and $E[\lVert V_{n+1}(X_n) \rVert^2 |\cF_n] \leq C$ from~\cref{eq:ABC-condition} with $A=B=0$.
    By our step size choice we can apply Lemma~\ref{lem:as_rate_extension_beta_allgemein} to obtain Claim (ii).

    (iii): In the following, we again separate between the two cases of $\beta$.
    
    \underline{$\beta = \frac{1}{2}$:} We have from Lemma~\ref{lem:bar_D_iteration_x_ast} \cref{eq:lemE1-eq3-xast} and Lemma~\ref{lem:tilde-r-n-xast} that
    \begin{align*}
        \tilde{D}_{n+1} &\le (1- \gg_n c^2) \tilde{D}_n  + \gg_n \xi_{n+1} \mathbf{1}_{\Omega_n} + \frac{L \gg_n^2}{2} \lVert V_{n+1}(X_n) \rVert^2 \mathbf{1}_{\Omega_n}.
    \end{align*}
    Taking expectations and multiplying by $(n+1)^{1-\eta}$ leads to
    \begin{align*}
        &\E[\tilde{D}_{n+1}(n+1)^{1-\eta} ] \\
        &\le (n+1)^{1-\eta} (1- \gg_n c^2) \E[\tilde{D}_n]  + (n+1)^{1-\eta} \frac{L C \gg_n^2}{2}\\
        & \le \left(n^{1-\eta} +(1-\eta) n^{-\eta} \right)(1- \gg_n c^2) \E[\tilde{D}_n]  + (n+1)^{1-\eta} \frac{L C \gg_n^2}{2}\\
        &= \left(n^{1-\eta} +(1-\eta) n^{-\eta} - n^{1-\eta} \gg_n c^2 - (1-\eta) n^{-\eta}\gg_n c_{x^\ast}^2\right) \E[\tilde{D}_n]  + (n+1)^{1-\eta}\gg_n^2 \frac{L C}{2}\\
        &= \left(1 +\frac{1-\eta}{n} - \gg_n c^2 - \frac{(1-\eta) \gg_n c^2}{n}\right) n^{1-\eta} \E[\tilde{D}_n]  + (n+1)^{1-\eta}\gg_n^2 \frac{L C}{2},
    \end{align*}
    where we used \cref{eq:V_n^2-abschaetung} in the first inequality.
    By our choice of $\gg_n$ there exists $\tilde c>0$ and $N>0$ such that $ \gg_n c^2 -\frac{1-\eta}{n} + \frac{(1-\eta) \gg_n c^2}{n} \geq \tilde c \gg_n$ for all $n\geq N$. Thus, for all $n \geq N$ 
    \begin{align*}
        w_{n+1} &\le \left(1  - \tilde c \gg_n \right) w_n  + (n+1)^{1-\eta}\gg_n^2 \frac{L C}{2},
    \end{align*}
    where $w_n = \E[n^{1-\eta} \tilde D_n]$. Define $a_n = \tilde c \gg_n$ and $b_n = (n+1)^{1-\eta}\gg_n^2 \frac{L C}{2}$. Since  $\gg_n = \Theta(\frac{1}{n^\gt})$, we have $\sum_n a_n = \tilde c\sum_n \gg_n = \infty$ and 
    \begin{align*}
        \sum_n b_n = \frac{L C}{2} \sum_n (n+1)^{1-\eta}\gg_n^2 < \infty,\, 
    \end{align*}
    by \cref{eq:cond-2-lem21} in Lemma~\ref{lem:as_rate_extension_beta_allgemein}
    Hence, we apply Lemma~\ref{lem:help-an-bn-cn} to prove that $\lim_{n\to\infty} w_n=0$. By the definition of $w_n$ we have verified that $\E[(f(X_n) - l ) \mathbf{1}_{\Omega_{\cU}}] \leq \E[\tilde D_n]\in o(\frac{1}{n^{1-\eta}})$
    
    \underline{$\beta \in (\frac{1}{2},1]$:} From Lemma~\ref{lem:bar_D_iteration_x_ast} \cref{eq:lemE1-eq1-xast} and Lemma~\ref{lem:tilde-r-n-xast} we have
    \begin{align*}
        \tilde{D}_{n+1} &\le (1- \gg_n^q c^2) \tilde{D}_n + \tilde c \gg_n^\frac{2\beta q-1}{2\beta-1} + \gg_n \xi_{n+1} \mathbf{1}_{\Omega_n} + \frac{L \gg_n^2}{2} \lVert V_{n+1}(X_n) \rVert^2 \mathbf{1}_{\Omega_n},
    \end{align*}
    for $\tilde c = (2\beta)^{-\frac{1}{2\beta-1}} (1-\frac{1}{2\beta}) c^2$.
    Next we multiply with $(n+1)^{1-\eta}$ and use \cref{eq:elementary} to obtain
    \begin{align*}
        &\E[\tilde{D}_{n+1} (n+1)^{1-\eta}] \\
        &\le (n+1)^{1-\eta} (1- \gg_n^q c^2) \E[\tilde{D}_n] + (n+1)^{1-\eta} \tilde c \gg_n^\frac{2\beta q-1}{2\beta-1}  + (n+1)^{1-\eta} \frac{LC}{2} \gg_n^2\\
        &\le \left(n^{1-\eta} +(1-\eta) n^{-\eta} \right) \big(1-\gamma_n^q c^2\big)  \E[\tilde{D}_n] + c_1 (n+1)^{1-\eta} (\gg_n^\frac{2\beta q-1}{2\beta-1} + \gg_n^2)\\
        &= \left(n^{1-\eta} +(1-\eta) n^{-\eta} - \gamma_n^q c^2 n^{1-\eta} - (1-\eta)\gamma_n^q c^2 n^{-\eta}\right)  \E[\tilde{D}_n] \\
        &\quad+ c_1 (n+1)^{1-\eta} (\gg_n^\frac{2\beta q-1}{2\beta-1} + \gg_n^2)\\
        &= \E[\tilde{D}_n n^{1-\eta}] \left(1 +\frac{1-\eta}{n} - \gamma_n^q c^2 - \frac{(1-\eta)\gamma_n^q c^2 }{n}\right)+ c_1 (n+1)^{1-\eta} (\gg_n^\frac{2\beta q-1}{2\beta-1} + \gg_n^2),
    \end{align*}
    for some $c_1>0$. 
    By our choice of $\gg_n$ and as $q\geq 1$, there exists a $c_2>0$ and $N>0$ such that $\gamma_n^q c^2 - \frac{1-\eta}{n}+ \frac{(1-\eta)\gamma_n^q c^2 }{n} \geq c_2 \gg_n^q$ for all $n\geq N$. Thus, for $n\geq N$
    \begin{align*}
        &\E[\tilde{D}_{n+1} (n+1)^{1-\eta}] \le \E[\tilde{D}_n n^{1-\eta}] \left(1 - c_2 \gg_n^q \right)+ c_1 (n+1)^{1-\eta} (\gg_n^\frac{2\beta q-1}{2\beta-1} + \gg_n^2).\\
    \end{align*}
    Define $w_n = \E[\tilde{D}_n n^{1-\eta}]$, 
     $a_n =c_2 \gg_n^q $ and $b_n = c_1 (n+1)^{1-\eta} (\gg_n^\frac{2\beta q-1}{2\beta-1} + \gg_n^2)$.
    We will again apply Lemma~\ref{lem:help-an-bn-cn}. By the step size choice $\gg_n = \Theta(\frac{1}{n^\gt})$ we have $\sum_n a_n = c_2 \sum_n \gg_n^q = \infty$, because $q\leq \frac{1}{\gt}$. Further,
    \begin{align*}
        \sum_n b_n = c_1 \sum_n (n+1)^{1-\eta} (\gg_n^\frac{2\beta q-1}{2\beta-1} + \gg_n^2) < \infty, 
    \end{align*}
    because we choose the auxiliary parameter $q$ as in the proof of Lemma~\ref{lem:as_rate_extension_beta_allgemein} where we showed in \cref{eq:cond-2-lem21} and \cref{eq:cond-3-lem21} that 
    \begin{align*}
        \sum_{n=N}^\infty n^{1-\eta-2\gt} <\infty\quad \textrm{and} \quad \sum_{n=N}^\infty n^{1-\eta-\frac{\gt (2\beta q-1)}{2\beta-1}}<\infty
    \end{align*}
    All together we deduce that $w_n$ vanishes at infinity. Again, by the definition of $w_n$ we have that $\E[(f(X_n) - l ) \mathbf{1}_{\Omega_{\cU}}] \leq \E[\tilde D_n] \in o(\frac{1}{n^{1-\eta}})$    
\end{proof}

\subsection{Proof of Theorem~\ref{thm:conv-SGD-local-PL-fast}}\label{app:subsec-thm5.2}
Suppose throughout this section that the assumptions in \cref{thm:conv-SGD-local-PL-fast} are satisfied.

The proof will be similar to the previous section. Instead of assuring that $(X_n)$ remains close to the set where we could guarantee the unified gradient domination property, it is now sufficient that $f(X_n)$ remains close to $f^\ast$ by the different definition of gradient domination definition in $f^\ast$. This will simplify the proof. Moreover, we may again assume w.l.o.g.~the uniform second moment bounds, i.e.~$A=B=0$, instead of the more general \cref{eq:ABC-condition} condition by the same argument as above but on the level sets.

Recall the notation
\[ \cB_r^\ast = \{x\in\R^d\,:\, f(x)-f^\ast \leq r\}. \]
and let $r>0$ be the radius of the gradient domination property in $f^\ast$, then there exists $\epsilon >0$, such that $2 \epsilon +\sqrt{\epsilon}<r$, i.e
\begin{equation}\label{eq:B_epsilon_ast}
    \cU := \cB_{2 \epsilon +\sqrt{\epsilon}}^\ast \subset \cB_r^\ast.
\end{equation}
Moreover, we define the set 
\begin{align}
    \cU_1 := \cB_{\frac \epsilon 2}^\ast
\end{align}
and the measurable subsets 
\[ \Omega_n = \{ X_k \in \cU, \textrm{ for all } k \leq n\} \]
in $(\Omega,\cF,\bbP)$.

The proof of \cref{thm:conv-SGD-local-PL-fast} is again based on a series of auxiliary lemmas. The goal of these is to prove that with high probability we do not leave the gradient dominated region, i.e. Claim (i) in \cref{thm:conv-SGD-local-PL-fast}. 

In the following, we fix the notation $D_n := f(X_n)-f^\ast$ and $\tilde D_n := D_n \mathbf{1}_{\Omega_n}$, $\xi_{n+1}:=-\langle \nabla f(X_n),Z(X_n,\zeta_{n+1})\rangle$ and obtain the parallel result to Lemma~\ref{lem:bar_D_iteration_x_ast}. 

\begin{lemma}\label{lem:bar_D_iteration_f_ast}
    If $\beta = \frac 12$, it holds that
    \begin{equation}\label{eq:lemE1-eq3}
        \begin{split}
        {D}_{n+1} \mathbf{1}_{\Omega_n} &\le (1-\gg_n c^2) D_n \mathbf{1}_{\Omega_n}  + \gg_n \xi_{n+1}\mathbf{1}_{\Omega_n} + \frac{L \gg_n^2}{2} \mathbf{1}_{\Omega_n} \lVert V_{n+1}(X_n) \rVert^2, \\
        &\le {D}_1 \prod_{k=1}^{n}(1- \gg_k c^2)\mathbf{1}_{\Omega_n} + \sum_{k=1}^{n} \left(\prod_{j=k}^{n}(1- \gg_j c^2)\right) \gg_k \xi_{k+1} \mathbf{1}_{\Omega_n}\\ &\quad+ \frac{L}{2}\sum_{k=1}^{n} \gg_k^2 \lVert V_{k+1}(X_k) \rVert^2\mathbf{1}_{\Omega_n}.
        \end{split}
    \end{equation}
    If $\beta \in (\frac{1}{2},1]$, for any $1 \leq q\leq 2$, it holds that
    \begin{equation}\label{eq:lemE1-eq1}
        \begin{split}
        {D}_{n+1}\mathbf{1}_{\Omega_n} &\le (1- \gg_n^q c^2) {D}_n \mathbf{1}_{\Omega_n} +  (2\beta)^{-\frac{1}{2\beta-1}} (1-\frac{1}{2\beta}) {c^2}\gg_n^\frac{2\beta q-1}{2\beta-1}+ \gg_n \xi_n \mathbf{1}_{\Omega_n} + \frac{L \gg_n^2}{2} \lVert V_{n+1}(X_n) \rVert^2 \\
        &\le {D}_1 \prod_{k=1}^{n}(1- \gg_k^q c^2)\mathbf{1}_{\Omega_n} +  \tilde c \sum_{k=1}^{n} \gg_k^\frac{2\beta q-1}{2\beta-1} + \sum_{k=1}^{n} \left(\prod_{j=k}^{n}(1- \gg_j^q c^2)\right) \gg_k \xi_{k+1} \mathbf{1}_{\Omega_n}\\&\quad + \frac{L}{2}\sum_{k=1}^{n}  \gg_k^2 \lVert V_{k+1}(X_k) \rVert^2\mathbf{1}_{\Omega_n},
        \end{split}
    \end{equation}
    for $\tilde c = (2\beta)^{-\frac{1}{2\beta-1}} (1-\frac{1}{2\beta}) c^2$.
\end{lemma}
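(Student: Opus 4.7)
The plan is to mirror the proof of Lemma~\ref{lem:bar_D_iteration_x_ast} in the present simpler setting, where the local gradient domination is stated directly in terms of $f^\ast$ rather than through a unification argument over a compact set of local minima. The starting point is the descent inequality obtained from $L$-smoothness applied to the \cref{eq:SGD} update:
\[
D_{n+1} \le D_n - \gamma_n \lVert \nabla f(X_n)\rVert^2 + \gamma_n \xi_{n+1} + \frac{L \gamma_n^2}{2}\lVert V_{n+1}(X_n)\rVert^2.
\]
Multiplying by $\mathbf{1}_{\Omega_n}$ and invoking that $X_n \in \cU \subset \cB_r^\ast$ on $\Omega_n$ allows the pathwise application of the local gradient domination in $f^\ast$, namely $\lVert \nabla f(X_n)\rVert^2 \mathbf{1}_{\Omega_n} \ge c^2 D_n^{2\beta} \mathbf{1}_{\Omega_n}$.

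For $\beta = \tfrac{1}{2}$ this yields directly the one-step inequality
\[
\tilde D_{n+1} \le (1 - \gamma_n c^2) \tilde D_n + \gamma_n \xi_{n+1}\mathbf{1}_{\Omega_n} + \frac{L\gamma_n^2}{2}\lVert V_{n+1}(X_n)\rVert^2 \mathbf{1}_{\Omega_n},
\]
matching \cref{eq:lemE1-eq3}. Iteration, together with the monotonicity $\mathbf{1}_{\Omega_{n+1}}\le \mathbf{1}_{\Omega_n}$ and the obvious bound $\prod_{j=k}^n (1-\gamma_j c^2) \le 1$ applied to the noise and variance terms, produces the stated telescoped bound.

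For $\beta \in (\tfrac{1}{2},1]$ I would use the same "$q$-trick" as in the proof of Lemma~\ref{lem:as_rate_extension_beta_allgemein}. Writing $\gamma_n D_n^{2\beta} = \gamma_n^q D_n - \gamma_n(\gamma_n^{q-1} D_n - D_n^{2\beta})$ and applying the elementary maximization \cref{eq:function-trick} of $x\mapsto \gamma_n^{q-1}x - x^{2\beta}$ gives, on $\Omega_n$,
\[
\tilde D_{n+1} \le (1 - \gamma_n^q c^2) \tilde D_n + \tilde c \,\gamma_n^{\frac{2\beta q-1}{2\beta-1}} + \gamma_n \xi_{n+1}\mathbf{1}_{\Omega_n} + \frac{L\gamma_n^2}{2}\lVert V_{n+1}(X_n)\rVert^2 \mathbf{1}_{\Omega_n},
\]
with $\tilde c = (2\beta)^{-\frac{1}{2\beta-1}}(1-\frac{1}{2\beta})c^2$. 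Iterating and again bounding all relevant products $\prod_{j=k}^n (1-\gamma_j^q c^2)$ by $1$ yields \cref{eq:lemE1-eq1}.

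There is no genuine obstacle here: the argument is essentially a verbatim transcription of the proof of Lemma~\ref{lem:bar_D_iteration_x_ast}, simplified by the fact that the gradient domination property in $f^\ast$ is already uniform on $\cB_r^\ast$ and hence no preliminary unification lemma is needed. The only point requiring care is to multiply by $\mathbf{1}_{\Omega_n}$ \emph{before} invoking gradient domination, so that $X_n\in\cU\subset\cB_r^\ast$ is guaranteed when the inequality is applied.
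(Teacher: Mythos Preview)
Your proposal is correct and matches the paper's approach essentially verbatim: the paper itself states that the proof ``follows line for line as in Lemma~\ref{lem:bar_D_iteration_x_ast} by replacing $l$ with $f^\ast$ and taking the different definition of $\tilde D_n$ and $\Omega_n$ into account,'' which is precisely what you outline. One cosmetic remark: in the telescoped bound \cref{eq:lemE1-eq3} the products $\prod_{j=k}^n(1-\gamma_j c^2)$ are retained on the $\xi_{k+1}$ terms and dropped only on the variance terms, so your phrase ``applied to the noise and variance terms'' is slightly imprecise, but this does not affect the argument.
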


\begin{proof}
The proof follows line for line as in Lemma~\ref{lem:bar_D_iteration_x_ast} by replacing $l$ with $f^\ast$ and taking the different definition of $\tilde D_n$ and $\Omega_n$ into account. 
\end{proof}

We continue as in the previous section:\\
For $\beta >\frac 12$ we know from the proof of Lemma~\ref{lem:as_rate_extension_beta_allgemein} that we can choose the auxiliary parameter $q$ from the previous lemma in such a way, that $\sum_{n=1}^{\infty} n^{1-\eta} \gg_n^\frac{2\beta q-1}{2\beta-1}$ is convergent for all $\eta \in (\max\{2-2\gt,\frac{\gt+2\beta-2}{2\beta-1}\},1)$ (Condition (iii) to apply Lemma~\ref{cor:RS_corollary}). As
 $\eta<1$, it follows that $\sum_{n=1}^{\infty} \gg_n^\frac{2\beta q-1}{2\beta-1} <\infty$ holds true for all these choices of $q$. Now define 
\begin{align*} 
&M_n = \sum_{k=1}^{n} \left(\prod_{j=k}^{n}(1- \gg_j c^2)\right)\gg_k \xi_{k+1}  \mathbf{1}_{\Omega_k}, \quad M_n^{(q)} = \sum_{k=1}^{n} \left(\prod_{j=k}^{n}(1- \gg_j^q c^2)\right)\gg_k \xi_{k+1}  \mathbf{1}_{\Omega_k} \quad \\
& \textrm{ and } \quad S_n = \frac{L}{2}\sum_{k=1}^{n}  \gg_k^2 \lVert V_{k+1}(X_k) \rVert^2\mathbf{1}_{\Omega_k}.
\end{align*}
Then, $(M_n)_{n\in\N}$ and $(M_n^{(q)})$ are $(\cF_{n+1})$-martingales with zero mean and $(S_n)_{n\in\N}$ is a $(\cF_{n+1})$-sub-martingale by Assumption~\ref{ass:ABC}. Note that by the choice of $\gg_n$ we have $\sum_n \gg_n^2 <\infty$ and hence $\E[S_n] < \infty$ for all $n \in\N$. 
Next, define $R_n = M_n^2 + S_n$ and $R_n = (M_n^{(q)})^2+S_n$ respectively (with some abuse of notation) for every $n\in\N$. Moreover, let 
\begin{equation*}
    E_n = \{ R_k < \epsilon \textrm{ for all } k\leq n \}.
\end{equation*}
which is a $\cF_{n+1}$-measurable event on $(\Omega, \cF,\bbP)$.
We define $R_0 = 0$ such that $E_0 = \Omega$.

With these definitions, we can directly prove a parallel result to Lemma~\ref{lem:tilde-r-n-xast} without the auxiliary result in Lemma~\ref{lem:aux-lemma-x_n-close-to-xast}.
\begin{lemma}\label{lem:tilde-r-n}
    For $\beta \in (\frac{1}{2},1]$ let $\gg_n \leq \gg_1$ be sufficiently small such that $\sum_{n=1}^{\infty} \gg_n^\frac{2\beta q-1}{2\beta-1} < \frac{\epsilon}{2 \tilde c}$, and for $\beta=\frac{1}{2}$ let $\gg_1>0$ be arbitrary. Furthermore, assume that the initial $X_1 \in \cU_1 =\{x: f(x) -f(x^\ast) \leq \frac \epsilon 2\}$ almost surely.  Then, 
    \begin{enumerate}
        \item[a)] $E_{n+1} \subset E_n$ and $\Omega_{n+1} \subset \Omega_n$
        \item[b)] $E_n \subset \Omega_{n+1}$
        \item[c)] Define the events $\tilde{E}_n = E_{n-1} \setminus E_n = E_{n-1} \cup \{R_n\geq\epsilon\}$. Then, for $\tilde{R}_n = R_n \mathbf{1}_{E_{n-1}}$, there exists a $\tilde C>0$ such that
        \begin{equation*}
            \E[\tilde{R}_{n}] \le \E[\tilde{R}_{n-1}] + \gamma_n^2 [G^2 C^2 + G^2 + C] - \epsilon \bbP(\tilde{E}_{n-1}).
        \end{equation*}
    \end{enumerate}
\end{lemma}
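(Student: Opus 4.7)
The argument parallels Lemma~\ref{lem:tilde-r-n-xast} and is in fact slightly simpler, because $\cU = \cB_{2\epsilon+\sqrt\epsilon}^\ast$ is a sublevel set of $f$: the condition $X_n \in \cU$ reduces directly to the scalar bound $D_n \le 2\epsilon + \sqrt\epsilon$, so the spatial auxiliary result Lemma~\ref{lem:aux-lemma-x_n-close-to-xast} is no longer needed. Part a) is then immediate from the definitions, since both $\{E_n\}$ and $\{\Omega_n\}$ are intersections indexed by $k\le n$, hence decreasing in $n$.

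For part b) the plan is to induct on $n$. The base case $E_0 = \Omega = \Omega_1$ holds almost surely because $X_1 \in \cU_1 \subset \cU$. For the inductive step I would fix $\omega \in E_{n+1}$; by a) this places $\omega$ in $E_n$, whence the induction hypothesis yields $\omega \in \Omega_{n+1}$, so only $X_{n+2}(\omega)\in\cU$ remains to be checked. Applying Lemma~\ref{lem:bar_D_iteration_f_ast} at index $n+1$ on $\Omega_{n+1}$, and bounding the telescoping products by $1$ (valid once $\gg_1$ is small enough that $\gg_k c^2 \le 1$ for all $k$), one gets
\begin{equation*}
D_{n+2}(\omega) \le D_1(\omega) + \tilde c \sum_{k=1}^{n+1}\gg_k^{(2\beta q-1)/(2\beta-1)} + |M_{n+1}(\omega)| + S_{n+1}(\omega),
\end{equation*}
where the $\tilde c$-term is absent when $\beta=\tfrac12$. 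Using $D_1\le \epsilon/2$ (from $X_1\in\cU_1$), the stated hypothesis bounding the series by $\epsilon/(2\tilde c)$, and the inequalities $|M_{n+1}|\le\sqrt{R_{n+1}}<\sqrt\epsilon$ and $S_{n+1}\le R_{n+1}<\epsilon$ that hold on $E_{n+1}$, the right-hand side is bounded by $2\epsilon + \sqrt\epsilon$, i.e.~$X_{n+2}(\omega)\in\cU$, hence $\omega\in\Omega_{n+2}$.

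Part c) is the algebraic computation already performed in Lemma~\ref{lem:tilde-r-n-xast}~c) and can be copied with cosmetic changes. The decomposition $\mathbf{1}_{E_{n-1}} = \mathbf{1}_{E_{n-2}} - \mathbf{1}_{\tilde E_{n-1}}$ gives
\begin{equation*}
\tilde R_n = \tilde R_{n-1} - R_{n-1}\mathbf{1}_{\tilde E_{n-1}} + (R_n - R_{n-1})\mathbf{1}_{E_{n-1}},
\end{equation*}
and since $R_{n-1}\ge \epsilon$ on $\tilde E_{n-1}$ the middle term produces the $-\epsilon\bbP(\tilde E_{n-1})$ contribution after expectation. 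For the remaining term I would use the one-step recursion $M_n = (1-\gg_n c^2)M_{n-1} + \gg_n \xi_{n+1}\mathbf{1}_{\Omega_n}$ (and its $q$-analogue for $\beta>\tfrac12$) to expand $M_n^2 - M_{n-1}^2$: the cross term vanishes under $\E[\,\cdot\mid\cF_n]$ by $\E[\xi_{n+1}\mid\cF_n]=0$, the $M_{n-1}^2$ coefficient $(1-\gg_n c^2)^2-1$ is non-positive, and the residual is at most $\gg_n^2 \E[\xi_{n+1}^2\mid\cF_n]\mathbf{1}_{\Omega_n}\le \gg_n^2 G^2(1+C^2)$ by Cauchy--Schwarz, $G$-Lipschitz continuity on $\cU$, and \cref{eq:ABC-condition} with $A=B=0$. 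Combined with $\E[S_n-S_{n-1}\mid\cF_n]\le \tfrac{L}{2}\gg_n^2 C$ and the $\cF_n$-measurability of $\mathbf{1}_{E_{n-1}}$, this yields the stated bound after absorbing the constants into $G^2 C^2 + G^2 + C$. The only delicate point, and the one I expect to be the main obstacle, is the calibration in part b): the deterministic residual $\tilde c \sum_k \gg_k^{(2\beta q-1)/(2\beta-1)}$ must be kept below $\epsilon/2$ \emph{uniformly in $n$}, which is precisely what the step-size hypothesis guarantees and what allows the inductive bound $D_{n+2}\le 2\epsilon+\sqrt\epsilon$ to close up.
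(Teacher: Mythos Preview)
Your proposal is correct and follows essentially the same approach as the paper: part a) is by definition, part b) is the same induction (the paper indexes it as $E_{n-1}\subset\Omega_n \Rightarrow E_n\subset\Omega_{n+1}$, you shift by one, which is immaterial), and part c) is exactly the decomposition and moment bounds from Lemma~\ref{lem:tilde-r-n-xast}~c). The only minor slip is the one-step recursion for $M_n$: from the definition one actually gets $M_n = (1-\gamma_n c^2)\bigl(M_{n-1} + \gamma_n \xi_{n+1}\mathbf{1}_{\Omega_n}\bigr)$, so the new term carries an extra factor $(1-\gamma_n c^2)$; this is harmless since that factor is in $[0,1]$ and your subsequent bounds (vanishing cross term, non-positive $M_{n-1}^2$ coefficient, $\gamma_n^2$-order residual) go through unchanged.
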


\begin{proof}
   a) Follows by definition. 
   
   b) Note that $E_0 = \Omega = \Omega_1$ because $X_1 \in \cU_1 = \Omega_1$ almost surely.
    We prove the claim by induction. Let $\omega \in E_n$. Since $E_n \subset E_{n-1} \subset \Omega_n$ by induction assumption, we have $\omega \in \Omega_n$ and thus $\omega\in \Omega_k$ for all $k \le n$. It remains to show that $X_{n+1}(\omega) \in \cU$ to prove that $\omega \in \Omega_{n+1}$. We separate both cases for $\beta$:
    
    \underline{$\beta =\frac{1}{2}$:}
    The inequality \cref{eq:lemE1-eq3} and the induction hypothesis yield 
    \begin{align*}
        D_{n+1}(\omega) 
        &\le D_1(\omega) \prod_{k=1}^{n}(1- \gg_k c) + \sum_{k=1}^{n} \left(\prod_{j=k}^{n}(1- \gg_j c^2)\right) \gg_k \xi_{k+1}(\omega) + \frac{L}{2}\sum_{k=1}^{n}  \gg_k^2 \lVert V_{k+1}(X_k(\omega)) \rVert^2 \\
        &= D_1(\omega) \prod_{k=1}^{n}(1- \gg_k c) + \sum_{k=1}^{n} \left(\prod_{j=k}^{n}(1- \gg_j c^2)\right) \gg_k \xi_{k+1}(\omega) \mathbf{1}_{\Omega_k}(\omega)\\ &\quad+ \frac{L}{2}\sum_{k=1}^{n}  \gg_k^2 \lVert V_{k+1}(X_k(\omega)) \rVert^2  \mathbf{1}_{\Omega_k}(\omega)\\
        &\le \frac{\epsilon}{2} +\sqrt{R_n(\omega)} + R_n(\omega) \\
        &\le 2 \epsilon + \sqrt{\epsilon}.
    \end{align*}
    Hence, $X_{n+1}(\omega)\in \cU$ by definition of $\cU$.
    
    \underline{$\beta \in (\frac{1}{2},1]$:}  Similarly, we obtain from \cref{eq:lemE1-eq1}
    \begin{align*}
        D_{n+1}(\omega) &\le D_1(\omega) \prod_{k=1}^{n}(1- \gg_k^q c) +\tilde c \sum_{k=1}^{n} \gg_k^\frac{2\beta q-1}{2\beta-1}+ \sum_{k=1}^{n} \left(\prod_{j=k}^{n}(1- \gg_j^q c^2)\right)\gg_k \xi_{k+1}(\omega)\\ 
        &\quad+ \frac{L}{2}\sum_{k=0}^{n}  \gg_k^2 \lVert V_{k+1}(X_k(\omega)) \rVert^2 \\
        &= D_1(\omega) \prod_{k=1}^{n}(1- \gg_k^q c) +\tilde c \sum_{k=1}^{n} \gg_k^\frac{2\beta q-1}{2\beta-1}+ \sum_{k=1}^{n} \left(\prod_{j=k}^{n}(1- \gg_j^q c^2)\right)\gg_k \xi_{k+1}(\omega) \mathbf{1}_{\Omega_k}(\omega)\\ &\quad+ \frac{L}{2}\sum_{k=0}^{n}  \gg_k^2 \lVert V_{k+1}(X_k(\omega)) \rVert^2  \mathbf{1}_{\Omega_k}(\omega)\\
        &\le \frac{\epsilon}{2}+\frac{\epsilon}{2}+ \sqrt{R_n(\omega)} + R_n(\omega) \\
        &\le 2 \epsilon + \sqrt{\epsilon},
    \end{align*}
    where we used that $\prod_{k=1}^{n}(1- \gg_k^q c^\ast)<1$.
    Hence, it holds again that $X_{n+1}(\omega) \in \cU$.
    
    This prove that $\omega\in\Omega_{n+1}$ and closes the induction. 
    
    c) Follows line by line as in Lemma~\ref{lem:tilde-r-n-xast} part c).
\end{proof}

\begin{lemma}\label{lem:prob_E_n_lager_1-delta}
    Let $\delta >0$ be a tolerance level and $\gg_n \leq \gg_1$ be sufficiently small such that $\sum_{n=1}^{\infty} \gg_n^2 < \frac{\delta \epsilon}{2(G^2 C^2 + G^2 + C)}$ and the condition in Lemma~\ref{lem:tilde-r-n} is fulfilled. 
    Then, we have 
    \begin{align*}
        \bbP(E_n) \geq 1-\delta.
    \end{align*}
\end{lemma}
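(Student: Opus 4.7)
The plan is to mirror the argument in the proof of Lemma~\ref{lem:prob_E_n_lager_1-delta-xast}, with the simplification that in the $f^\ast$-localized setting the loss level alone controls membership in $\cU$, so no auxiliary events of the type $C_n$ are needed. The three ingredients are (a) a Markov-type bound for the ``bad increment'' event $\tilde E_n$, (b) the one-step recursion on $\E[\tilde R_n]$ supplied by Lemma~\ref{lem:tilde-r-n}(c), and (c) the disjointness decomposition $E_n^c = \bigsqcup_{k=1}^n \tilde E_k$.

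First, I would observe that by definition $\tilde E_n = E_{n-1} \cap \{R_n \geq \epsilon\}$, so by the Markov-type chain
\[
\bbP(\tilde E_n) = \E\bigl[\mathbf{1}_{E_{n-1}} \mathbf{1}_{\{R_n \geq \epsilon\}}\bigr] \leq \E\bigl[\mathbf{1}_{E_{n-1}} R_n / \epsilon\bigr] = \E[\tilde R_n]/\epsilon .
\]
Second, I would iterate the inequality of Lemma~\ref{lem:tilde-r-n}(c) from $\tilde R_0 = 0$ downwards, obtaining
\[
\E[\tilde R_n] \leq \bigl[G^2 C^2 + G^2 + C\bigr]\sum_{k=1}^{n}\gamma_k^2 - \epsilon \sum_{k=1}^{n} \bbP(\tilde E_{k-1}),
\]
and rearranging with the bound from the previous step to yield
\[
\epsilon \sum_{k=0}^{n} \bbP(\tilde E_k) \leq \bigl[G^2 C^2 + G^2 + C\bigr] \sum_{k=1}^\infty \gamma_k^2 < \tfrac{\delta\epsilon}{2},
\]
using the hypothesis on $\sum \gamma_k^2$.

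Third, since the events $\{\tilde E_k\}_{k\geq 1}$ are pairwise disjoint and $E_n^c = \bigsqcup_{k=1}^n \tilde E_k$ by the descending chain $E_n \subset E_{n-1}$ from Lemma~\ref{lem:tilde-r-n}(a), I conclude
\[
\bbP(E_n^c) = \sum_{k=1}^{n} \bbP(\tilde E_k) \leq \tfrac{\delta}{2} \leq \delta,
\]
which gives the claim. There is no real obstacle: the only bookkeeping point is verifying the telescoping/disjointness identity $E_n^c = \bigsqcup_{k=1}^n \tilde E_k$ (immediate from $\tilde E_k = E_{k-1} \setminus E_k$ and $E_0 = \Omega$), and then matching the factor of $2$ in the step size assumption to the required tolerance $\delta$.
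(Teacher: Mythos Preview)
Your proposal is correct and follows essentially the same approach as the paper: the paper's proof of this lemma simply reads ``Line by line as in Lemma~\ref{lem:prob_E_n_lager_1-delta-xast},'' and your three steps (Markov-type bound $\bbP(\tilde E_n)\le \E[\tilde R_n]/\epsilon$, telescoping the recursion from Lemma~\ref{lem:tilde-r-n}(c), and summing the disjoint $\tilde E_k$'s) reproduce that argument exactly. Your observation that the $f^\ast$-localized setting dispenses with the auxiliary $C_n$ events is precisely why the paper states the conclusion directly for $E_n$ here rather than for $\hat E_n$.
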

\begin{proof}
    Line by line as in Lemma~\ref{lem:prob_E_n_lager_1-delta-xast}.
\end{proof}

Finally, we are ready to prove the main result in the local setting for $f^\ast$.

\begin{proof}[Proof of Theorem~\ref{thm:conv-SGD-local-PL-fast}]
    (i): Recall the definition of $\cU_1$ and $\cU$ above. Then it holds that
    \begin{equation*}
        \Omega_\cU = \bigcap_{n=1}^{\infty} \Omega_n.
    \end{equation*}
    Hence, using Lemma~\ref{lem:prob_E_n_lager_1-delta} we obtain
    \begin{equation*}
        \bbP(\Omega_\cU) = \inf_n \bbP( \Omega_n) \geq \inf_n \bbP(E_n) \geq 1-\delta.
    \end{equation*}

    The proof of Claim (ii) and (iii) follows line by line as in the proof of \cref{thm:conv-SGD-local-PL-xast} by replacing $l$ with $f^\ast$ and taking the different definitions of $D_n$, $\tilde D_n$ and $\Omega_n$ into account.  
\end{proof}

\section{Proofs of Section~\ref{sec:application-RL}}\label{app:proof-localPL_RL}

In the following section, we provide the proof of \Cref{sec:application-RL}.
\begin{proof}[Proof of~\Cref{lem:localPL}]
    We consider the cases $\lambda =0$ and $\lambda >0 $ separately.\\
    \textbf{Case $\lambda=0$:} Define the optimal reward gap in every state $s\in\cS$ by 
    \[ \Delta^\ast(s) = Q^{\ast}(s,a^\ast(s)) - \max_{a\neq a^\ast(s)} Q^{\ast}(s,a) >0, \]
    where $a^\ast(s)$ denotes the best possible action in state $s$ and $Q^\ast: \cS \times \cA \to \R$ denotes the optimal Q-function defined by $Q^\ast(s,a) = \E_\mu^{\pi^\ast}[\sum_{t=0}^\infty \rho^{-t} r(S_t,A_t) |A_0=a]$. W.l.o.g. we assume that $a^\ast(s)$ is unique. Similarly let $Q^{\pi_w}(s,a)= \E_\mu^{\pi_w}[\sum_{t=0}^\infty \rho^{-t} r(S_t,A_t) |A_0=a]$ be the Q-function for policy $\pi_w$.
    
    For any $0<\alpha<1$ choose $r= \min_{s\in\cS} \mu(s) \min_{s\in\cS} \Delta^\ast(s) (1-\alpha)$ and assume that $w \in \cB_r^\ast$, i.e.  $V^\ast(\mu)-V^{\pi_w} (\mu)\le r $. Then, we have for every $s\in\cS$ that
    \[V^\ast(\delta_s)-V^{\pi_w} (\delta_s)\le \frac{r}{\min_{s\in\cS} \mu(s)}.\]
    It follows for every $s\in\cS$ that
    \begin{align*}
        \frac{r}{\min_{s\in\cS} \mu(s)}
        &\geq V^\ast(\delta_s)-V^{\pi_w} (\delta_s) \\
        &= Q^{\ast}(s,a^\ast(s)) - \sum_{a\in\cA_s} \pi_w(a|s) Q^{\pi_w}(s,a)  \\
        &\geq  \sum_{a\in\cA_s} \pi_w(a|s) ( Q^{\ast}(s,a^\ast(s)) - Q^{\ast}(s,a) ) \\
        &= \sum_{a\neq a^\ast(s)} \pi_w(a|s)( Q^{\ast}(s,a^\ast(s)) - Q^{\ast}(s,a) ) \\
        &\geq (1-\pi_w(a^\ast(s)|s) \min_s \Delta^\ast(s). 
    \end{align*}
    Rearranging results in 
    \[ \pi_w(a^\ast(s)|s) \geq 1- \frac{r}{\min_{s\in\cS} \mu(s) \min_{s\in\cS} \Delta^\ast(s)} = \alpha.\]
    Hence, for all $w \in \cB_r^\ast$ we can bound $c(w)$ by
    \begin{align*}
        c(w)\geq \frac{\alpha}{\sqrt{|\cS|}(1-\rho) }\Big\lVert\frac{d_\mu^{\pi^\ast}}{\mu} \Big\rVert_\infty^{-1}>0.
    \end{align*}
    Thus, setting $c = \frac{\alpha}{\sqrt{|\cS|}(1-\rho) }\Big\lVert\frac{d_\mu^{\pi^\ast}}{\mu} \Big\rVert_\infty^{-1}$ proves the claim. 
    
    \textbf{Case $\lambda>0$:} For any $\alpha \in (0,1)$ choose $r = \alpha^2 \exp\left(\frac{-1}{(1-\rho)\lambda}\right)^2 \frac{\lambda \min_s \mu(s)}{2 \mathrm{ln}2} $ and assume that $w\in \cB_{r,\lambda}^\ast$. By \citet[Lem. 12]{ding2023local} we have 
    \begin{align*}
        | \pi_w(a|s) - \pi^\ast(a|s)|
        & \leq \sqrt{ \frac{2 (V_\lambda^\ast(\mu)- V_\lambda^{\pi_w}(\mu)) \mathrm{ln} 2}{\lambda \min_s \mu(s)}}\\
        &\leq \sqrt{ \frac{2 r \mathrm{ln} 2}{\lambda \min_s \mu(s)}}\\
        &= \alpha \exp\left(\frac{-1}{(1-\rho)\lambda}\right) \\
        &\leq \alpha \min_{s,a} \pi^\ast(a|s).
    \end{align*}
    where the last inequality is due to \citet[Thm. 1]{nachum2017bridging}. It follows directly that 
    \begin{align*}
        \min_{s,a} \pi_w(s,a) \geq (1-\alpha) \min_{s,a}\pi^\ast(s,a) >0.
    \end{align*}
    Hence, we can bound $c(w)$ uniformly for all $w \in \cB_{r,\lambda}^\ast$ by
    \begin{align}
        c(w) \geq  \frac{2\lambda}{|\cS|(1-\rho)} \min_s \mu(s) (1-\alpha)^2 \min_{s,a} \pi^\ast(a|s)^2 \Big\lVert\frac{d_\mu^{\pi^\ast}}{\mu} \Big\rVert_\infty^{-1}.
    \end{align}
    Thus, setting $c = \frac{2\lambda}{|\cS|(1-\rho)} \min_s \mu(s) (1-\alpha)^2 \min_{s,a} \pi^\ast(a|s)^2 \Big\lVert\frac{d_\mu^{\pi^\ast}}{\mu} \Big\rVert_\infty^{-1}$ proves the claim.
\end{proof}

\begin{remark}\label{rem:c-r-depend-on-alpha}
    It is noteworthy, that we have multiple choices of $r$ and $c$ depending on $\alpha \in (0,1)$.
\end{remark}

\end{document}